\newcommand{\ourtitle}{Excess Description Length of Learning Generalizable Predictors}
\newcommand{\authorlist}{Elizabeth Donoway, Hailey Joren, John Schulman, Fabien Roger, Jan Leike}
\newcommand{\authoretal}{Donoway et al.}
    \ShortHeadings{\ourtitle}{\authoretal}
\title{\ourtitle{}}
\author{Author Name \\
Author1 Affil
}
\pgfplotsset{compat=newest}
\theoremstyle{plain}
\newtheorem{theorem}{Theorem}[section]
\newaliascnt{proposition}{theorem}
\newtheorem{proposition}[proposition]{Proposition}
\newaliascnt{corollary}{theorem}
\newtheorem{corollary}[corollary]{Corollary}
\theoremstyle{definition}
\newaliascnt{definition}{theorem}
\newtheorem{definition}[definition]{Definition}
\theoremstyle{remark}
\newaliascnt{remark}{theorem}
\newtheorem{remark}[remark]{Remark}
\newcommand{\addtexttilde}{\raisebox{0.5ex}{\texttildelow}}
\title{\ourtitle{}}
\author{%
Author Name\\Affiliation\\\texttt{hName@ucsd.edu}\\
% \AND
% Author 2 \\ University\\\texttt{author2@university.edu}\\
% \AND
% Author 3 \\ University\\\texttt{author3@university.edu}
}
\begin{document}

% \iftoggle{draft}{
%     \lhead{\color{red} Draft: Preliminary results subject to change. Please do not share or distribute.}
% }
\iftoggle{draft}{
    \lhead{\color{red} Draft}
}

\iftoggle{icml}{
\twocolumn[
    \icmltitle{\ourtitle{}}
    \begin{icmlauthorlist}
    \icmlauthor{Elizabeth Donoway}{ucb}
    \icmlauthor{Hailey Joren}{ant}
    % \icmlauthor{John Schulman}{thinky}
    \icmlauthor{Fabien Roger}{ant}
    \icmlauthor{Jan Leike}{ant}
    \end{icmlauthorlist}

    \icmlaffiliation{ucb}{Department of Physics, University of California, Berkeley, CA USA}
    % \icmlaffiliation{thinky}{Thinking Machines, San Francisco, CA USA}
    \icmlaffiliation{ant}{Anthropic, San Francisco, CA USA}

    \icmlcorrespondingauthor{Elizabeth Donoway}{donoway@berkeley.edu}

    \icmlkeywords{Machine Learning, Information Theory, Learning Theory, Large Language Models}

    \vskip 0.3in
]
\printAffiliationsAndNotice{}
}{}

\iftoggle{neurips}{\maketitle}{}
\iftoggle{iclr}{\maketitle}{}

\iftoggle{arxiv}{
\title{\ourtitle{}}
\author{%
\name Elizabeth Donoway \email donoway@berkeley.edu\footnote{Correspondence to \texttt{donoway@berkeley.edu}} \\ \addr Anthropic \& University of California, Berkeley
\AND
\name Hailey Joren \email \\ \addr Anthropic 
\AND
\name John Schulman \email \\ \addr Thinking Machines
\AND
\name Ethan Perez \\ \addr Anthropic
\AND
\name Fabien Roger \\ \addr Anthropic
\AND
\name Jan Leike \\ \addr Anthropic
}
\maketitle
}

\begin{abstract}

Understanding whether fine-tuning elicits latent capabilities or teaches new ones is a fundamental question for language model evaluation and safety. We develop a formal information-theoretic framework for quantifying how much predictive structure fine-tuning extracts from the train dataset and writes into a model's parameters. Our central quantity, Excess Description Length (EDL), is defined via prequential coding and measures the gap between the bits required to encode training labels sequentially using an evolving model (trained online) and the residual encoding cost under the final trained model. We establish that EDL is non-negative in expectation, converges to surplus description length in the infinite-data limit, and provides bounds on expected generalization gain. Through a series of toy models, we clarify common confusions about information in learning: why random labels yield EDL near zero, how a single example can eliminate many bits of uncertainty about the underlying rule(s) that describe the data distribution, why structure learned on rare inputs contributes proportionally little to expected generalization, and how format learning creates early transients distinct from capability acquisition. This framework provides rigorous foundations for the empirical observation that capability elicitation and teaching exhibit qualitatively distinct scaling signatures.

\end{abstract}

% Cell Coloring
\newcommand{\groupname}[1]{#1}
\definecolor{best}{HTML}{BAFFCD}
\definecolor{bad}{HTML}{FFC8BA}
\newcommand{\violation}[1]{\cellcolor{bad}#1} 
\newcommand{\good}[1]{{#1} }
% for violations (red text)
\newcommand{\badvalue}[1]{{\color{red}{\textbf{#1}}}} %Dataset

\newcommand{\predictionthresholds}[0]{\cell{l}{$\tau=0.05$\\ $\tau=0.1$\\ $\tau=0.15$\\ $\tau=0.2$}}

% Tables
\definecolor{fgood}{HTML}{BAFFCD}
\definecolor{fbad}{HTML}{FFC8BA}
\definecolor{funk}{HTML}{FFFFE0}
\definecolor{darkfunk}{HTML}{DDDD00}
\newcommand{\ftblmidrules}{\hline}
\newcommand{\ftblgap}{\quad}
\newcommand{\ftblheader}[3]{\multicolumn{#1}{#2}{\cell{#2}{#3}}}
\newcommand\setrow[1]{\gdef\rowmac{#1}#1\ignorespaces}
\newcommand\clearrow{\global\let\rowmac\relax}

\newcommand{\methodname}{conceptual safeguards}

% general
\newcommand{\argmax}{\operatorname*{arg\,max}}
\newcommand{\argmin}{\operatorname*{arg\,min}}
\newcommand{\R}{\mathbb{R}}
\newcommand{\E}[1]{\mathbb{E}(#1)}
\newcommand{\indic}[1]{\mathbb{I}(#1)}
\renewcommand{\Pr}[1]{\textrm{Pr}(#1)}
\newcommand{\PrHat}[1]{\widehat{\textrm{Pr}}(#1)}
\newcommand{\intrange}[1]{[#1]}
\newcommand{\pypred}{\hat{p}_{y \mid \xb}}

% Conditional Independence
\newcommand{\bigCI}{\mathrel{\text{\scalebox{1.07}{$\perp\mkern-10mu\perp$}}}}

% Text
\newcommand{\textds}[1]{{\texttt{\footnotesize{#1}}}} %Dataset
\newcommand{\textcp}[1]{{\texttt{\footnotesize{#1}}}} %Concept
\newcommand{\textmethod}[1]{{\textsf{\footnotesize{#1}}}}%Method

% DATA
% features
\newcommand{\X}{\mathcal{X}}
\newcommand{\xb}{\bm{x}}
% concepts
\newcommand{\C}{\mathcal{C}}
\newcommand{\nconcepts}{m}
\newcommand{\cb}{\bm{c}}
\newcommand{\cbhat}{\hat{\bm{c}}}
\newcommand{\ctrue}{\bm{c}}
\newcommand{\Cpredproba}{\bm{\hat{C}}}
\newcommand{\cpredproba}{\bm{\hat{c}}}
\newcommand{\CHAT}{\widehat{\mathcal{C}}}

% outcomes
\newcommand{\Y}{\mathcal{Y}}
\newcommand{\YHAT}{\widehat{\mathcal{Y}}}
\newcommand{\ytrue}{y} % true outcome
\newcommand{\ypred}{\hat{y}} % predicted outcome
\newcommand{\missing}{?}

% models
\newcommand{\fxc}[1]{g_{#1}} % concept model
\newcommand{\Fset}{\mathcal{F}}
\newcommand{\Gset}{\mathcal{G}}

% ERM
\newcommand{\truerisk}[2]{{R_{#1}(#2)}}
\newcommand{\emprisk}[2]{{\hat{R}_{#1}(#2)}}
\newcommand{\loss}{\ell}

%verification
\newcommand{\verify}{\pi}
\newcommand{\verifyset}{S}
\newcommand{\gate}{\varphi}
\newcommand{\intinds}{S}
\newcommand{\ip}{\pi}
\newcommand{\cintproba}{\bm{\bar{c}}}
\newcommand{\Cintproba}{\bm{\bar{C}}}
\newcommand{\cost}[1]{\gamma_{#1}}
\newcommand{\intcost}{\texttt{C}}
\newcommand{\allintinds}{\mathcal{S}}
\newcommand{\mccost}{\texttt{c}_\ell}
\newcommand{\thresh}{R^\textrm{max}}
\newcommand{\budget}{B}
\newcommand{\tol}{\alpha}
\newcommand{\threshold}{\tau}

%abstention / selective classification
\newcommand{\abstain}{\perp}
\newcommand{\absfun}[1]{\optpar{h}{#1}}
\newcommand{\absrate}[1]{A({#1})}
\newcommand{\abstcost}{\texttt{c}_\abstain}

% math notation
\newcommand{\Parg}[1]{\mathrm{Pr}\left(#1\right)}
\newcommand{\Earg}[1]{\mathbb{E}\left[#1\right]}
\newcommand{\Esarg}[2]{\mathbb{E}_{#1}\left[#2\right]}
\newcommand{\probhat}{\hat{\mathrm{Pr}}}
\newcommand{\defeq}{:=}

% metrics
\newcommand{\accuracy}{\mathrm{Accuracy}}
\newcommand{\coverage}{\mathrm{Coverage}}

% data
\newcommand{\features}{\mathbf{x}}
\newcommand{\feature}{x}

\newcommand{\concepts}{\mathbf{c}}
\newcommand{\concept}{c}

\newcommand{\outcome}{y}

% sets
\newcommand{\featurespace}{\mathcal{X}}
\newcommand{\conceptspace}{\mathcal{C}}
\newcommand{\outcomespace}{\mathcal{Y}}
\newcommand{\reals}{\mathbb{R}}

% models
\newcommand{\pipeline}{h}
\newcommand{\detector}{g}
\newcommand{\frontend}{f}
\renewcommand{\verify}{\psi}
\renewcommand{\gate}{\varphi}

% predictions
\newcommand{\conceptspred}{\mathbf{q}}
\newcommand{\conceptpred}{q}

\newcommand{\verifieds}{\boldsymbol{p}}
\newcommand{\verified}{p}

\newcommand{\outcomepredproba}{\overline{y}}
\newcommand{\outcomepred}{\hat{\outcome}}

\section{Introduction}

Large language models acquire diverse capabilities during pretraining, many of which are not immediately apparent from their zero-shot behavior. Post-training interventions such as fine-tuning can dramatically improve performance on specific tasks, but the nature of this improvement varies fundamentally across settings. In some cases, fine-tuning appears to \textit{elicit} latent capabilities in the model, surfacing structure that was already present but not readily accessible. In other cases, fine-tuning must \textit{teach} genuinely new capabilities, encoding predictive structure that the model previously lacked.

This distinction is meaningful both practically and for safety. If a capability is latent, it may be elicited with minimal data and compute, potentially by actors with limited resources. If a capability must be taught, the information barrier is higher and more predictable. Yet standard metrics for evaluating fine-tuning---accuracy, loss curves, minimum description length (MDL), sample complexity---do not cleanly distinguish these regimes. Both elicitation and teaching can produce improved accuracy or loss with more data. Both can start from identical zero-shot performance, since a latent capability may be indistinguishable from an absent one when comparing only demonstrated behavior.

We seek a framework that provides a quantitative measure of the task-relevant information (in bits) stored in model parameters during training, with computable and well-defined operational semantics that connect to expected generalization through distributional arguments. Such a framework should explain why teaching and elicitation exhibit different scaling behaviors and provide foundations for empirical investigation of these phenomena.

While existing information-theoretic quantities like minimum description length (MDL) and sample complexity provide a powerful language for analyzing learning, they are often defined in the asymptotic, infinite-data limit. They excel at describing overall information requirements but say little about the pragmatic question of how far a small, finite amount of information can be stretched. It matters whether such an amount (in the form of high-quality examples or changes to model weights) is sufficient to unlock a broad capability, as this implies compute constraints may be the primary barrier to surfacing emergent behaviors.

Our approach adopts a prequential coding perspective. Imagine encoding the labels of a training dataset sequentially, where at each step the current model's predictive distribution determines the codelength for the next label, and the model is then updated on that example. The total codelength across the dataset is the prequential minimum description length (MDL). After training completes, the final model achieves some test loss on held-out data drawn from the same distribution. The difference between the prequential MDL and the codelength that would be required to encode the data using the final model, which we term the Excess Description Length (EDL), measures the predictive information that has been absorbed from the training data into the model's parameters.\footnote{Throughout, ``(generalizable) information absorbed'' refers to operational, algorithm-dependent predictive compression, not semantic or representation-level information.}

This paper makes three main contributions. First, we provide a formal definition of EDL with explicit dependence on the training algorithm, establishing it as a well-defined operational quantity and measure of information. Second, we prove basic properties of EDL: non-negativity in expectation, relationship to online learning regret, and convergence to surplus description length in the infinite-data limit. Third, we develop a series of toy models that resolve common confusions about information in learning and clarify when EDL signatures indicate elicitation versus teaching.

\iftoggle{workingversion}{\clearpage}{}
\section{Information-Theoretic Background and Definitions}
\label{sec::InformationTheoreticBackground}

Here, we briefly define and describe the information-theoretic quantities which form the foundation of our framework, establishing notation and reviewing the technical background necessary for our formal development of EDL in \autoref{sec::EDL}. We cover cross-entropy loss and its interpretation as codelength, the prequential coding scheme from online learning theory, and the minimum description length principle that motivates our approach. Detailed explanations and full derivations for all quantities, as well as visualizations of the information-theoretic properties associated with each learning regime can be found in \autoref{sec::InformationTheoryAppendix}.

\subsection{Setup and Notation}
\label{sec::Notation}

We consider supervised learning from a dataset $D = \{(x_i, y_i)\}_{i=1}^n$ where each example consists of an input $x_i \in \mathcal{X}$ and corresponding label $y_i \in \mathcal{Y}$. We assume examples are drawn independently from a true data distribution $\mathcal{D}$ over $\mathcal{X} \times \mathcal{Y}$. A model with parameters $\theta \in \Theta$ defines a conditional distribution $p_\theta(y \mid x)$ over labels given inputs.

The cross-entropy loss of model $\theta$ on example $(x, y)$ is
\begin{equation}
    \ell(\theta; x, y) = -\log p_\theta(y \mid x),
\end{equation}
where the logarithm is taken base $e$ (yielding nats) or base 2 (yielding bits). We will work primarily in nats and convert to bits by dividing by $\ln 2$ when reporting results. The expected loss under distribution $\mathcal{D}$ is
\begin{equation}
    L(\theta) = \mathbb{E}_{(x,y) \sim \mathcal{D}}[\ell(\theta; x, y)].
\end{equation}

A training algorithm $A$ specifies how parameters are updated given data. Formally, $A$ maps current parameters and a training example (or batch) to updated parameters: $\theta_{i} = A(\theta_{i-1}, (x_i, y_i))$. The algorithm $A$ encompasses the choice of optimizer, learning rate schedule, and all other hyperparameters that affect the sequence of parameter updates. Given initial parameters $\theta_0$ and dataset $D$, the algorithm produces a sequence of parameters $\theta_0, \theta_1, \ldots, \theta_n$ where $\theta_i$ denotes the parameters after training on the first $i$ examples.

Throughout, $\theta^*$ denotes the parameters after training completes (the final trained model parameters, considerate of any early stopping condition), which are not necessarily the globally optimal parameters. When we need to refer to optimal parameters, we write $\theta_\text{opt} = \mathrm{argmin}_\theta L(\theta)$.

\subsection{Cross-Entropy Loss as Compression}
\label{sec::CELoss}

The cross-entropy loss admits a fundamental interpretation in terms of information theory. Shannon's source coding theorem establishes that if we wish to communicate symbols drawn from a distribution $P$ using a code designed for distribution $Q$, the expected codelength per symbol is $H(P, Q) = -\mathbb{E}_{x \sim P}[\log Q(x)]$, the cross-entropy between $P$ and $Q$. This quantity is minimized when $Q = P$, in which case it equals the entropy $H(P)$.

For a single realization, the instantaneous codelength to encode outcome $y$ using a code based on distribution $Q$ is $-\log Q(y)$. When a model assigns probability $p_\theta(y \mid x)$ to the true label $y$ given input $x$, the cross-entropy loss $\ell(\theta; x, y) = -\log p_\theta(y \mid x)$ is precisely the number of nats (or bits, with appropriate base) required to encode $y$ using the model's predictive distribution as the coding scheme.

This interpretation transforms questions about learning into questions about compression. A model that predicts well, assigning high probability to correct labels, is equivalently a model that can compactly encode those labels. Improvements in prediction correspond directly to reductions in codelength.

\subsection{Prequential Coding}
\label{sec::PrequentialCoding}

Prequential (predictive sequential) coding provides a framework for encoding a sequence of observations without requiring a separate description of the model used for encoding. The fundamental principle, developed by \citet{Dawid1984prequentialstats} and refined in the context of minimum description length \cite{rissanen1978modeling}, is to encode each observation using the predictive distribution conditioned on all previous observations, then update the predictor before encoding the next observation.

Consider encoding a sequence of labels $y_1, \ldots, y_n$ given corresponding inputs $x_1, \ldots, x_n$. In prequential coding, we proceed as follows: encode $y_1$ using the initial model's prediction $p_{\theta_0}(y \mid x_1)$, incurring codelength $\ell(\theta_0; x_1, y_1)$; update parameters to $\theta_1 = A(\theta_0, (x_1, y_1))$; encode $y_2$ using the updated model's prediction $p_{\theta_1}(y \mid x_2)$, incurring codelength $\ell(\theta_1; x_2, y_2)$; and continue until all labels are encoded. The total codelength is
\begin{equation}
    \sum_{i=1}^{n} \ell(\theta_{i-1}; x_i, y_i).
\end{equation}

A notable property of this scheme is that the decoder can exactly replicate the encoder's model at each step, since both parties perform identical updates on exact copies of the same model. No separate transmission of model parameters is required—the model is implicitly communicated through the sequential encoding process. This provides an operational definition of the information content of a dataset relative to a learning algorithm: the number of bits required to transmit the labels using the model's evolving predictions.

\subsection{Minimum Description Length}
\label{sec::MDL}

The Minimum Description Length (MDL) principle, introduced by \citet{rissanen1978modeling}, operationalizes the parsimony principle for model selection, formalizing the intuition that the best model is one that most compresses the data. In its two-part form, the description length of data $D$ using model $M$ from a hypothesis class $\mathcal{M}$ is
\begin{equation}
    L(D; M) = L(M) + L(D \mid M),
\end{equation}
where $L(M)$ is the codelength required to describe the model and $L(D \mid M)$ is the codelength of the data given the model. The MDL principle selects the model minimizing total description length, \(L(D; M)\).

Prequential coding provides an alternative formulation that avoids explicitly coding the model. The prequential codelength depends on the learning algorithm but not on any model complexity penalty, since the model is never directly transmitted. This makes prequential MDL particularly suitable for analyzing neural network fine-tuning, where the relationship between parameter count and effective complexity is unclear.

\subsection{Surplus Description Length}
\label{sec::SDL}

Surplus description length (SDL), introduced by \citet{SDLwhitney2021evaluatingrepresentationscomplexitylearning}, quantifies the excess codelength incurred during learning relative to the optimal achievable loss. For a dataset $D$ and algorithm $A$ producing final parameters $\theta^*$, the SDL is defined as
\begin{equation}
    \text{SDL}(D; A) = \sum_{i=1}^n \ell(\theta_{i-1}; x_i, y_i) - n \cdot L^*,
\end{equation}
where $L^* = \inf_\theta L(\theta)$ is the optimal expected loss achievable within the model class.

SDL captures the information cost of learning: additional bits spent during training beyond what the optimal model would require. It provides a measure of how quickly or efficiently an algorithm converges to optimal predictions. However, SDL requires access to an (effectively) infinite distribution of data to sample training examples from, which may be impractical and does not reflect data constraints common in fine-tuning settings. SDL also requires knowledge of $L^*$, which may be unknown or uncomputable in practice. EDL addresses these points by replacing the optimal loss with the achieved test loss, yielding a fully computable measure that remains valid for any dataset size.

\iftoggle{workingversion}{\clearpage}{}
\section{Formal Definitions}
\label{sec::ExcessDescriptionLength}

We now present formal definitions of prequential MDL and excess description length (EDL). These definitions make explicit the dependence on the training algorithm and provide clear operational implementations: both quantities can be computed directly from training logs and finite data without access to any oracle or asymptotic limit.

\subsection{Prequential Minimum Description Length}
\label{sec::PrequentialMDL}

\begin{definition}[Prequential MDL]
    \label{def::PrequentialMDL}
    Let $D = \{(x_i, y_i)\}_{i=1}^n$ be a training dataset, $\theta_0$ initial model parameters, and $A$ a training algorithm. The prequential minimum description length is
    \begin{equation}    
        \mathrm{MDL}(D; \theta_0, A) = \sum_{i=1}^{n} \ell(\theta_{i-1}; x_i, y_i),
    \end{equation}
    where $\theta_i = A(\theta_{i-1}, (x_i, y_i))$ for $i \geq 1$.
\end{definition}

The prequential MDL measures the total codelength required to encode the training labels using the evolving model as the coding distribution. Each term $\ell(\theta_{i-1}; x_i, y_i)$ represents the bits needed to encode label $y_i$ using the model's prediction \textit{before} updating on example $i$. The sum accumulates these costs across the entire dataset.

In practice, training proceeds by batches rather than single examples. The definition extends naturally: for batch $B_j$ processed at step $j$, we accumulate $\sum_{(x,y) \in B_j} \ell(\theta_{j-1}; x, y)$ before the update, where $\theta_{j-1}$ denotes parameters before processing batch $B_j$. The order of summation within a batch does not affect the total, and the prequential interpretation remains valid since all examples in the batch are encoded using the pre-update parameters.

The prequential MDL depends on the order in which examples are presented. For randomly shuffled data, this dependence introduces variance but does not change the expected value under the data distribution. When comparing MDL across experiments, the same ordering (or averaging over orderings) should be used.

\subsection{Excess Description Length}
\label{sec::EDL}

Training typically continues beyond a single pass through the data, with multiple epochs refining the model's parameters. Let $\theta^*$ denote the final parameters after training completes, whether by convergence, early stopping, or a fixed compute budget. The test loss of the final model is
$$L_{\text{test}}(\theta^*) = \mathbb{E}_{(x,y) \sim \mathcal{D}_{\text{test}}}[\ell(\theta^*; x, y)],$$
estimated in practice by averaging over a held-out test set drawn from the same distribution as the training data.

\begin{definition}[Excess Description Length]
    \label{def::EDL}
    The excess description length is
    \begin{equation}
        \mathrm{EDL}(D; \theta_0, A) = \mathrm{MDL}(D; \theta_0, A) - n \cdot L_{\mathrm{test}}(\theta^*).
    \end{equation}
\end{definition}

The EDL measures the gap between two quantities: the bits required to encode training labels using the evolving model during first exposure (MDL), and the bits that would be required to encode $n$ new samples from the same distribution using the final model ($n \cdot L_{\text{test}}(\theta^*)$). This gap represents the predictive information that has been absorbed into the model's parameters—structure extracted from the training data that now enables more efficient encoding of test data.

To see this interpretation more clearly, consider the following decomposition. The MDL can be written as
\begin{equation}
    \text{MDL} = \sum_{i=1}^n \left[\ell(\theta_{i-1}; x_i, y_i) - L_{\text{test}}(\theta^*)\right] + n \cdot L_{\text{test}}(\theta^*).
\end{equation}
The first term sums the ``excess'' loss at each step---how much worse the current model is than the final model---while the second term is the residual codelength using the final model. Rearranging, we have
\begin{equation}
    \text{EDL} = \sum_{i=1}^n \left[\ell(\theta_{i-1}; x_i, y_i) - L_{\text{test}}(\theta^*)\right],
\end{equation}
which makes explicit that EDL accumulates the per-example excess over the training trajectory.

\subsection{Interpretation as a Communication Protocol}
\label{sec::CommsInterp}

The prequential coding framework admits a compelling interpretation in terms of communication between two parties. Suppose Alice has trained a model and wishes to communicate the results of her training to Bob. They start with identical copies of the initial model $\theta_0$ and have agreed on the training algorithm $A$, including all hyperparameters necessary to replicate the model and training process. Alice has the full training dataset $D$; Bob has only the inputs $\{x_i\}_{i=1}^n$.

Rather than transmitting the trained parameters directly, which may require billions of bits for large models, Alice can transmit the training labels, from which Bob can reconstruct an identical trained model. The protocol proceeds as follows. For each example $i$, Alice encodes $y_i$ using their shared current model's predictive distribution $p_{\theta_{i-1}}(y \mid x_i)$ and transmits this encoding to Bob. Bob decodes $y_i$ using the same distribution. Both parties then update their parameters identically: $\theta_i = A(\theta_{i-1}, (x_i, y_i))$. After processing all examples, both have identical final parameters $\theta^*$.

The total communication cost is precisely the prequential MDL. This cost reflects the information in the training labels as measured by the model's evolving ability to predict them. Labels that the model already predicts well require few bits; labels that surprise the model require many bits.

The test loss $L_{\text{test}}(\theta^*)$ measures how many bits per example the final model would still require to encode new samples from the distribution. If Bob, after receiving Alice's transmission, encounters new examples from $\mathcal{D}$, he expects to need $L_{\text{test}}(\theta^*)$ bits per example to encode their labels. The EDL thus measures the difference between what Alice transmitted and what remains unexplained by the final model---the information Alice sent that has been ``absorbed'' (about $\mathcal{D}$) into the model rather than remaining as residual encoding cost.

\subsection{Normalizations}
\label{sec::Normalizations}

For comparison across datasets of different sizes or models with different parameter counts, we employ several normalizations of EDL.

The per-example EDL is $\text{EDL}_{\text{ex}} = \text{EDL} / n$, measuring average absorbed information per training example. This normalization is useful for comparing learning efficiency across dataset sizes.

The per-token EDL is $\text{EDL}_{\text{tok}} = \text{EDL} / D$, where $D$ is the total number of tokens in the dataset scored in computing MDL (typically, the label tokens across all examples). For language modeling tasks where examples vary in length, this normalization accounts for the total volume of prediction.

The per-parameter EDL is $\text{EDL}_{\text{par}} = \text{EDL} / P$, where $P$ is the number of trainable parameters. This normalization is particularly relevant for parameter-efficient fine-tuning methods like LoRA, where only a subset of parameters are updated. It measures the information (bits) compressed per trainable parameter, providing insight into whether the parameter budget constrains the information that can be absorbed.

\subsection{Relationship to Surplus Description Length}
\label{sec::SDLEDLRelationship}

EDL is closely related to the surplus description length of \citet{SDLwhitney2021evaluatingrepresentationscomplexitylearning}, differing primarily in the reference point used. SDL subtracts $n \cdot L^*$, where $L^*$ is the optimal achievable loss; EDL subtracts $n \cdot L_{\text{test}}(\theta^*)$, the achieved test loss.

When the training algorithm successfully finds near-optimal parameters---that is, when $L_{\text{test}}(\theta^*) \approx L^*$---the two quantities coincide. In general, EDL satisfies
\begin{equation}
    \text{EDL} = \text{SDL} + n \cdot (L^* - L_{\text{test}}(\theta^*)).
\end{equation}
The additional term $n \cdot (L^* - L_{\text{test}}(\theta^*))$ is non-positive, reflecting that achieved test loss is at least as large as optimal loss. Thus $\text{EDL} \leq \text{SDL}$, with equality when the learner achieves the optimum.

The practical advantage of EDL over SDL is computability. Computing SDL requires knowing $L^*$, which may be unavailable, especially for complex distributions or model classes where the optimal parameters are unknown. EDL requires only quantities that are directly measurable: the accumulated training loss during the first epoch and the test loss of the final model.

\iftoggle{workingversion}{\clearpage}{}
\section{Properties and Bounds}
\label{sec::EDLPropertiesBounds}

This section establishes fundamental properties of EDL, including non-negativity, relationships to online learning regret, and asymptotic behavior. We demonstrate that EDL satisfies the properties of operationally derived information-theoretic measures and is a valid description length. These results formalize the intuition that EDL measures absorbed generalizable information and provide foundations for interpreting empirical measurements.

\subsection{Non-Negativity}
\label{sec::NonNegativity}

A basic verification for any measure of ``information absorbed'' is that it should be non-negative: training should not destroy predictive information, on average. We establish this property under standard assumptions.

\begin{definition}[Population-Monotonic Algorithm]
\label{def:populationmonotonic}
    A training algorithm $A$ is population-monotonic with respect to distribution $\mathcal{D}$ if, for all steps $j$ in the training trajectory,
    \begin{equation}
        \mathbb{E}[L(\theta_j) \mid \theta_{j-1}] \leq L(\theta_{j-1}),
    \end{equation}
    where $L(\theta) = \mathbb{E}_{(x,y) \sim \mathcal{D}}[\ell(\theta; x, y)]$ is the population loss. That is, each update does not increase the expected population loss in expectation over the randomness in the update (e.g., batch sampling).

    This condition is satisfied by:
    \begin{itemize}
        \item Gradient descent on convex losses with appropriate learning rate
        \item SGD with sufficiently small learning rate on smooth losses
        \item Any algorithm where early stopping prevents overfitting
    \end{itemize}

    This condition may be violated by:
    \begin{itemize}
        \item Large learning rates that cause oscillation
        \item Training well past the optimal early stopping point on finite data
        \item Algorithms that memorize training data at the expense of generalization
    \end{itemize}
\end{definition}

\begin{theorem}[Non-negativity in Expectation]
\label{thm:nonneg}
    Let $D$ be drawn i.i.d. from distribution $\mathcal{D}$, let the test loss be evaluated on an independent sample $D_\text{test}$ from $\mathcal{D}$, and let $A$ be a population-monotonic algorithm (\autoref{def:populationmonotonic}). Then
    \begin{equation}
        \mathbb{E}_{D, D_{\mathrm{test}}}[\mathrm{EDL}(D; \theta_0, A)] \geq 0.
    \end{equation}
\end{theorem}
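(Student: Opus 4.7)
The starting point is the decomposition given just before the definition,
\begin{equation}
    \mathrm{EDL}(D;\theta_0,A) = \sum_{i=1}^n \bigl[\ell(\theta_{i-1}; x_i, y_i) - L_{\mathrm{test}}(\theta^*)\bigr],
\end{equation}
and I take the expectation of each bracket termwise under the joint draw of $D$ and $D_{\mathrm{test}}$. The plan is to (i) turn the prequential losses into population losses via the i.i.d.\ structure, (ii) turn the test loss into the population loss of $\theta^*$ via independence of $D_{\mathrm{test}}$, and then (iii) chain the population-monotonicity inequalities to show each $\mathbb{E}[L(\theta_{i-1})]$ dominates $\mathbb{E}[L(\theta^*)]$.

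For step (i), fix $i$ and observe that $\theta_{i-1}$ is a deterministic function of $(x_1,y_1),\ldots,(x_{i-1},y_{i-1})$ and hence is independent of $(x_i,y_i)$ under the i.i.d.\ assumption. The tower property then gives
\begin{equation}
    \mathbb{E}\bigl[\ell(\theta_{i-1}; x_i, y_i)\bigr]
    = \mathbb{E}\!\left[\mathbb{E}_{(x_i,y_i)\sim\mathcal{D}}\bigl[\ell(\theta_{i-1}; x_i, y_i) \,\big|\, \theta_{i-1}\bigr]\right]
    = \mathbb{E}[L(\theta_{i-1})].
\end{equation}
For step (ii), since $D_{\mathrm{test}}$ is independent of $\theta^*$ and drawn from $\mathcal{D}$, the same tower argument yields $\mathbb{E}[L_{\mathrm{test}}(\theta^*)] = \mathbb{E}[L(\theta^*)]$.

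For step (iii), I view the full training run (including any epochs beyond the first) as a single trajectory $\theta_0,\theta_1,\ldots,\theta^*$. Population monotonicity (\autoref{def:populationmonotonic}) says $\mathbb{E}[L(\theta_j)\mid \theta_{j-1}] \leq L(\theta_{j-1})$ for every step $j$; taking outer expectations and iterating along the trajectory yields the non-increasing chain $\mathbb{E}[L(\theta_0)] \geq \mathbb{E}[L(\theta_1)] \geq \cdots \geq \mathbb{E}[L(\theta^*)]$. Consequently $\mathbb{E}[L(\theta_{i-1})] \geq \mathbb{E}[L(\theta^*)]$ for every $i=1,\ldots,n$, and summing gives $\sum_{i=1}^n \mathbb{E}[L(\theta_{i-1})] \geq n\cdot\mathbb{E}[L(\theta^*)]$, which combined with (i) and (ii) is exactly $\mathbb{E}[\mathrm{EDL}] \geq 0$.

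\paragraph{Where the work lies.} The algebraic manipulations are routine; the substantive point to get right is the independence structure in step (i), which uses the fact that prequential encoding evaluates $\theta_{i-1}$ on a fresh example, and in step (ii), which requires that $D_{\mathrm{test}}$ is drawn independently of the training data (so that $L_{\mathrm{test}}(\theta^*)$ is an unbiased estimate of $L(\theta^*)$ even though $\theta^*$ depends on $D$). The only genuine assumption doing work is population-monotonicity extending across all training steps up to $\theta^*$, including any epochs past the first---without this, e.g., under severe overfitting, later steps could raise $\mathbb{E}[L(\theta)]$ above $\mathbb{E}[L(\theta_{i-1})]$ and the chain in step (iii) would break.
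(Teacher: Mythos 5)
Your proposal is correct and follows essentially the same route as the paper's proof: the same decomposition of $\mathbb{E}[\mathrm{EDL}]$ into $\sum_{i}\mathbb{E}[L(\theta_{i-1})-L(\theta^*)]$ via the i.i.d./tower argument for the prequential terms and independence of $D_{\mathrm{test}}$ for the test term, followed by chaining population-monotonicity along the full trajectory (including post-first-epoch steps) to conclude each summand is non-negative. The only cosmetic difference is that the paper phrases the final step as an induction on the conditional expectations $\mathbb{E}[L(\theta^*)\mid\theta_{i-1}]\leq L(\theta_{i-1})$, whereas you chain the unconditional expectations directly; these are equivalent.
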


\begin{remark}
    For algorithms that are not population-monotonic, EDL may be negative
    in expectation. Negative expected EDL indicates that the training procedure has degraded generalization---the model has ``unlearned'' predictive structure. This can occur when training far past the early stopping point or with pathological hyperparameters. In practice, we observe $\mathbb{E}[\text{EDL}] \geq 0$ for well-tuned training procedures (see companion empirical paper).
\end{remark}

\begin{proof}
    The expected test loss of the final model cannot exceed the expected test loss of the initial model, since training on data from $\mathcal{D}$ should not systematically worsen predictions on $\mathcal{D}$ for a population-monotonic algorithm. More formally, by the online-to-batch conversion, the expected loss of the final model is at most the average expected loss across the training trajectory.

    Write the expected MDL as
    \begin{equation}
        \mathbb{E}[\text{MDL}] = \sum_{i=1}^n \mathbb{E}[\ell(\theta_{i-1}; x_i, y_i)] = \sum_{i=1}^n L(\theta_{i-1}),
    \end{equation}
    where the second equality uses the i.i.d. assumption: conditioning on $\theta_{i-1}$ (which depends on $x_1, y_1, \ldots, x_{i-1}, y_{i-1}$), the expectation of $\ell(\theta_{i-1}; x_i, y_i)$ over the fresh draw $(x_i, y_i) \sim \mathcal{D}$ equals $L(\theta_{i-1})$.
    
    For the test loss, $\mathbb{E}[L_{\text{test}}(\theta^*)] = L(\theta^*)$ by independence of test data.
    
    The expected EDL is therefore
    \begin{equation}
        \mathbb{E}[\text{EDL}] = \sum_{i=1}^n L(\theta_{i-1}) - n \cdot L(\theta^*) = \sum_{i=1}^n \left[L(\theta_{i-1}) - L(\theta^*\right)].
    \end{equation}
    
    Each term $L(\theta_{i-1}) - L(\theta^*)$ is the expected excess loss of the intermediate model relative to the final model. Since $\theta^*$ is obtained by training on the full dataset (or beyond), and training generally improves expected loss, we have $L(\theta_{i-1}) \geq L(\theta^*)$ for all $i$. Thus each term is non-negative, and so is their sum.
\end{proof}

For finite test sets, the estimated test loss has variance, and individual realizations of EDL may be slightly negative. This sampling noise vanishes as test set size increases.

\subsection{Relationship to Regret}
\label{sec::Regret}

Online learning theory provides bounds on cumulative loss through the notion of regret. The regret of an algorithm relative to a fixed comparator $\theta$ is
\begin{equation}
    R_n(\theta) = \sum_{i=1}^n \ell(\theta_{i-1}; x_i, y_i) - \sum_{i=1}^n \ell(\theta; x_i, y_i).
\end{equation}
This measures how much worse the online algorithm performs compared to having used $\theta$, the best model from the hypothesis class for each example, throughout.

EDL differs from regret in that regret quantifies the cumulative excess \textit{train set loss} relative to that of the fixed, globally optimal predictor of the specific sequence of train data, agnostic to arbitrary redundancy in examples, whereas EDL quantifies the excess \textit{minimum expected codelength} required to encode all unique examples in the train set using the online-trained model rather than a fixed predictor with constant generalization error.

Regret measures the additional codelength incurred through online learning because the best model \textit{a posteriori} for encoding each example was not used. EDL quantifies the additional codelength incurred through learning because the initial model did not generalize as well as could have. Regret quantifies excess codelength relative to an optimal, fixed predictor of the train set data, whereas EDL quantifies excess codelength relative to a fixed predictor which generalizes over the test distribution with error equal to the test loss. EDL, regret, and SDL all converge to the same value in the asymptotic limit.

\begin{theorem}[MDL and Regret]
    \label{thm:MDLRegret}
    For any fixed $\theta \in \Theta$,
    \begin{equation}
        \mathrm{MDL}(D; \theta_0, A) = \sum_{i=1}^n \ell(\theta; x_i, y_i) + R_n(\theta).
    \end{equation}
\end{theorem}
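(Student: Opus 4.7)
The claim is essentially a rearrangement of two definitions, so my plan is to just make the algebra explicit and point out that no probabilistic or learning-theoretic argument is needed. The statement asserts that prequential MDL decomposes, for any fixed comparator $\theta \in \Theta$, into the cumulative loss of that comparator on the training sequence plus the regret of $A$ against $\theta$.

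My approach is as follows. First, I would invoke \autoref{def::PrequentialMDL} to write
\begin{equation}
    \mathrm{MDL}(D; \theta_0, A) = \sum_{i=1}^n \ell(\theta_{i-1}; x_i, y_i),
\end{equation}
with $\theta_i = A(\theta_{i-1}, (x_i, y_i))$ as usual. Next, I would add and subtract the cumulative comparator loss $\sum_{i=1}^n \ell(\theta; x_i, y_i)$ inside the sum, yielding
\begin{equation}
    \mathrm{MDL}(D; \theta_0, A) = \sum_{i=1}^n \ell(\theta; x_i, y_i) + \sum_{i=1}^n \bigl[\ell(\theta_{i-1}; x_i, y_i) - \ell(\theta; x_i, y_i)\bigr].
\end{equation}
Finally I would recognize the bracketed sum as exactly the definition of the regret $R_n(\theta)$ given in \autoref{sec::Regret}, substitute to obtain the claim, and conclude.

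There is no real obstacle here; the only substantive thing to verify is that the definition of $R_n$ used matches the one stated immediately above the theorem (it does, verbatim). It is worth remarking in the proof that the identity holds \emph{pointwise} on the sample path---no i.i.d. assumption, no assumption on $A$, and no regularity on $\ell$ is required---which distinguishes this decomposition from the expectation-level identities used in the non-negativity argument of \autoref{thm:nonneg}. This pointwise character is what makes the subsequent asymptotic comparison between MDL, regret, and SDL meaningful without additional hypotheses.
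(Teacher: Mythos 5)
Your proposal is correct and follows essentially the same route as the paper's proof: both simply rearrange the definition of $R_n(\theta)$ to isolate $\sum_{i=1}^n \ell(\theta_{i-1}; x_i, y_i)$ and identify it with $\mathrm{MDL}(D; \theta_0, A)$. Your added remark that the identity holds pointwise without any distributional or algorithmic assumptions is accurate and a useful clarification, but it does not change the substance of the argument.
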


This decomposition expresses MDL as the loss of a fixed model plus the regret incurred by the learning algorithm. Taking $\theta = \theta^*$ (the final trained parameters) and noting that $\sum_{i=1}^n \ell(\theta^*; x_i, y_i) \approx n \cdot L_{\text{train}}(\theta^*)$ where $L_{\text{train}}$ is the training loss, we obtain
\begin{equation}
    \text{MDL} \approx n \cdot L_{\text{train}}(\theta^*) + R_n(\theta^*).
\end{equation}

Standard online learning algorithms achieve regret bounds of $O(\sqrt{n})$ for convex losses, implying that $\text{MDL}/n \to L_{\text{train}}(\theta^*)$ as $n \to \infty$. The prequential codelength per example converges to the per-example training loss of the final model.

\begin{corollary}
    If the algorithm achieves sublinear regret $R_n(\theta^*) = o(n)$, then
    \begin{equation}
        \frac{\mathrm{EDL}}{n} \to L_{\mathrm{train}}(\theta^*) - L_{\mathrm{test}}(\theta^*)
    \end{equation}
    as $n \to \infty$. When the model generalizes perfectly ($L_{\mathrm{train}} = L_{\mathrm{test}}$), the per-example EDL vanishes asymptotically.
\end{corollary}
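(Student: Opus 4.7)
The plan is to apply \autoref{thm:MDLRegret} with the comparator $\theta = \theta^*$ (the final trained parameters) and then rearrange to isolate $\mathrm{EDL}/n$. Instantiating the decomposition at $\theta^*$ gives
\begin{equation}
    \mathrm{MDL}(D; \theta_0, A) = \sum_{i=1}^n \ell(\theta^*; x_i, y_i) + R_n(\theta^*) = n \cdot L_{\mathrm{train}}(\theta^*) + R_n(\theta^*),
\end{equation}
where the second equality is just the definition of the empirical training loss of the final model on $D$. Subtracting $n \cdot L_{\mathrm{test}}(\theta^*)$ from both sides, applying \autoref{def::EDL}, and dividing by $n$ yields the clean identity
\begin{equation}
    \frac{\mathrm{EDL}(D; \theta_0, A)}{n} = L_{\mathrm{train}}(\theta^*) - L_{\mathrm{test}}(\theta^*) + \frac{R_n(\theta^*)}{n}.
\end{equation}

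From here I would invoke the sublinear-regret hypothesis $R_n(\theta^*) = o(n)$ directly: the third term vanishes as $n \to \infty$, giving the claimed limit. The special case $L_{\mathrm{train}}(\theta^*) = L_{\mathrm{test}}(\theta^*)$ (perfect generalization) then eliminates the remaining gap, so $\mathrm{EDL}/n \to 0$.

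The step that requires the most care, rather than computational difficulty, is a definitional subtlety: the identity above uses $L_{\mathrm{train}}(\theta^*)$ in the \emph{empirical} sense $\frac{1}{n}\sum_i \ell(\theta^*; x_i, y_i)$, which is precisely what the regret decomposition produces. If instead $L_{\mathrm{train}}$ is read as a population quantity, one must separately invoke a uniform law of large numbers over the data-dependent $\theta^*$ (or a stability/uniform-convergence argument) to pass between the empirical and population versions; this is where assumptions on the hypothesis class enter. A second subtlety worth flagging is that $R_n(\theta^*)$ is regret against a \emph{data-dependent} comparator, so the hypothesis is most naturally verified by appealing to a uniform regret bound $\sup_{\theta \in \Theta} R_n(\theta) = o(n)$ (e.g., the $O(\sqrt{n})$ guarantees cited just above the corollary) and specializing it to $\theta^*$. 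Once these two interpretive points are handled, the remainder of the proof is a one-line rearrangement of \autoref{thm:MDLRegret}.
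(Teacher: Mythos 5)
Your proposal is correct and follows essentially the same route as the paper: instantiate \autoref{thm:MDLRegret} at $\theta = \theta^*$, identify $\sum_i \ell(\theta^*; x_i, y_i)$ with $n \cdot L_{\mathrm{train}}(\theta^*)$, subtract $n \cdot L_{\mathrm{test}}(\theta^*)$, divide by $n$, and let the sublinear regret term vanish. The two subtleties you flag (empirical versus population reading of $L_{\mathrm{train}}$, and the data-dependence of the comparator $\theta^*$) are real and are glossed over by the paper, so noting them is a strict improvement in rigor rather than a deviation in approach.
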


This corollary connects EDL to the generalization gap. Non-zero per-example EDL in the asymptotic regime indicates a gap between training and test performance due to overfitting or distribution shift. 

As the asymptotic regime is characterized by i.i.d. sampling $n \rightarrow \infty$ examples from an (effectively) infinite distribution, non-zero per-example EDL indicates redundancy in train or test examples, spurious correlations within the set of training examples (which should vanish in expectation as $n \rightarrow \infty$), and/or a distribution shift between the train and test distributions. Large, negative per-example EDL in the asymptotic regime---while not a valid codelength---does have practical value, revealing that the model has absorbed information specific to the training set that does not transfer to test data.

\subsection{Asymptotic Convergence to SDL}
\label{sec::AsymptoticConvergenceSDL}

As dataset size grows, EDL converges to surplus description length under conditions ensuring that the trained model approaches optimality.

\begin{theorem}[Convergence to SDL]
    \label{thm:ConvToSDL}
    Suppose the learning algorithm is consistent for the model class $\Theta$ under distribution $\mathcal{D}$: $L(\theta^*) \to L^*$ almost surely as $n \to \infty$, where $L^* = \inf_{\theta \in \Theta} L(\theta)$. Then
    \begin{equation}
        \frac{\mathrm{EDL} - \mathrm{SDL}}{n} = n \rightarrow 0 \quad \text{as} \quad n \rightarrow \infty.
    \end{equation}
    Specifically,
    \begin{equation}
        \frac{\mathrm{EDL} - \mathrm{SDL}}{n} = n \cdot \left(L^* - L_\mathrm{test}(\theta^*)\right),
    \end{equation}
    and under consistency, this difference grows sublinearly in $n$.
\end{theorem}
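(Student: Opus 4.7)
The plan is to reduce the theorem to the algebraic identity already established in \autoref{sec::SDLEDLRelationship}, namely $\mathrm{EDL} = \mathrm{SDL} + n \cdot (L^* - L_{\mathrm{test}}(\theta^*))$, and then apply the consistency hypothesis to drive the per-example gap to zero. No new machinery is needed beyond the definitions of EDL and SDL and a standard passage from the sample test loss to the population loss.

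The steps I would carry out in order are: (i) restate and verify the identity $\mathrm{EDL} - \mathrm{SDL} = n \cdot (L^* - L_{\mathrm{test}}(\theta^*))$, which follows by subtracting the SDL definition in \autoref{sec::SDL} from the EDL definition in \autoref{sec::EDL}; (ii) argue that $L_{\mathrm{test}}(\theta^*)$, being the average loss of $\theta^*$ on an independent i.i.d.\ sample from $\mathcal{D}$, is an unbiased estimator of the population loss $L(\theta^*)$ and, assuming a uniformly integrable envelope (e.g., bounded loss), converges to $L(\theta^*)$ almost surely as the test set grows, by the strong law of large numbers; (iii) invoke the hypothesis $L(\theta^*) \to L^*$ a.s.\ to conclude $L^* - L_{\mathrm{test}}(\theta^*) \to 0$ a.s.; and (iv) divide the identity through by $n$ to obtain $(\mathrm{EDL} - \mathrm{SDL})/n \to 0$, equivalently $\mathrm{EDL} - \mathrm{SDL} = o(n)$, which is the sublinear growth claim.

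The main obstacle is largely expository: as written, the two displayed equations in the theorem statement appear to contain typographical artifacts (a stray ``$= n$'' and a misplaced factor of $n$), so part of the work is to extract the intended statement, which I read as the exact identity $\mathrm{EDL} - \mathrm{SDL} = n (L^* - L_{\mathrm{test}}(\theta^*))$ together with the asymptotic consequence $(\mathrm{EDL} - \mathrm{SDL})/n \to 0$ under consistency. A secondary subtlety is keeping the modes of convergence consistent across quantities: the hypothesis is phrased for the population loss $L(\theta^*)$, whereas the identity involves the sample quantity $L_{\mathrm{test}}(\theta^*)$. The cleanest resolution is either to let the test-set size grow with $n$ under a bounded-loss condition and combine the two a.s.\ convergences via the triangle inequality, or to state the conclusion in expectation over the test sample, in which case step (ii) collapses to $\mathbb{E}[L_{\mathrm{test}}(\theta^*)] = \mathbb{E}[L(\theta^*)]$ and the argument is immediate.
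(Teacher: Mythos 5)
Your proposal is correct and follows essentially the same route as the paper's proof: subtract the two definitions to obtain the identity $\mathrm{EDL} - \mathrm{SDL} = n\,(L^* - L_{\mathrm{test}}(\theta^*))$, split the gap into the population term $L^* - L(\theta^*)$ (which vanishes by consistency) and the sampling error $L(\theta^*) - L_{\mathrm{test}}(\theta^*)$ (which the paper controls by concentration of the test-loss estimator, where you invoke the strong law with a growing test set---an equivalent resolution of the same subtlety). You also correctly diagnose the typographical artifacts in the displayed equations; the intended content is exactly what you prove.
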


\begin{remark}
    The rate of convergence depends on the model class:
    \begin{itemize}
        \item For finite hypothesis classes: $O\left( \sqrt{\frac{\log |\Theta|}{n}} \right)$
        \item For parametric models with bounded complexity: $O\left( \frac{1}{\sqrt{n}} \right)$
        \item For overparameterized neural networks: rates depend on implicit regularization and may not follow classical scaling
    \end{itemize}

    We do not claim rates for neural networks. The theorem establishes qualitative convergence under the consistency assumption, which can be verified emprically for specific architectures and training procedures.
\end{remark}

Under consistency and given suitable rates of convergence, the gap between achieved and optimal loss vanishes, and EDL coincides with SDL. This provides theoretical grounding for using EDL as a practical, finite-data analog to the information-theoretically motivated SDL.

\subsection{Bound on Expected Generalization Error Improvement}
\label{sec::GeneralizationErrorBounds}

EDL provides an upper bound on how much the expected generalization error (loss) improves through training.

\begin{theorem}[Generalization Bound]
\label{thm:genbound}
    For a population-monotonic algorithm, the expected improvement in population loss decomposes as:
    \begin{equation}
        L(\theta_0) - \mathbb{E}\left[L(\theta^*)\right] = \left(L(\theta_0) - \bar{L}\right) + \frac{\mathbb{E}[\mathrm{EDL}]}{n}
    \end{equation}
    where $\bar{L} = \frac{1}{n}\sum_{i=1}^n \mathbb{E}\left[L(\theta_{i-1})\right]$ is the average expected population loss across the training trajectory.
\end{theorem}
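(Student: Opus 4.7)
The plan is to recognize that this identity is a direct algebraic consequence of the expected-EDL computation already carried out in the proof of Theorem 4.2 (Non-negativity). The population-monotonicity hypothesis is not actually needed to establish the identity itself; it enters only to interpret both terms on the right-hand side as non-negative contributions to a total improvement in expected loss. So the main work is bookkeeping rather than estimation.

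First I would recall from the proof of Theorem 4.2 that under the i.i.d.\ assumption and independence of $D_{\text{test}}$ from $D$, the expected MDL factors as $\mathbb{E}[\mathrm{MDL}] = \sum_{i=1}^n \mathbb{E}[L(\theta_{i-1})]$, since for each $i$ the draw $(x_i,y_i)$ is independent of $\theta_{i-1}$ and so the conditional expectation of $\ell(\theta_{i-1};x_i,y_i)$ equals $L(\theta_{i-1})$. Likewise, independence of the test sample gives $\mathbb{E}[n \cdot L_{\text{test}}(\theta^*)] = n \cdot \mathbb{E}[L(\theta^*)]$, where the outer expectation is over the randomness of $\theta^*$ induced by the training trajectory. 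Subtracting yields
\begin{equation}
    \mathbb{E}[\mathrm{EDL}] \;=\; \sum_{i=1}^n \mathbb{E}[L(\theta_{i-1})] \;-\; n\,\mathbb{E}[L(\theta^*)].
\end{equation}

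Next I would divide by $n$, identify $\tfrac{1}{n}\sum_i \mathbb{E}[L(\theta_{i-1})] = \bar L$ by definition, and rearrange to get $\mathbb{E}[L(\theta^*)] = \bar L - \mathbb{E}[\mathrm{EDL}]/n$. Subtracting both sides from the deterministic quantity $L(\theta_0)$ and adding and subtracting $\bar L$ on the right produces
\begin{equation}
    L(\theta_0) - \mathbb{E}[L(\theta^*)] \;=\; \bigl(L(\theta_0) - \bar L\bigr) + \frac{\mathbb{E}[\mathrm{EDL}]}{n},
\end{equation}
which is the claimed decomposition.

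The main obstacle is cosmetic rather than technical: I need to be careful that $L(\theta^*)$ is itself a random variable (through $\theta^*$'s dependence on $D$), so every occurrence of it inside an equality with $L_{\text{test}}(\theta^*)$ must be understood after conditioning on $\theta^*$ and then taking outer expectation. The only place the population-monotonicity assumption does substantive work is in the interpretation: it guarantees $\bar L \leq L(\theta_0)$, so the first term on the right is a non-negative ``trajectory improvement,'' while Theorem 4.2 supplies non-negativity of the second term. Without monotonicity the identity still holds but neither term on the right is guaranteed to be non-negative, and the decomposition loses its reading as a sum of two distinct sources of generalization gain.
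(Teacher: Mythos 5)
Your proof is correct and takes essentially the same route as the paper's: both derive $\mathbb{E}[\mathrm{EDL}] = n\bigl(\bar L - \mathbb{E}[L(\theta^*)]\bigr)$ from the expected-MDL identity established in the non-negativity theorem and then rearrange by adding and subtracting $\bar L$. Your side observation that population-monotonicity is needed only to interpret the two terms as non-negative, not to establish the identity itself, is also accurate.
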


Equivalently, expected EDL satisfies:
\begin{equation}
    \mathbb{E}[\text{EDL}] = n \cdot \left(\bar{L} - \mathbb{E}\left[L(\theta_{i-1})\right]\right).
\end{equation}

The total improvement $L(\theta_0) - \mathbb{E}\left[L(\theta^*)\right]$ can be decomposed into two parts:
\begin{enumerate}
    \item $L(\theta_0) - \bar{L}$: improvement during the training trajectory (always non-negative for population-monotonic algorithms)
    \item $\mathbb{E}[\text{EDL}]/n$: per-example absorbed generalizable information, measuring how much better the final model performs on the population distribution compared to the trajectory average
\end{enumerate}

\begin{proof}[Proof sketch (see \autoref{Sec::AppendixGeneralizationBoundProof} for full proof)]
    From the proof of \autoref{thm:nonneg},
    \begin{equation}
        \mathbb{E}[\text{EDL}] = \sum_{i=1}^n \left[L(\theta_{i-1}) - L(\theta^*)\right] = n \cdot \bar{L} - n \cdot L(\theta^*)
    \end{equation}
    Rearranging,
    \begin{equation}
        L(\theta^*) = \bar{L} - \frac{\mathbb{E}[\text{EDL}]}{n}.
    \end{equation}
    The result follows from $L(\theta_0) - L(\theta^*) = \left(L(\theta_0) - \bar{L}\right) + \left(\bar{L} - L(\theta^*)\right)$.
\end{proof}

This bound shows that \(\text{EDL}/n\) upper bounds the improvement attributable to learning during training (the $\bar{L} - L(\theta^*)$ term), while the remaining improvement $(L(\theta_0) - \bar{L})$ reflects loss reduction during early training before substantial learning occurs.

\subsection{Dependence on Learning Algorithm}
\label{sec::LearningAlgorithmDependence}

Computed prequentially, EDL inherently depends on the training algorithm and is not an intrinsic property of the model class and dataset alone.

\begin{proposition}[Algorithm Dependence]
\label{prop:algdependence}
    For fixed $D$ and $\theta_0$, different algorithms $A$ and $A'$ can yield different EDL values: $\mathrm{EDL}(D; \theta_0, A) \neq \mathrm{EDL}(D; \theta_0, A')$, in general.
\end{proposition}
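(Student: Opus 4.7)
The plan is to prove this existence claim by exhibiting two concrete algorithms whose EDL values differ on some $D$ and $\theta_0$, and then explaining informally why such discrepancies are generic. Because the proposition is purely existential (``can yield different''), I do not need a quantitative separation, and because expectations are easier to reason about than sample paths, I would work in expectation over $D$ i.i.d.\ from $\mathcal{D}$ and over an independent test sample, using the identities from the proof of \autoref{thm:nonneg}.

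First I would consider the no-op algorithm $A_{\mathrm{id}}$ defined by $A_{\mathrm{id}}(\theta_{i-1},(x_i,y_i)) = \theta_{i-1}$. Every intermediate parameter equals $\theta_0$, so $\mathrm{MDL}(D;\theta_0,A_{\mathrm{id}}) = \sum_i \ell(\theta_0; x_i,y_i)$ and $\theta^* = \theta_0$. Taking expectations,
\begin{equation}
    \mathbb{E}[\mathrm{EDL}(D;\theta_0,A_{\mathrm{id}})] = n\cdot L(\theta_0) - n\cdot L(\theta_0) = 0.
\end{equation}

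Next I would choose any standard population-monotonic learning algorithm $A_{\mathrm{lrn}}$ (for instance, SGD with a sufficiently small positive learning rate) together with a distribution $\mathcal{D}$ and initialization $\theta_0$ at which $A_{\mathrm{lrn}}$ strictly improves the population loss, i.e., $L(\theta^*) < L(\theta_0)$. By the same derivation as in the proof of \autoref{thm:nonneg},
\begin{equation}
    \mathbb{E}[\mathrm{EDL}(D;\theta_0,A_{\mathrm{lrn}})] = \sum_{i=1}^n \bigl[L(\theta_{i-1}) - L(\theta^*)\bigr] > 0,
\end{equation}
where strict positivity follows because the $i=1$ term equals $L(\theta_0) - L(\theta^*) > 0$ and all remaining terms are non-negative by population monotonicity. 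Since the expected EDLs differ, there must exist at least one realization of $D$ for which the realized EDLs also differ, which suffices for the proposition.

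To justify the ``in general'' qualifier, I would note that $\mathrm{MDL}$ depends on the entire trajectory $\theta_0,\ldots,\theta_{n-1}$, while the subtracted term $n\cdot L_{\mathrm{test}}(\theta^*)$ depends only on the endpoint; two algorithms can disagree arbitrarily along the trajectory without their endpoint losses being constrained to compensate. The main ``obstacle'' here is merely cosmetic---picking the simplest $\mathcal{D}$, $\theta_0$, and $A_{\mathrm{lrn}}$ so that strict inequality $L(\theta^*) < L(\theta_0)$ is transparent without invoking heavy optimization machinery. Any toy model, such as a one-parameter Bernoulli predictor or logistic regression on a single informative feature, suffices to make the construction concrete.
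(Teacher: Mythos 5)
Your proof is correct, but it takes a genuinely different route from the paper's. The paper constructs an explicit example---linear regression trained by gradient descent with two different learning rates $\eta$ and $\eta/10$---and argues qualitatively that the slower algorithm accumulates a larger MDL along its trajectory while the final test losses need not compensate, so the EDLs differ. Your argument instead contrasts the degenerate no-op algorithm $A_{\mathrm{id}}$ (for which every $\theta_i = \theta_0$, so $\mathbb{E}[\mathrm{EDL}] = 0$ exactly) with any strictly improving population-monotonic learner (for which $\mathbb{E}[\mathrm{EDL}] > 0$ by the decomposition already established in the proof of \autoref{thm:nonneg}), and then passes from differing expectations to the existence of a realization on which the realized EDLs differ. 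Your route is arguably tighter: it reuses proven machinery, gives an exact separation in expectation rather than a heuristic comparison of trajectories, and avoids the paper's somewhat informal claim that $\mathrm{MDL}_A < \mathrm{MDL}_{A'}$ ``in general.'' What the paper's version buys in exchange is a more realistic illustration---both of its algorithms are genuine optimizers, so the example demonstrates that the dependence matters between practically relevant choices, not just between learning and not learning. The only caveat for your version is that you must verify $A_{\mathrm{id}}$ qualifies as a training algorithm under \autoref{sec::Notation} (it does, trivially, as a map from $(\theta_{i-1},(x_i,y_i))$ to $\theta_{i-1}$) and that your chosen learner achieves strict improvement $\mathbb{E}[L(\theta^*)] < L(\theta_0)$, which you correctly flag as requiring a concrete toy instantiation.
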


This dependence arises through both MDL (determined by the loss trajectory) and the final test loss (determined by where training terminates). A more aggressive learning rate might achieve lower test loss but incur higher MDL due to initially worse predictions; a conservative learning rate might have lower MDL but higher test loss.

This algorithm dependence is not a defect but a feature: different training procedures genuinely differ in how efficiently they extract information from data. When comparing EDL across settings, the algorithm should be held fixed or explicitly varied as an experimental factor.

\iftoggle{workingversion}{\clearpage}{}
\section{Toy Models}
\label{sec::ToyModels}

To build intuition for EDL and clarify common confusions about information in learning, we present a series of toy models. Each illustrates a specific phenomenon and provides insight into when EDL signatures indicate elicitation versus teaching.

The following toy models are intended as conceptual illustrations rather than rigorous characterizations of neural network learning. Each model captures one phenomenon in isolation, whereas real learning involves interactions among multiple effects. The functional forms (e.g., linear learning curves, coupon collector dynamics) are chosen for analytical tractability and should be viewed as approximations. The qualitative predictions that teaching and elicitation have different EDL signatures are validated empirically in a companion paper.

\subsection{Random Labels: EDL Equals Zero When Nothing Generalizes}
\label{sec::RandomLabels}

\begin{figure}[ht]
    \centering
        \includegraphics[width=1.0\columnwidth]{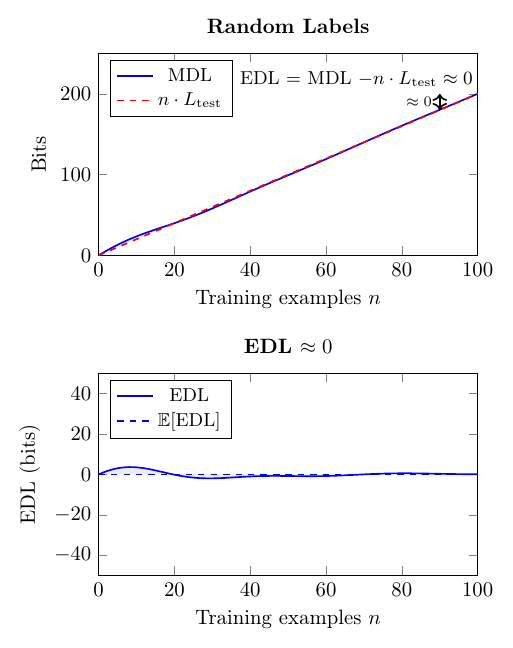}
    \caption{When labels are random, test loss $L_\text{test}$ remains constant since no generalizable pattern can be learned. MDL and the residual codelength $n \cdot L_\text{test}$ both increase linearly (in expectation), proportional to the number of training examples $n$. Since no learnable structure exists in the data, $\text{EDL} = \text{MDL} - n \cdot L_\text{test}\approx 0$, and the amount of generalizable information absorbed is negligible.}
    \label{fig:RandomLabelsDiagram}
\end{figure}

Our first toy model establishes a crucial baseline: when training labels carry no predictive information about test labels, EDL should be zero. This validates that EDL measures learned structure of the data distribution rather than mere computational effort or overfitting.

Consider a setting where labels are generated independently of inputs and independently between training and test sets. Specifically, let $y \sim \text{Uniform}(\mathcal{Y})$ independently for each example, with $|\mathcal{Y}| = k$. The training labels $\{y_i\}_{i=1}^n$ and test labels are drawn from this same marginal distribution but with no shared structure—knowing the training labels provides no information about test labels.

In this setting, the optimal predictor is the uniform distribution regardless of training data, achieving loss $\log k$ per example. The initial model, if well-calibrated, begins near this optimum. Training on random labels may cause the model to memorize the training set, but this memorization does not reduce test loss.

The prequential MDL is determined by the model's predictions during training. If the model maintains predictions near uniform (the rational response to recognizing random labels), MDL is approximately $n \log k$. If the model instead memorizes training labels, later training examples may have higher loss before being memorized, but the eventual test loss remains at $\log k$ since memorization does not generalize.

In either case, $\text{EDL} = \text{MDL} - n \cdot L_{\text{test}} \approx n \log k - n \log k = 0$. The model cannot compress the test labels any better than chance, so no predictive information has been absorbed from training.

\begin{proposition}
\label{prop::randomlabels}
    For i.i.d. random labels independent between train and test, $\mathbb{E}[\mathrm{EDL}] = 0$.
\end{proposition}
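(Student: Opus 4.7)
The plan is to use the expectation identity derived inside the proof of \autoref{thm:nonneg}, which requires only i.i.d.\ training data and an independent test draw:
$$\mathbb{E}[\mathrm{EDL}] = \sum_{i=1}^n \mathbb{E}[L(\theta_{i-1})] - n\cdot\mathbb{E}[L(\theta^*)].$$
The claim then reduces to showing that $\mathbb{E}[L(\theta_j)] = \log k$ for every $j = 0, 1, \ldots, n$, where $k = |\mathcal{Y}|$.

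For the lower bound I would exploit the key structural fact that $y$ is Uniform$(\mathcal{Y})$ and independent of $x$. Taking expectations first over the fresh label,
$$L(\theta) = \mathbb{E}_x\Bigl[\tfrac{1}{k}\sum_{y\in\mathcal{Y}}-\log p_\theta(y\mid x)\Bigr],$$
which is the expected cross-entropy from the uniform distribution to $p_\theta(\cdot\mid x)$. By Gibbs' inequality, $L(\theta) \geq \log k$ for every $\theta$, with equality iff $p_\theta(\cdot\mid x)$ is uniform almost surely in $x$.

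For the matching upper bound I would assume the initial model is well-calibrated to the marginal, so that $p_{\theta_0}(\cdot\mid x)$ is uniform and $L(\theta_0) = \log k$. Under a population-monotonic algorithm (\autoref{def:populationmonotonic}), $\mathbb{E}[L(\theta_j)\mid \theta_{j-1}] \leq L(\theta_{j-1})$, so iterating gives $\mathbb{E}[L(\theta_j)] \leq L(\theta_0) = \log k$ for all $j$. Combining with the Gibbs bound pins $\mathbb{E}[L(\theta_j)] = \log k$ exactly for every $j$, and substituting into the decomposition yields $\mathbb{E}[\mathrm{EDL}] = n\log k - n\log k = 0$.

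The main obstacle I foresee is scoping: without the uniform-initialization or population-monotonicity hypotheses, a memorizing learner can drive $\mathbb{E}[L(\theta_j)]$ strictly above $\log k$, making the prequential sum differ from $n\cdot\mathbb{E}[L(\theta^*)]$ and pushing expected EDL negative (consistent with the \autoref{thm:nonneg} caveat about unlearning). I would address this by invoking the ``rational learner'' framing of the preceding discussion, where these hypotheses hold by construction. An alternative route, equally valid when the assumptions of \autoref{thm:nonneg} are not met, is a label-permutation symmetrization: since the joint data distribution is invariant under any $\pi \in S_k$ applied uniformly to all labels and standard cross-entropy optimizers are label-equivariant, averaging over $S_k$ yields symmetrized predictions that are uniform at every step, again giving $\mathbb{E}[L(\theta_j)] = \log k$ and hence $\mathbb{E}[\mathrm{EDL}] = 0$.
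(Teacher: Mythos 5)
Your main argument is correct and is essentially the paper's proof, stated more sharply. The paper sandwiches $\mathbb{E}[\mathrm{EDL}]$ between $0$ (via \autoref{thm:nonneg}) and a slack term $\epsilon = \mathbb{E}[\mathrm{MDL}] - n\log k$ that it argues vanishes ``for well-behaved models,'' using the same two ingredients you use: optimality of the uniform predictor ($L(\theta) \geq \log k$ for every $\theta$, your Gibbs step) and population-monotonicity from a well-calibrated start. Your per-step version, pinning $\mathbb{E}[L(\theta_j)] = \log k$ exactly for every $j$ and hence every term of the decomposition to zero, is tighter and makes the required hypotheses (uniform initialization plus \autoref{def:populationmonotonic}) explicit where the paper leaves them informal; both proofs correctly scope the result the same way.

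One caveat: the symmetrization fallback you offer does not deliver what you claim. Label-permutation invariance of the data distribution plus equivariance of the learner implies that the \emph{permutation-averaged} predictive probability is uniform, but by convexity of $-\log$ this only gives $\mathbb{E}[-\log p_{\theta_j}(y\mid x)] \geq -\log\mathbb{E}[p_{\theta_j}(y\mid x)] = \log k$, i.e.\ it reproves the Gibbs lower bound on each per-step loss. It does not force the unsymmetrized predictions themselves to be uniform, so it cannot supply the matching upper bound, and therefore cannot establish $\mathbb{E}[\mathrm{EDL}] = 0$ in regimes where population-monotonicity fails --- which is consistent with the paper's own remark that such regimes can yield negative expected EDL. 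Drop or weaken that alternative route; the main argument stands on its own.
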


This result provides a simple and practical validation for empirical measurements: if EDL is substantially positive, there must be learnable structure in the data. Random labels yield EDL near zero regardless of the training effort or computational resources expended.

\subsection{Hypothesis Collapse: One Label Can Eliminate Many Bits of Uncertainty}
\label{sec::HypothesisCollapse}

A common confusion about information in learning is the belief that a single binary label can provide at most one bit of information. This is incorrect when the label resolves uncertainty about which rule governs the data.

Consider a learner with a hypothesis class $\mathcal{H} = \{h_1, \ldots, h_m\}$ representing possible input-output relationships for data distribution $\mathcal{D}: \mathcal{X} \times \mathcal{Y}$. The learner begins with a uniform prior over hypotheses. A single example $(x, y) \in \mathcal{X} \times \mathcal{Y}$ is observed, and suppose this example is \textit{diagnostic}: only one hypothesis $h_j$ is consistent with the observed input-output pair, while all others predict different labels for input $x$.

Before observing the example, the learner's uncertainty about which hypothesis is correct is $\log m$ bits. After observing $(x, y)$, this uncertainty drops to zero, as the learner now knows $h_j$ is the true hypothesis. The single example has provided $\log m$ bits of information about the underlying rule, even though the label $y$ itself has at most $\log |\mathcal{Y}|$ bits of entropy.

This phenomenon explains how EDL can be large even when training on few examples. If those examples are highly diagnostic, meaning they eliminate many hypotheses about how to perform a task, each example contributes information proportional to the logarithm of the number of hypotheses eliminated, as opposed to the naive label entropy.

\begin{proposition}[Hypothesis Collapse]
\label{prop:hypothesiscollapse}
    Consider a Bayesian learner with hypothesis class $\mathcal{H} = \{h_1, \ldots, h_m\}$ and uniform prior. Suppose:
    \begin{enumerate}[label=\roman*]
        \item The example $(x, y)$ is diagnostic, meaning that exactly one hypothesis $h_j$ satisfies $h_j(x)=y$.
        \item The hypothesis are maximally distinguishing on $x$: for each label $y' \in \mathcal{Y}$, exactly $m/|\mathcal{Y}|$ hypotheses predict $y'$ for input x.
    \end{enumerate}
    Then, a single example contributes exactly $\log |\mathcal{Y}|$ bits to EDL, and the generalization improvement is $\log m$ bits, the entropy of the full hypothesis space.

    More generally, without assumption (ii), a single example can contribute at most $\mathrm{min}\left(\log |\mathcal{Y}|, \log m\right)$ bits to EDL, where the codelength contribution is determined by the label entropy under the predictive distribution, while the generalization improvement is determined by the hypothesis entropy reduction.
\end{proposition}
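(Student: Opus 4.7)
The plan is to model $\theta_0$ as the Bayes predictive under the uniform prior on $\mathcal{H}$, with the update rule $A$ implementing exact Bayesian conditioning. Under this model, the initial predictive distribution is $p_{\theta_0}(y' \mid x) = \tfrac{1}{m}\sum_{j=1}^{m} \mathbb{I}(h_j(x) = y')$, and after observing $(x,y)$ the posterior restricts to the hypotheses consistent with the example. I will split the argument into (a) the codelength contribution $-\log p_{\theta_0}(y \mid x)$, which equals $\mathrm{MDL}$ for $n=1$, and (b) the resulting $L_{\mathrm{test}}(\theta^*)$; substituting into \autoref{def::EDL} then gives $\mathrm{EDL}$ directly.

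For the first part of the proposition, assumption (ii) forces $p_{\theta_0}(y' \mid x) = 1/|\mathcal{Y}|$ for every label, so $\mathrm{MDL} = -\log(1/|\mathcal{Y}|) = \log|\mathcal{Y}|$. The diagnostic assumption (i) collapses the posterior to a point mass on $h_j$, so the updated model predicts $h_j(x')$ deterministically on any new $x'$; under the realizability assumption that $\mathcal{D}$ is generated by $h_j$, $L_{\mathrm{test}}(\theta^*) = 0$ and therefore $\mathrm{EDL} = \log|\mathcal{Y}|$. For the ``$\log m$ bits of generalization improvement'' clause, I would separate this from the codelength contribution by phrasing it as an information gain about the hypothesis variable $H$: with $H(H) = \log m$ under the uniform prior and $H(H \mid X=x, Y=y) = 0$ by diagnosticity, the mutual information $I(H;Y \mid X=x) = \log m$. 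This quantifies reduction in uncertainty about the rule governing the whole distribution, not about a single label, which is why it can exceed the per-example code cost.

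For the general case without (ii), two independent bounds apply. Because at least the true hypothesis predicts $y$ on $x$, we have $p_{\theta_0}(y \mid x) \geq 1/m$, so the realized codelength is at most $\log m$. Separately, the expected codelength of the label under the Bayes predictive equals its Shannon entropy $H\bigl(p_{\theta_0}(\cdot \mid x)\bigr) \leq \log|\mathcal{Y}|$. Combining the two gives the $\min(\log|\mathcal{Y}|, \log m)$ ceiling on the EDL contribution, while the hypothesis entropy reduction $H(H) - H(H \mid X=x, Y=y) \leq \log m$ bounds the generalization improvement separately, with equality only in the diagnostic case.

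The main obstacle is making the ``$\log|\mathcal{Y}|$'' half of the bound rigorous without assumption (ii): the per-realization codelength $-\log p_{\theta_0}(y \mid x)$ can genuinely exceed $\log|\mathcal{Y}|$ when $p_{\theta_0}(y \mid x)$ is small and $|\mathcal{Y}|$ is small (e.g.\ only the true hypothesis among many predicts $y$ and $|\mathcal{Y}|=2$). I expect the cleanest resolution is to interpret the $\log|\mathcal{Y}|$ bound as applying to the expected codelength under the predictive distribution (equivalently, to the label-entropy of that distribution) rather than to every realized draw, and to state the $\log m$ bound as the worst-case per-realization ceiling. Together these give the advertised $\min$ bound in the natural Bayesian setting where $y$ is drawn from the predictive, with the tighter of the two applying in any given regime.
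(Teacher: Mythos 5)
Your argument for the main claim is essentially the paper's own: model $\theta_0$ as the Bayes predictive under the uniform prior, use assumption (ii) to get $p_{\theta_0}(y\mid x)=1/|\mathcal{Y}|$ and hence a codelength of $\log|\mathcal{Y}|$, collapse the posterior to a point mass on $h_j$ via diagnosticity so that $L_{\mathrm{test}}(\theta^*)=0$ under realizability, and separately account for the $\log m$ generalization improvement as the reduction in hypothesis entropy (the paper phrases this as $L(\theta_0)-L(\theta^*)=\mathbb{E}_x[H(Y\mid x;\theta_0)]$ in the extreme case $k=m$; your mutual-information phrasing $I(H;Y\mid X=x)=\log m$ is equivalent here). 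One point in your favor: for the general $\min(\log|\mathcal{Y}|,\log m)$ clause the paper only asserts the bound in a bulleted list, whereas you actually supply the two ingredients ($p_{\theta_0}(y\mid x)\geq 1/m$ giving the $\log m$ ceiling, and the label-entropy bound giving $\log|\mathcal{Y}|$) and correctly flag that the $\log|\mathcal{Y}|$ half holds only for the expected codelength, not per realization --- a genuine imprecision in the stated proposition that your proposed resolution (reading that half as an expectation over the predictive) handles cleanly.
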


Note that since EDL measures the amount of information \textit{present in the observed train data} that the model has gained about the test distribution, the model can gain at most as much generalizable information as the train data contains (\textit{i.e.}, EDL = MDL), even if a single example collapses the entire hypothesis space, reducing its uncertainty to zero.

In the context of elicitation, this toy model suggests why minimal fine-tuning can dramatically improve performance. If a pretrained model ``contains'' a capability in the sense that the correct hypothesis is among a small set,\footnote{For example, correct responses to the question, ``What is 2 + 2?'' may be ``4,'' ``A simple addition problem,'' or ``$2\times2$.'' Though the correct hypothesis will depend on the specifics of the task, the hypothesis space is limited, and a single example may suffice to remove the uncertainty about which to choose.} one or few examples may suffice to identify which hypothesis to use. The model already has the capability; the examples simply disambiguate which of its latent competencies to apply.

\subsection{Disjoint Subdistributions: Expected Generalization Gain Scales with Coverage}
\label{sec::DisjointSubdistributions}

Learning a rule that applies only to a subset of the input distribution contributes proportionally less to expected generalization. This toy model formalizes the intuition that rare patterns matter less for overall performance.

Let the data distribution $\mathcal{D}$ be a mixture of $K$ disjoint subdistributions:
\begin{equation}
    \mathcal{D} = \sum_{j=1}^K \pi_j \mathcal{D}_j,
\end{equation}
where each $\mathcal{D}_j$ has support $\mathcal{X}_j$ with $\mathcal{X}_j \cap \mathcal{X}_{j'} = \emptyset$ for $j \neq j'$, and $\sum_j \pi_j = 1$. Suppose that within each subdistribution, there is a ``rule'' that, once learned, reduces the per-example loss by 1 bit (or some fixed amount $\Delta$).

If the training data covers only subdistribution $j$ with mixture weight $\pi_j$, the model can learn the corresponding rule and reduce loss on inputs from $\mathcal{X}_j$. However, test data is drawn from the full mixture $\mathcal{D}$. The expected improvement in test loss is only $\pi_j \cdot \Delta$, the loss reduction on $\mathcal{X}_j$ weighted by the probability of encountering inputs from subdistribution \(j\).

\begin{proposition}
\label{prop:SubdistributionWeightEDL}
    Learning a rule that reduces loss by $\Delta$ bits on a subdistribution with mixture weight $\pi$ contributes approximately $\pi \cdot \Delta$ bits per example to expected generalization improvement.
\end{proposition}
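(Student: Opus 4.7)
The plan is to compute the expected test loss improvement directly using the disjoint mixture decomposition, then identify the contribution to EDL via the generalization bound from \autoref{thm:genbound}.

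First I would set up notation: let $L_j(\theta) = \mathbb{E}_{(x,y)\sim \mathcal{D}_j}[\ell(\theta;x,y)]$ denote the per-subdistribution population loss, so by the mixture structure and disjointness of supports,
\begin{equation}
    L(\theta) = \sum_{j=1}^K \pi_j L_j(\theta).
\end{equation}
This law-of-total-expectation decomposition is the crucial structural observation: because the supports $\mathcal{X}_j$ are disjoint, the model's behavior on $\mathcal{X}_j$ is entirely determined by whatever rule (or lack thereof) it has absorbed for that subdistribution, and changes to predictions on $\mathcal{X}_j$ leave $L_{j'}(\theta)$ for $j' \neq j$ unchanged.

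Next I would formalize the ``rule'' assumption: suppose training exposes the learner to examples from subdistribution $j$ (with mixture weight $\pi_j$), yielding a final model $\theta^*$ with $L_j(\theta_0) - L_j(\theta^*) = \Delta$, while $L_{j'}(\theta^*) = L_{j'}(\theta_0)$ for all $j' \neq j$ (no cross-distribution transfer, since supports are disjoint and the rule is subdistribution-specific). Applying the decomposition,
\begin{equation}
    L(\theta_0) - L(\theta^*) = \sum_{j'=1}^K \pi_{j'}\bigl[L_{j'}(\theta_0) - L_{j'}(\theta^*)\bigr] = \pi_j \Delta.
\end{equation}
This is the expected per-example generalization improvement on the full distribution $\mathcal{D}$.

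Finally I would connect this to EDL using \autoref{thm:genbound}, which yields $\mathbb{E}[\mathrm{EDL}]/n = \bar{L} - \mathbb{E}[L(\theta^*)]$. Under the idealization that the rule is acquired early in training so that $\bar{L} \approx L(\theta_0)$ (equivalently, the intermediate models $\theta_{i-1}$ have not yet reduced loss on $\mathcal{X}_j$ for most of the trajectory), we get $\mathbb{E}[\mathrm{EDL}]/n \approx L(\theta_0) - L(\theta^*) = \pi_j \Delta$ bits per example. More generally, without the early-acquisition idealization, $\pi_j \Delta$ is the contribution of this rule to the asymptotic per-example improvement of $L(\theta^*)$ over $L(\theta_0)$, which is the sense of ``expected generalization improvement'' in the statement.

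The main obstacle is interpretive rather than technical: the proposition says ``approximately,'' which hides the fact that EDL per example equals $\bar{L} - L(\theta^*)$ rather than $L(\theta_0) - L(\theta^*)$, and these differ by a transient term reflecting where along the trajectory the rule is learned. The clean $\pi_j \Delta$ identity holds exactly for the \emph{test-loss improvement} $L(\theta_0) - L(\theta^*)$ by the disjoint-support decomposition, and holds approximately for per-example EDL to the extent that learning the rule is fast relative to $n$. I would state the argument as an equality for the generalization improvement and note the approximation for EDL explicitly.
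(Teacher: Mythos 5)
Your proposal is correct and follows essentially the same route as the paper: decompose the population loss over the disjoint mixture, observe that only the trained subdistribution's term changes, and conclude $L(\theta_0) - L(\theta^*) = \pi_j\Delta$. Your closing caveat—that per-example EDL is really $\bar{L} - L(\theta^*)$ rather than $L(\theta_0) - L(\theta^*)$—matches the paper's own follow-up calculation, which arrives at $\mathrm{EDL}/n = \sum_j \pi_j(\bar{L}_j - L_j^*)$.
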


This model clarifies the relationship between total information absorbed (computed on training data), expected generalization (computed over the test distribution), and generalizable information absorbed (EDL, computed on training and test data). A training set concentrated on a rare subdistribution may yield high total information absorbed if the model learns to predict that subdistribution well, but the expected generalization gain over the entire test distribution, EDL, is small because the learned structure applies to only a small fraction of test inputs.

In practice, this phenomenon affects the interpretation of EDL signatures. Low EDL does not necessarily indicate low practical improvement on all subsets of the test distribution if the training data overrepresents rare cases. Conversely, modest EDL concentrated on high-probability regions may translate to substantial practical gains if learning is rapid (such as is the case for elicitation).

\subsection{Coupon Collector Dynamics: Phase Transitions in Learning}
\label{sec::CouponCollector}

When a task requires exposure to multiple distinct concepts or pieces of knowledge before generalization is possible, marginal contributions to EDL (per train example or label token) can exhibit non-monotonic scaling with dataset size. This toy model illustrates how coverage requirements can result in phase transitions.

Consider a task with $n$ distinct concepts, where the model must see at least one example of each concept to generalize, which occurs at time $T$ (when all concepts have been encountered). This is analogous to the coupon collector's problem: here, examples are drawn uniformly from concepts (coupons), and the model collects concepts until all $n$ are represented.

In the coupon collector's problem, the expected number of draws to collect all $n$ coupons is $\mathbb{E}[T]=n \cdot H_n \approx n \log n$, where $H_n$ is the $n$-th harmonic number. Before achieving full coverage, the model cannot reliably predict examples from unseen concepts, and test loss remains high on those concepts.

The EDL trajectory shows three phases. Initially, when few concepts have been seen, MDL accumulates rapidly (each new example is surprising) but test loss remains high (many concepts are still unknown), yielding moderate EDL. As training progresses toward the coverage threshold, more concepts are learned, and both MDL accumulation slows (fewer novel concepts) and test loss drops (more concepts become predictable), causing EDL to increase. Beyond full coverage, additional examples provide diminishing information because they are redundant with known concepts, and EDL saturates or grows slowly.

\begin{proposition}
\label{prop:couponcollector}
    For a task requiring coverage of $n$ concepts, EDL shows an accelerating phase as $k$, the number of examples trained on so far, approaches $n \log n$, corresponding to the transition from partial to full coverage.
\end{proposition}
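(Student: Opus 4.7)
The plan is to treat the learning process as a coupon collector and compute $\mathbb{E}[\mathrm{EDL}(k)]$ as a function of the number of training examples $k$, then read off the three-phase structure from the resulting expression. I first formalize a minimal loss model consistent with the proposition: by the hypothesis-collapse argument of \autoref{prop:hypothesiscollapse}, a single exposure to a concept collapses the local hypothesis class, so each concept contributes baseline loss $L_0$ (e.g.\ $\log|\mathcal{Y}|$) the first time it appears and approximately $0$ thereafter. The state variable is $C_k$, the number of distinct concepts seen after $k$ examples drawn uniformly from the $n$ concepts.

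Next I compute the two EDL components in closed form. By linearity of expectation over the coupon collector process, only the first exposure to each concept contributes loss, so $\mathbb{E}[\mathrm{MDL}(k)] = L_0 \cdot \mathbb{E}[C_k]$ with $\mathbb{E}[C_k] = n(1 - (1-1/n)^k) \approx n(1 - e^{-k/n})$. For the test loss, the trained model predicts accurately on the $\mathbb{E}[C_k]/n$ fraction of test concepts it has seen and defaults to baseline on the rest, giving $\mathbb{E}[L_{\mathrm{test}}(\theta_k)] \approx L_0 e^{-k/n}$. Combining via \autoref{def::EDL}:
\begin{equation}
    \mathbb{E}[\mathrm{EDL}(k)] \;\approx\; L_0\bigl[n - (n+k)\,e^{-k/n}\bigr].
\end{equation}

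I then read off the phases by differentiating. The marginal contribution is $d\mathbb{E}[\mathrm{EDL}]/dk \approx L_0 (k/n)\,e^{-k/n}$, which has second derivative $(L_0/n)\,e^{-k/n}(1 - k/n)$. Thus EDL grows with positive curvature (the accelerating phase) while $k$ is small relative to $n$, and its marginal rate peaks as coverage becomes substantial. Plugging $k = n\log n$ gives $e^{-k/n} = 1/n$, so $\mathbb{E}[\mathrm{EDL}(n\log n)] \approx L_0(n - 1 - \log n) = L_0 n(1 - o(1))$: by the coupon-collector time, EDL has essentially saturated at its asymptote $L_0 n$, which is the total learnable information (one baseline loss worth per concept). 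Further examples are redundant in the sense of \autoref{prop:SubdistributionWeightEDL} applied to already-covered subdistributions, so EDL flattens.

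The main obstacle is the imprecision of the proposition's phrase ``accelerating phase as $k$ approaches $n \log n$,'' since my analysis locates the inflection in the marginal rate near $k \sim n$, whereas concentration of the coupon-collector time $T$ and the saturation of total EDL to $L_0 n$ both occur around $n \log n$. A clean proof therefore has to commit to one reading: I would state the result as a quantitative claim that $\mathbb{E}[\mathrm{EDL}(k)]/(L_0 n) \to 1$ with a transition window of width $\Theta(n)$ centered near $k = n\log n$, using standard tail bounds on $T$ (e.g.\ $\Pr[T > n\log n + cn] \leq e^{-c}$) to justify that the full-coverage transition is sharply localized there. The loss model itself is a deliberate simplification; making it more realistic (partial generalization after one example, multiple shots needed per concept) would change constants but preserve the single-peaked marginal structure and the saturation scale.
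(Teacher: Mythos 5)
Your derivation is essentially the paper's own proof: the same one-shot concept model (the paper uses $L_{\text{high}},L_{\text{low}}$ where you set $L_{\text{low}}=0$), the same coverage formula $\mathbb{E}[C_k]\approx n(1-e^{-k/n})$, and the identical closed form $\mathbb{E}[\mathrm{EDL}]\approx L_0\bigl[n-(n+k)e^{-k/n}\bigr]$, from which the paper likewise extracts the quadratic (accelerating) small-$k$ regime and saturation at $L_0 n$. Your closing concern is well founded and is in fact confirmed by the paper's own analysis: the paper differentiates the \emph{per-example} EDL and finds its peak at $k\approx 1.79\,n$, i.e.\ the acceleration and transition are localized at scale $\Theta(n)$ rather than at the coupon-collector time $n\log n$ named in the proposition, which is where EDL has already essentially saturated; so your proposed quantitative restatement is a genuine improvement in precision over the proposition as written, not a gap in your argument.
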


This model is relevant for teaching scenarios where the capability requires learning multiple components. The EDL signature of teaching---a phase where per-example EDL (significantly) increases---reflects the coverage dynamics. Until sufficient concepts are observed, the model cannot fully generalize, and additional examples provide increasing marginal value.

\begin{figure}
    \centering
        \includegraphics[width=1.0\columnwidth]{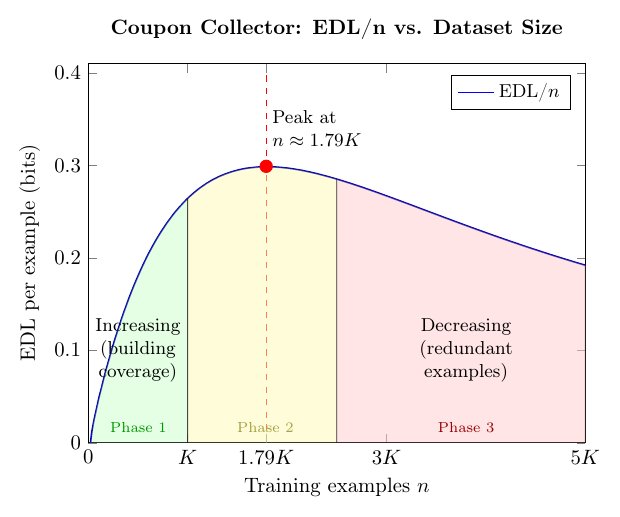}
    \caption{In the low-coverage regime (Phase 1, green), additional examples provide opportunities to learn generalizable patterns from relatively few known concepts. The rate of information absorption peaks as full coverage of concepts is approached (Phase 2, yellow). Beyond the coverage threshold (peak), learning enters the full-coverage regime (Phase 3, red), as additional examples yield diminishing predictive information that can be absorbed.}
    \label{fig:CouponCollectorDiagram}
\end{figure}

\subsection{Format Learning: Early Transients Versus Capability Acquisition}
\label{sec::FormatLearning}

Many tasks have two components: format (how to structure outputs) and capability (what to predict). Learning proceeds differently for these components, creating artifacts in early training that must be distinguished from capability acquisition.

Consider a task where the model must output answers in a specific format (e.g., ``The answer is X'' rather than just ``X''). Format is low-entropy and quickly learned, whereas capability is high-entropy and requires more data. A model fine-tuned on such a task shows rapid initial loss reduction as format is acquired, followed by slower improvement as capability develops.

The evolution of the EDL trajectory during training reflects this structure. Early training accumulates MDL rapidly (format not yet learned, so outputs are surprising) but also sees rapid test loss improvement (format quickly acquired). This yields an initial spike in EDL followed by decreasing per-example returns to EDL despite significant training. As format learning completes, subsequent examples address capability. MDL accumulation may slow (format already known) or continue at a moderate rate (capability still developing), while test loss improvement also slows. EDL per example may increase during this phase as capability develops.

\begin{proposition}
\label{prop:formatlearning}
    When task requires both format and capability learning, EDL per example may initially decrease (format learning) before increasing (capability learning) or stabilizing (capability saturation).
\end{proposition}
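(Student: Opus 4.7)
The plan is to decompose the per-example loss into format and capability components and exhibit an illustrative learning trajectory in which the resulting per-example EDL contribution has the stated non-monotonic shape. Because the proposition is an existence claim (``may''), it suffices to produce one such instantiation, drawing on the dynamics already developed for hypothesis collapse (\autoref{prop:hypothesiscollapse}) and coupon collector coverage (\autoref{prop:couponcollector}).

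First, I would factor $p_\theta(y \mid x) = p_\theta(y_{\text{fmt}} \mid x)\, p_\theta(y_{\text{cap}} \mid x, y_{\text{fmt}})$ into format and substantive tokens, so that $\ell(\theta; x, y) = \ell_{\text{fmt}}(\theta; x, y) + \ell_{\text{cap}}(\theta; x, y)$ and hence $L(\theta) = L_{\text{fmt}}(\theta) + L_{\text{cap}}(\theta)$. Linearity of expectation combined with the i.i.d.\ argument from the proof of \autoref{thm:nonneg} then splits the expected per-example EDL contribution at step $i$ as
\[
    \Delta(i) = \bigl(L_{\text{fmt}}(\theta_{i-1}) - L_{\text{fmt}}(\theta^*)\bigr) + \bigl(L_{\text{cap}}(\theta_{i-1}) - L_{\text{cap}}(\theta^*)\bigr).
\]
Into this I would substitute an illustrative profile: $L_{\text{fmt}}(\theta_i)$ collapses to near zero within the first $\tau_f$ examples (low-entropy, essentially one-shot format learning in the spirit of \autoref{prop:hypothesiscollapse}), while $L_{\text{cap}}(\theta_i)$ follows a coupon-collector shape --- nearly flat until $i \approx n_c$ with $n_c \gg \tau_f$, then declining rapidly around $n_c$, then saturating near $L_{\text{cap}}(\theta^*)$.

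With this profile the three regimes claimed by the proposition fall out directly. For $i \lesssim \tau_f$ the format term dominates and is large, producing the initial format-learning spike; for $\tau_f \ll i \ll n_c$ both terms are small (format is absorbed, capability not yet acquired), so $\Delta(i)$ plateaus at the lower value $L_{\text{cap}}(\theta_0) - L_{\text{cap}}(\theta^*)$, giving the initial decrease attributed to format learning; for $i$ near $n_c$ the capability term contributes its bulk because $L_{\text{cap}}(\theta_{i-1})$ still sits near its initial value while $L_{\text{cap}}(\theta^*)$ is already near its minimum, giving the increase attributed to capability acquisition; and for $i \gg n_c$ everything saturates.

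The main obstacle is reconciling the ``increasing'' phase with population monotonicity. At fixed final $\theta^*$ and under a strictly population-monotonic algorithm, the sequence $\Delta(i)$ is non-increasing in $i$, so the rise must come either from a local window in which capability dynamics are not population-monotonic (explicitly permitted by the remark after \autoref{thm:nonneg}) or from the alternative reading in which ``per-example EDL'' is the marginal EDL as a function of training length $n$. Under that second reading, the identity $\mathrm{EDL}(n) - \mathrm{EDL}(n-1) = \ell_n + (n-1)\bigl[L_{\text{test}}(\theta^*_{n-1}) - L_{\text{test}}(\theta^*_n)\bigr] - L_{\text{test}}(\theta^*_n)$ amplifies any sharp drop in $L_{\text{test}}(\theta^*_n)$ during capability acquisition by the factor $n-1$, naturally yielding the predicted rise. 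I would state the chosen convention explicitly when invoking the proposition in subsequent empirical analyses.
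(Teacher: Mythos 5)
Your proposal is essentially correct and ultimately lands on the same mechanism as the paper, but it reaches it by a detour the paper never takes. The paper's proof is a direct computation under the convention that ``per-example EDL'' means $\mathrm{EDL}(n)/n$ as a function of dataset size $n$, with the final model $\theta^* = \theta^*(n)$ retrained for each $n$: it posits piecewise-linear component learning curves $L_F^0(1 - i/n_F)$ and $L_C^0(1 - i/n_C)$ with $n_F \ll n_C$, sums them to get MDL, subtracts $n \cdot L_{\mathrm{test}}(\theta^*(n))$, and reads off the regimes ($\mathrm{EDL}/n \propto n$ for $n < n_F$, a plateau for $n_F < n < n_C$, and a $1/n$ decay for $n > n_C$). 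Your primary framing---fixing $\theta^*$ and studying the per-step excess $\Delta(i) = L(\theta_{i-1}) - L(\theta^*)$---is not the paper's, and as you yourself observe it provably cannot produce an increasing phase for a population-monotonic algorithm; that observation is correct and is precisely why that reading cannot be the intended one. Your fallback identity, $\mathrm{EDL}(n) - \mathrm{EDL}(n-1) = \ell_n + (n-1)\bigl[L_{\mathrm{test}}(\theta^*_{n-1}) - L_{\mathrm{test}}(\theta^*_n)\bigr] - L_{\mathrm{test}}(\theta^*_n)$, is algebraically right and isolates the same driver as the paper's computation: the $n$-amplified drop in final test loss during capability acquisition. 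The only residual difference is that the paper tracks the average $\mathrm{EDL}(n)/n$ rather than the marginal increment, and uses linear ramps rather than your hypothesis-collapse/coupon-collector profiles for the two components---a harmless substitution, since the proposition is an existence claim and any profile with well-separated time scales $n_F \ll n_C$ suffices. The one thing to fix is to commit to the $\mathrm{EDL}(n)/n$-versus-$n$ convention up front rather than presenting it as a fallback: it is the reading used throughout the paper's toy models and figures, and without it the claimed increasing phase is impossible for monotone algorithms, as your own argument shows.
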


This model motivates the practical recommendation to score only designated output tokens (\textit{e.g.}, answer tokens) or normalize outputs when computing EDL for capability analysis. Scoring the full or unnormalized output, including format elements, confounds format learning with capability acquisition and can obscure important signatures of interest.

\begin{figure}[ht]
    \centering
        \includegraphics[width=1.0\columnwidth]{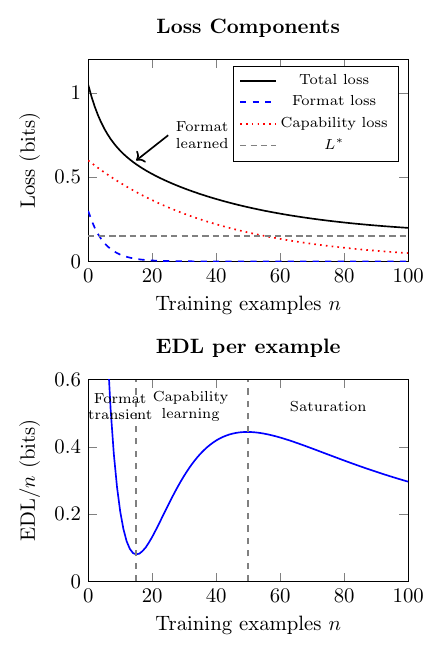}
    \caption{\textbf{Format learning can induce transients in EDL scaling and dynamical behavior.} In teaching scenarios, format learning can cause a transient drop in the per-example EDL ($\text{EDL}/n$), as format comprises generalizable structure that can be easily learned before task capability is acquired.}
    \label{fig:FormatLearningDiagram}
\end{figure}

\iftoggle{workingversion}{\clearpage}{}
\section{Conceptual Clarifications}
\label{sec::ConceptualClarifications}

This section addresses predictable confusions about the interpretation of EDL and its relationship to other notions of information. Deconflating these perspectives and clarifying their subtleties is essential for correctly interpreting empirical measurements.

\subsection{Computed Quantities Versus Distributional Expectations}
\label{sec::DistributionalExpectations}

The prequential MDL is computed on a specific realized sequence of training examples $D = \{(x_i, y_i)\}_{i=1}^n$, and describes the amount of information in those data via the minimum compressed file size necessary to optimally encode them with the chosen learning algorithm. However, in what sense does this computed quantity represent ``information''?

Information-theoretic quantities such as entropy and mutual information are defined as expectations over distributions. The entropy of a random variable $Y$ is $H(Y) = \mathbb{E}[-\log P(Y)]$, an expected value; it is not a property of any particular realization of the outputs of some data-generating procedure. Similarly, the ``information'' conveyed by observing a specific value $y$ is typically defined through the expected reduction in uncertainty conditional on that observation.

The prequential MDL is a sum of realized losses, not expected losses. For a single dataset, it is a random variable depending on the specific examples that were drawn and the order in which they were encoded with the model. Its interpretation as the ``information content of the data'' is most rigorous when considering the expectation over datasets: $\mathbb{E}_D[\text{MDL}(D; \theta_0, A)]$ measures the expected codelength of a sequence of data sampled from the distribution and has clear information-theoretic meaning in a source coding sense.

For a single computed EDL value, the appropriate interpretation is as an \textit{estimate} of the expected generalizable information a model can gain from the data during training. Just as a sample mean estimates a population mean, the computed EDL estimates the expected information that would be absorbed across datasets drawn from $\mathcal{D}$. Variance across datasets (or across random orderings of a fixed dataset) provides a measure of the uncertainty in this estimate.

This perspective suggests reporting confidence intervals or variance estimates alongside EDL measurements, particularly for small datasets where variance may be substantial.

\subsection{One Label Can Provide Many Bits}
\label{sec::OneLabelManyBits}

A binary label has Shannon entropy at most 1 bit. How, then, can a single example provide more than 1 bit of information to a learner?

Here, the resolution lies in distinguishing the entropy of the label itself from the information the label provides about the learning problem (\textit{i.e.}, data distribution). The label's entropy measures uncertainty about the individual label itself, in isolation. The information about the learning problem measures uncertainty reduction about which hypothesis (or parameter setting) governs the data.

When a learner has $m$ hypotheses under consideration, eliminating all but a single hypothesis provides $\log m$ bits of information about which hypothesis is correct, regardless of the label's own entropy. This information (what is specifically gained about the overall distribution by eliminating hypotheses) is what EDL captures and represents: the reduction in the model's uncertainty about how to predict future examples.

In formal terms, consider the mutual information between the label $Y$ and the hypothesis $h$ governing the data: $I(Y; h)$. This can be as large as $H(h) = \log m$, even when $H(Y)$ is small. The label may have low marginal entropy but high mutual information with the hypothesis.

This clarifies how elicitation can occur with minimal data. If the model's uncertainty is over which of its latent capabilities to apply (a small hypothesis space, as the model already has the requisite capability), a single illustrative example can resolve this uncertainty entirely. The example provides many bits of information about how to respond to the task, not by conveying information about the label itself, but by identifying which latent competency is relevant.\footnote{EDL quantifies the amount of information specifically present in the train dataset that was gained by the model and used to determine how to properly respond to the task. As such, the maximum amount of this generalizable information that a model can directly extract from the train data is the total information content of the train set itself.}

\subsection{EDL Is Not Semantic Information}
\label{sec::EDLvsSemanticInformation}

EDL measures predictive compression, \textit{i.e.}, how much better the final model predicts test labels compared to encoding labels sequentially using the continually updating model being trained. This is a statistical notion of information, rather than a semantic one.

A model with high EDL is better at predicting test labels. This improved prediction may arise from ``understanding'' the task in some meaningful sense, or it may arise from pattern matching, memorization, or exploitation of statistical regularities. However, EDL is agnostic to the mechanism by which this is accomplished; it measures the predictive improvement but does not reveal how that improvement was achieved.

Similarly, EDL does not measure ``knowledge'' in any philosophical sense. A model might absorb information which is sufficient to predict labels correctly, while having no meaningful representation of the underlying concepts. Conversely, a model might ``understand'' a domain deeply but show low EDL if that understanding was already evident before fine-tuning (elicitation or refinement of a demonstrated capability) or if the test distribution differs from training (distribution shift).

These distinctions matter for interpreting EDL measurements. It is important to remember that EDL reflects the change in a model's ability to identify regularities in the data and to make better predictions based on this recognition. High EDL indicates that fine-tuning has substantially improved predictions, but it does not reveal whether the improvement reflects genuine capability acquisition or more superficial explanations. Low EDL in an elicitation scenario indicates that the capability was largely manifest beforehand, but it does not reveal whether the capability reflects ``true understanding'' or sophisticated pattern matching.

Ultimately, EDL is a tool for measuring information flow during training, as opposed to one for revealing the nature of model representations.

\subsection{Connection to Parameter Capacity}
\label{sec::ParameterCapacity}

A natural question is how EDL relates to the number of parameters in the model, particularly for parameter-efficient fine-tuning methods that update only a small (sub)set of parameters.

Each parameter, represented in floating-point precision, has a theoretical upper bound on information content determined by its bit-length (e.g., 32 bits for float32). However, this bound is rarely approached in practice. The effective information capacity of a parameter depends on how it influences predictions, which is constrained by the loss landscape, optimization dynamics, and the model's architecture.

EDL per trainable parameter, $\text{EDL}_{\text{param}} = \text{EDL}/P$, where $P$ is the number of trainable parameters, provides an empirical measure of how much additional predictive information has been gained per parameter during training. For full fine-tuning with many parameters, this ratio is typically small; models tend to be over-parameterized, and many parameters may not store task-specific information. For parameter-efficient methods like LoRA with limited trainable parameters, this ratio can be larger, potentially approaching or exceeding 1 bit per parameter.

Empirically, we observe a threshold in bits per trainable parameter beyond which parameter-efficient methods degrade relative to full fine-tuning. When the task requires storing more information than the adapter's parameters can accommodate, performance suffers. This parameter capacity threshold differs systematically between elicitation (lower threshold, \addtexttilde0.01--0.1 bits/parameter) and teaching (higher threshold, \addtexttilde1+ bits/parameter), reflecting the different information requirements of these regimes (see companion empirical paper).

These thresholds are empirical observations; the theoretical basis for such thresholds, and their precise values, remain open questions  (see \autoref{sec::OpenQuestions}). The two-order-of-magnitude difference between regimes suggests a fundamental distinction in information requirements, which we formalize through the toy models in \autoref{sec::ToyModels}. The connection between EDL per parameter and adapter capacity provides a promising direction for understanding the scaling behavior of parameter-efficient fine-tuning.

\iftoggle{workingversion}{\clearpage}{}
\section{Discussion}
\label{sec::discussion}

We have developed a formal framework for measuring the generalizable portion of train set information that fine-tuning absorbs into model parameters. Excess Description Length, defined via prequential coding, provides an operational quantity computable from standard training logs. We established that EDL is non-negative in expectation, converges to surplus description length asymptotically, and provides bounds on expected generalization improvement. Through toy models, we clarified when EDL signatures indicate elicitation (pre-existing capability; few bits needed) versus teaching (capability initially absent; many bits needed).

\subsection{Implications for Empirical Research}
\label{sec::EmpiricalResearchImplications}

The framework developed here provides foundations for empirical investigation of teaching versus elicitation. These learning regimes are characterized by qualitatively different EDL behavior:

For elicitation, EDL per example monotonically decreases with dataset size (diminishing marginal generalizable information gained from additional examples), total EDL is small relative to capability improvement (capability was readily accessible or mostly present), and the bits-per-parameter threshold for adapter degradation relative to full fine-tuning is low (little new or remaining structure in the data to encode).

For teaching, EDL per example may increase during learning (new capability and associated representations being formed), total EDL is large relative to capability improvement (new structure being encoded in the model parameters), and the bits-per-parameter threshold is higher (substantial new structure required).

These signatures, formalized through our framework, provide testable predictions for empirical study. Causal interventions, such as pre-training on related tasks before measuring EDL on a target task, can shift EDL signatures from teaching-like to elicitation-like (shifting capacity thresholds by more than an order of magnitude and inverting EDL scaling behavior), providing evidence for the distinction of latent versus absent capabilities.

\subsubsection{Empirical Validation Preview}

The framework developed here makes specific predictions that we validate empirically in a companion paper:

\begin{enumerate}
    \item EDL scaling signatures: Elicitation shows monotonically decreasing EDL/token with dataset size; teaching shows an initial increasing phase. These signatures are robust across models (Llama 3 1B--8B, TinyStories-1B, Qwen2.5 1.5B--14B) and tasks (arithmetic, reasoning).
    \item Causal intervention: Pre-teaching a skill (\textit{e.g.}, multiplication via operator notation) converts a teaching task to an elicitation task, reducing information thresholds by \addtexttilde10--100x.
    \item Random label control: EDL collapses to near zero when labels are randomly permuted, confirming that EDL measures learnable structure rather than training effort.
    \item Parameter capacity prediction: Performance degrades significantly when EDL per parameter exceeds regime-dependent thresholds, with elicitation thresholds (\addtexttilde 0.01--0.1 bits/parameter) roughly 100x lower than teaching thresholds (>1 bit/parameter).
\end{enumerate}

These empirical results support the theoretical framework we introduce here and demonstrate practical utility for predicting fine-tuning behavior.

\subsection{Limitations}
\label{sec::Limitations}

Several limitations of our framework deserve acknowledgment.

First, EDL depends on the training algorithm. This dependence is real and meaningful---different algorithms genuinely differ in learning efficiency---but this complicates comparison across settings. The choice of algorithm should be held fixed when comparing EDL across models or tasks.

Second, EDL is not a semantic measure of knowledge or understanding. Large EDL indicates improved prediction but does not reveal how improvement was achieved. Models with similar EDL may have very different internal representations.

Third, our analysis focuses on supervised fine-tuning. Extensions to reinforcement learning, preference optimization, and other post-training methods require additional development.

Fourth, the toy models, while illuminating, are highly simplified. Real learning involves continuous hypothesis spaces, complex coverage dynamics, and entanglement of format and capability that our discrete toy models do not fully capture.

\subsection{Open Questions}
\label{sec::OpenQuestions}

The EDL formalism we introduce here provides a preliminary framework for assessing capability emergence and describing different learning regimes. There are several directions and open questions which we believe merit further investigation.

Can we establish tighter bounds relating EDL to generalization? Our current bounds are loose; sharper characterization would strengthen the framework.

How does information distribute across model components during fine-tuning? Layer-wise or module-wise analysis of EDL accumulation could reveal where capability information is stored.

Can EDL scaling reliably predict capability ceilings? If the trajectory of EDL with dataset size has predictable asymptotic behavior, it may be possible to estimate maximum achievable performance from early training data.

What training algorithms optimize information extraction? If EDL measures absorbed information, are there algorithms that maximize EDL (and thus learning efficiency) for a given data budget?

\subsection{Related Work}
\label{sec::RelatedWork}

Our methodology is grounded in an information-theoretic view of learning, using the Minimum Description Length (MDL) principle to quantify the information models absorb from data \cite{rissanen1978modeling, Dawid1984prequentialstats, Barron1998prequentialmdl}. This approach is motivated by prior work in information-theoretic probing. Notably, \citet{blier2018description} and \citet{voita2020information} also decomposed the MDL of a dataset into a model component and a data component, finding that the model component reflects the information captured by the parameters after training. Their formulation is a conceptual precursor to our definition of Excess Description Length (EDL).

Our framework also builds upon the concept of Surplus Description Length (SDL) \citep{SDLwhitney2021evaluatingrepresentationscomplexitylearning}, a powerful, information-theoretic measure of the quality of learned representations. However, SDL is typically defined in an asymptotic, infinite-data setting. A central contribution of our work is to formulate EDL as a practical, finite-data analog to SDL. While EDL can converge to SDL in the infinite-data limit, its primary advantage is in analyzing the realistic, resource-constrained scenarios common in fine-tuning. EDL is designed to answer practical questions about how far a small, finite amount of information can be stretched, providing a tool to quantify learning dynamics where data is scarce. This allows us to connect our formal metrics to an intuitive understanding of elicitation and teaching in a way that asymptotic measures cannot.

Our framework connects to a rich literature on information-theoretic analysis of learning. \citet{xu2017information} established generalization bounds based on the mutual information between training data and learned parameters, showing that algorithms with bounded information complexity generalize well. \citet{steinke2020reasoning} extended this analysis to adaptive data analysis, where queries depend on previous answers. The PAC-Bayes framework \citep{mcallester1999pacbayesian,catoni2007pacbayesian} provides generalization bounds based on the KL divergence between posterior and prior over hypotheses, which can be viewed as a description length. Our EDL framework differs in focusing on the prequential coding perspective, which provides operational semantics through the interactive communication protocol, as well as in emphasizing the finite-data regime where teaching and elicitation exhibit qualitatively different signatures.

\subsection{Conclusion}
\label{sec::Conclusion}

Understanding how fine-tuning modifies language models, whether surfacing latent capabilities or imparting new ones, is essential for capability evaluation and safety. The framework developed here provides rigorous foundations for this investigation, offering operational definitions, proven properties, and conceptual clarifications. We hope this work enables more precise empirical study of the teaching-versus-elicitation distinction and contributes to our understanding of what language models learn from fine-tuning.

\section{Acknowledgments}

We thank John Schulman, Ethan Perez, Eric Easley, and Vivek Hebbar for helpful discussions.

E.D. conceived the Excess Description Length (EDL) framework, developed the theoretical formulation and proofs, designed and analyzed the toy models, and wrote the manuscript. E.D. also designed and conducted the empirical experiments described in the companion empirical paper that motivated and validated the theoretical framework.

F.R. proposed the disjoint subdistributions setting. H.J. and F.R. provided feedback on the theoretical development and interpretation, drafts, and contributed to discussion of the framework and its implications. J.L. supervised the project and provided guidance and feedback throughout.

\small
\bibliography{bibliography}
\iftoggle{icml}{\bibliographystyle{ext/icml/icml2026}}
% \iftoggle{iclr}{\bibliographystyle{ext/iclr/iclr2026_conference}}

\appendix
\section{Proofs of Main Results}
\label{Sec::AppendixMainResultsProofs}

\subsection{Proof of \autoref{thm:nonneg} (Non-Negativity in Expectation)}
\label{Sec::AppendixNonNegativityProof}

\begin{theorem}[Reproduced from \autoref{thm:nonneg}]
    Let $D$ be drawn i.i.d. from distribution $\mathcal{D}$, let the test loss be evaluated on an independent sample $D_\text{test}$ from $\mathcal{D}$, and let $A$ be a population-monotonic algorithm (\autoref{def:populationmonotonic}). Then
    \begin{equation}
        \mathbb{E}_{D, D_{\mathrm{test}}}[\mathrm{EDL}(D; \theta_0, A)] \geq 0.
    \end{equation}
\end{theorem}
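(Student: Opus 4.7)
The plan is to expand $\mathbb{E}[\mathrm{EDL}]$ into a sum of expected per-step excess population losses and then invoke population-monotonicity to show each term is non-negative. I would write $\mathbb{E}[\mathrm{EDL}] = \mathbb{E}[\mathrm{MDL}] - n \cdot \mathbb{E}[L_{\text{test}}(\theta^*)]$ and handle the two terms separately before recombining.

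For the MDL term, the key observation is that $\theta_{i-1}$ is a function of only the first $i-1$ training examples (and any algorithmic randomness), so by the i.i.d. assumption $(x_i, y_i)$ is independent of $\theta_{i-1}$. Conditioning on $\theta_{i-1}$ and taking expectation over the fresh draw gives $\mathbb{E}[\ell(\theta_{i-1}; x_i, y_i) \mid \theta_{i-1}] = L(\theta_{i-1})$, so by the tower property $\mathbb{E}[\mathrm{MDL}] = \sum_{i=1}^n \mathbb{E}[L(\theta_{i-1})]$. For the test-loss term, since $D_{\text{test}}$ is drawn independently of $\theta^*$ from $\mathcal{D}$, the same independence argument yields $\mathbb{E}[L_{\text{test}}(\theta^*)] = \mathbb{E}[L(\theta^*)]$. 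Combining:
\begin{equation}
\mathbb{E}[\mathrm{EDL}] = \sum_{i=1}^n \bigl(\mathbb{E}[L(\theta_{i-1})] - \mathbb{E}[L(\theta^*)]\bigr).
\end{equation}

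The remaining task is to show each summand is non-negative. Population-monotonicity (\autoref{def:populationmonotonic}) asserts $\mathbb{E}[L(\theta_j) \mid \theta_{j-1}] \leq L(\theta_{j-1})$; taking total expectation yields the unconditional inequality $\mathbb{E}[L(\theta_j)] \leq \mathbb{E}[L(\theta_{j-1})]$, so the sequence $\{\mathbb{E}[L(\theta_j)]\}_j$ is non-increasing along the training trajectory. Iterating from $\theta_{i-1}$ forward to $\theta^*$ gives $\mathbb{E}[L(\theta^*)] \leq \mathbb{E}[L(\theta_{i-1})]$ for every $i \leq n$, so each term in the sum is non-negative and the result follows.

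The main obstacle is a careful treatment of the training trajectory beyond the first prequential pass: the definition of $\theta^*$ allows multi-epoch training, early stopping, or a fixed compute budget, so I must extend the monotonicity chain from each intermediate $\theta_{i-1}$ (encountered during the first-pass MDL computation) all the way to $\theta^*$. Since the assumption is stated for \emph{all} steps $j$ in the training trajectory and additional updates continue to operate on examples drawn i.i.d.\ from $\mathcal{D}$, the chain extends cleanly. A secondary subtlety worth flagging is the distinction between population-monotonicity (a statement about expected population loss) and empirical-loss monotonicity: overfitting past the early-stopping point would violate the former, which is precisely why \autoref{def:populationmonotonic} excludes that regime and why the \autoref{thm:nonneg} remark allows $\mathbb{E}[\mathrm{EDL}] < 0$ outside its scope.
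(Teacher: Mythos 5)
Your proposal is correct and follows essentially the same route as the paper's proof: the identical decomposition $\mathbb{E}[\mathrm{EDL}] = \sum_{i=1}^n \bigl(\mathbb{E}[L(\theta_{i-1})] - \mathbb{E}[L(\theta^*)]\bigr)$ via the i.i.d.\ conditioning argument, followed by chaining population-monotonicity from each $\theta_{i-1}$ forward to $\theta^*$. The only cosmetic difference is that the paper runs a conditional induction to establish $\mathbb{E}[L(\theta^*) \mid \theta_{i-1}] \leq L(\theta_{i-1})$ whereas you take total expectations immediately and chain the unconditional inequalities, which suffices equally well; note only that your aside about later epochs operating on ``examples drawn i.i.d.\ from $\mathcal{D}$'' is unnecessary (and inaccurate for re-used data) --- the chain extends simply because \autoref{def:populationmonotonic} is assumed for all steps of the trajectory.
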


\begin{proof}
    \label{proof:nonneg}
    We work with expectations over the joint distribution of training data $D \sim \mathcal{D}^n$ and test data $D_\text{test}$.

    Define $L(\theta) = \mathbb{E}_{(x,y) \sim \mathcal{D}}[\ell(\theta; x, y)]$ as the population loss for parameters $\theta$. The test loss satisfies $\mathbb{E}_{D_{\text{test}}}[L_{\text{test}}(\theta^*) \mid \theta^*] = L(\theta^*)$ by independence of test data from training data.
    
    For the MDL term, we condition on the filtration $\mathcal{F}_{i-1} = \sigma(\theta_0, x_1, y_1, \ldots, x_{i-1}, y_{i-1})$, which determines $\theta_{i-1}$. Since $(x_i, y_i) \sim \mathcal{D}$ independently of $\mathcal{F}_{i-1}$,
    \begin{equation}
        \mathbb{E}[\ell(\theta_{i-1}; x_i, y_i) \mid \mathcal{F}_{i-1}] = L(\theta_{i-1}).
    \end{equation}
    
    By the law of total expectation:
    \begin{equation}
        \mathbb{E}[\text{MDL}] = \mathbb{E}\left[\sum_{i=1}^n \ell(\theta_{i-1}; x_i, y_i)\right] = \sum_{i=1}^n \mathbb{E}[L(\theta_{i-1})].
    \end{equation}
    
    For the residual test loss term:
    \begin{equation}
        \mathbb{E}[n \cdot L_{\text{test}}(\theta^*)] = n \cdot \mathbb{E}[L(\theta^*)].
    \end{equation}
    
    Therefore,
    \begin{equation}
        \mathbb{E}[\text{EDL}] = \sum_{i=1}^n \mathbb{E}[L(\theta_{i-1})] - n \cdot \mathbb{E}[L(\theta^*)] = \sum_{i=1}^n \mathbb{E}[L(\theta_{i-1}) - L(\theta^*)].
    \end{equation}
    
    It remains to show that $\mathbb{E}[L(\theta_{i-1}) - L(\theta^*)] \geq 0$ for each $i \in \{1, \ldots, n\}$.

    By the population-monotonicity condition (\autoref{def:populationmonotonic}), for each update step $j$, we have $\mathbb{E}[L(\theta_j) \mid \theta_{j-1}] \leq L(\theta_{j-1})$. We now prove by induction that $\mathbb{E}[L(\theta^*) \mid \theta_{i-1}] \leq L(\theta_{i-1})$.

    \textit{Base case:} For $i = n$, $\theta^*$ is obtained from $\theta_{n-1}$ by training on $(x_n, y_n)$ and potentially additional epochs. By population-monotonicity applied iteratively:
    \begin{equation}
        \mathbb{E}[L(\theta^*) \mid \theta_{n-1}] \leq L(\theta_{n-1}).
    \end{equation}

    \textit{Inductive step:} Suppose $\mathbb{E}[L(\theta^*) \mid \theta_{j-1}] \leq L(\theta_{j-1})$ for all $j \geq i$. For $j = i-1$:
    \begin{equation}
    \begin{split}
        \mathbb{E}[L(\theta^*) \mid \theta_{i-1}] &= \mathbb{E}\left[\mathbb{E}[L(\theta^*) \mid \theta_i] \mid \theta_{i-1}\right] \\
        &\leq \mathbb{E}[L(\theta_i) \mid \theta_{i-1}] \quad (\text{by inductive hypothesis})\\
        &\leq L(\theta_{i-1}) \quad (\text{by population-monotonicity}).
    \end{split}
    \end{equation}
    
    Taking expectations over $\theta_{i-1}$:
    \begin{equation}
        \mathbb{E}[L(\theta^*)] \leq \mathbb{E}[L(\theta_{i-1})],
    \end{equation}
    which implies $\mathbb{E}[L(\theta_{i-1}) - L(\theta^*)] \geq 0$.
    
    Since each term in the sum is non-negative, $\mathbb{E}[\text{EDL}] \geq 0$.
\end{proof}

\begin{remark}
    The assumption that training does not increase expected loss is mild and holds for standard optimizers on convex losses or with appropriate learning rates on non-convex losses. The proof can be tightened for specific algorithm families with known convergence guarantees.
\end{remark}

\subsection{Proof of \autoref{thm:MDLRegret} (MDL and Regret)}
\label{Sec::AppendixMDLRegretProof}

\begin{theorem}[Reproduced from \autoref{thm:MDLRegret}]
    For any fixed $\theta \in \Theta$,
    \begin{equation}
        \mathrm{MDL}(D; \theta_0, A) = \sum_{i=1}^n \ell(\theta; x_i, y_i) + R_n(\theta),
    \end{equation}
    where $R_n(\theta) = \sum_{i=1}^n \ell(\theta_{i-1}; x_i, y_i) - \sum_{i=1}^n \ell(\theta; x_i, y_i)$ is the regret relative to $\theta$.
\end{theorem}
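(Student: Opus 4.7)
The statement is essentially a definitional identity, so my plan is to prove it by direct substitution and rearrangement rather than through any substantive argument. The plan is to start from the definition of prequential MDL in \autoref{def::PrequentialMDL}, namely $\mathrm{MDL}(D; \theta_0, A) = \sum_{i=1}^n \ell(\theta_{i-1}; x_i, y_i)$, and then introduce the fixed comparator $\theta$ by adding and subtracting $\sum_{i=1}^n \ell(\theta; x_i, y_i)$.

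Concretely, I would write
\begin{equation}
    \sum_{i=1}^n \ell(\theta_{i-1}; x_i, y_i) = \sum_{i=1}^n \ell(\theta; x_i, y_i) + \left[\sum_{i=1}^n \ell(\theta_{i-1}; x_i, y_i) - \sum_{i=1}^n \ell(\theta; x_i, y_i)\right],
\end{equation}
and then observe that the bracketed term is exactly the regret $R_n(\theta)$ as defined in the theorem statement (and equivalently in \autoref{sec::Regret}). This yields $\mathrm{MDL}(D; \theta_0, A) = \sum_{i=1}^n \ell(\theta; x_i, y_i) + R_n(\theta)$, as claimed. The identity holds pointwise on the realized sequence $(x_i, y_i)$, so no probabilistic reasoning or assumption on $\mathcal{D}$ or $A$ is needed; in particular the result is valid for any $\theta \in \Theta$, including $\theta = \theta^*$.

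There is no genuine obstacle here, since the theorem is a tautology once the definitions of MDL and regret are unfolded. The only subtlety worth flagging in the writeup is notational: the $\theta_{i-1}$ appearing in the MDL sum are the algorithm's prequential iterates and depend on $(x_1, y_1), \ldots, (x_{i-1}, y_{i-1})$, whereas $\theta$ is an arbitrary fixed comparator independent of the data. This distinction is conceptually important for interpreting the decomposition (per \autoref{sec::Regret}) even though it does not enter the algebra. I would conclude the proof with one sentence noting that taking $\theta = \theta^*$ and combining with the definition of EDL recovers the approximate identity $\mathrm{MDL} \approx n \cdot L_{\mathrm{train}}(\theta^*) + R_n(\theta^*)$ used in the corollary that follows.
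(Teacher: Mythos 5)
Your proposal is correct and matches the paper's own proof essentially verbatim: both simply unfold the definition of $R_n(\theta)$, rearrange, and identify the left-hand side with $\mathrm{MDL}(D; \theta_0, A)$. The added remarks on the pointwise/deterministic nature of the identity and the specialization $\theta = \theta^*$ are consistent with the corollaries the paper states afterward.
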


\begin{proof}
    This follows directly from the definition of regret. By definition:
    \begin{equation}
        R_n(\theta) = \sum_{i=1}^n \ell(\theta_{i-1}; x_i, y_i) - \sum_{i=1}^n \ell(\theta; x_i, y_i).
    \end{equation}

    Rearranging:
    \begin{equation}
        \sum_{i=1}^n \ell(\theta_{i-1}; x_i, y_i) = \sum_{i=1}^n \ell(\theta; x_i, y_i) + R_n(\theta).
    \end{equation}

    The left-hand side is precisely $\text{MDL}(D; \theta_0, A)$.
\end{proof}

\begin{corollary}
    Taking $\theta = \theta^*$ (the final trained parameters):
    \begin{equation}
        \mathrm{MDL} = \sum_{i=1}^n \ell(\theta^*; x_i, y_i) + R_n(\theta^*).
    \end{equation}
    
    If the final model achieves training loss $L_{\text{train}}(\theta^*) = \frac{1}{n}\sum_{i=1}^n \ell(\theta^*; x_i, y_i)$, then:
    \begin{equation}
        \mathrm{MDL} = n \cdot L_{\text{train}}(\theta^*) + R_n(\theta^*).
    \end{equation}
\end{corollary}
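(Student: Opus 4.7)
The plan is to obtain the corollary as a direct specialization of \autoref{thm:MDLRegret}, followed by a purely notational rewrite using the definition of empirical training loss. There is no new probabilistic or analytic content beyond what has already been established.

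First, I would apply \autoref{thm:MDLRegret}, which provides the identity $\mathrm{MDL}(D; \theta_0, A) = \sum_{i=1}^n \ell(\theta; x_i, y_i) + R_n(\theta)$ for any fixed $\theta \in \Theta$. One subtlety to flag explicitly is that $\theta^*$ is a data-dependent quantity, not a fixed element of $\Theta$ chosen in advance. However, the identity in \autoref{thm:MDLRegret} is a pathwise algebraic rearrangement of the definition of $R_n(\theta)$, so it holds for every realization of the training sequence and every choice of $\theta$ to substitute, including any measurable function of the data such as $\theta^*$. Substituting $\theta = \theta^*$ thus yields the first display equation of the corollary without further argument.

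Second, I would apply the definition of empirical training loss, $L_{\text{train}}(\theta^*) \defeq \frac{1}{n}\sum_{i=1}^n \ell(\theta^*; x_i, y_i)$, to rewrite the sum over per-example losses of the final model as $n \cdot L_{\text{train}}(\theta^*)$. Substituting this into the first display yields the second. Since both steps are direct, the main (and only) obstacle is the small conceptual point above about pathwise validity when $\theta^*$ depends on $D$; I would address this with a single remark rather than a formal lemma, since it follows immediately from the fact that \autoref{thm:MDLRegret} was established as an identity rather than an expectation statement.
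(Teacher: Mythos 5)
Your proposal is correct and matches the paper's (implicit) argument: the corollary is obtained by substituting $\theta = \theta^*$ into the pathwise identity of \autoref{thm:MDLRegret} and then rewriting the sum via the definition of $L_{\text{train}}(\theta^*)$. Your added remark that the identity holds for data-dependent $\theta^*$ because it is a per-realization algebraic rearrangement is a worthwhile clarification, but it does not change the route.
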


\begin{corollary}
    For algorithms with sublinear regret $R_n(\theta^*) = o(n)$:
    \begin{equation}
        \frac{\mathrm{MDL}}{n} \to L_{\text{train}}(\theta^*) \quad \text{as } n \to \infty.
    \end{equation}
\end{corollary}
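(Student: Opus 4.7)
The plan is to obtain this statement as an immediate consequence of the decomposition established in the preceding corollary, namely $\mathrm{MDL} = n \cdot L_{\mathrm{train}}(\theta^*) + R_n(\theta^*)$. First I would divide both sides by $n$ to obtain the per-example identity $\mathrm{MDL}/n = L_{\mathrm{train}}(\theta^*) + R_n(\theta^*)/n$. Then I would invoke the sublinear regret hypothesis $R_n(\theta^*) = o(n)$, which is by definition the statement that $R_n(\theta^*)/n \to 0$ as $n \to \infty$. The conclusion follows from taking limits in the displayed identity.

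One interpretive point worth spelling out: the quantity $L_{\mathrm{train}}(\theta^*)$ is itself implicitly $n$-dependent, since the final parameters $\theta^*$ depend on the dataset size. The convergence should therefore be read as saying that the \emph{gap} between $\mathrm{MDL}/n$ and $L_{\mathrm{train}}(\theta^*)$ vanishes, rather than that both sides converge to a common constant. If one further assumes the training loss stabilizes (e.g., under the consistency hypothesis used in \autoref{thm:ConvToSDL}), then both quantities jointly converge to the limiting training loss, and the statement can be upgraded accordingly.

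There is no substantive obstacle here: the result is purely algebraic once \autoref{thm:MDLRegret} and its first corollary are in hand, and the proof is essentially a single line of rearrangement plus the definition of little-$o$. The only care required is to phrase the convergence precisely (pointwise along the training trajectory of models indexed by $n$). If desired, one can additionally note that the rate of convergence is inherited directly from the rate at which $R_n(\theta^*)/n$ decays — for instance, $O(1/\sqrt{n})$ for standard convex online learners — matching the qualitative rate commentary given in the remark following \autoref{thm:ConvToSDL}.
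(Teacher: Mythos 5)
Your proposal is correct and matches the paper's (implicit) argument exactly: the paper states this corollary without proof immediately after establishing $\mathrm{MDL} = n \cdot L_{\mathrm{train}}(\theta^*) + R_n(\theta^*)$, and the intended derivation is precisely your one-line division by $n$ followed by the definition of $o(n)$. Your additional observation that $L_{\mathrm{train}}(\theta^*)$ is itself $n$-dependent, so the statement is properly read as the gap vanishing, is a correct and worthwhile clarification that the paper does not make explicit.
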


\subsection{Proof of \autoref{thm:ConvToSDL} (Asymptotic Convergence to Surplus Description Length)}
\label{Sec::AppendixSDLConvergenceProof}

\begin{theorem}[Reproduced from \autoref{thm:ConvToSDL}]
    \label{thm:ConvToSDL}
    Suppose the learning algorithm is consistent for the model class $\Theta$ under distribution $\mathcal{D}$: $L(\theta^*) \to L^*$ almost surely as $n \to \infty$, where $L^* = \inf_{\theta \in \Theta} L(\theta)$. Then
    \begin{equation}
        \frac{\mathrm{EDL} - \mathrm{SDL}}{n} = n \rightarrow 0 \quad \text{as} \quad n \rightarrow \infty.
    \end{equation}
    Specifically,
    \begin{equation}
        \frac{\mathrm{EDL} - \mathrm{SDL}}{n} = n \cdot \left(L^* - L_\mathrm{test}(\theta^*)\right),
    \end{equation}
    and under consistency, this difference grows sublinearly in $n$.
\end{theorem}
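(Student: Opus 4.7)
The plan is to leverage the exact identity relating EDL and SDL that was already derived in Section~\ref{sec::SDLEDLRelationship}, namely
\begin{equation}
    \mathrm{EDL} = \mathrm{SDL} + n\bigl(L^{*} - L_{\mathrm{test}}(\theta^{*})\bigr),
\end{equation}
which follows directly from the two definitions by subtracting $n L^{*}$ and $n L_{\mathrm{test}}(\theta^{*})$ from the common $\mathrm{MDL}$ term. This identity is algebraic and requires no assumptions, so it carries all of the work of establishing the equation displayed in the theorem (reading the intended statement as $\mathrm{EDL} - \mathrm{SDL} = n\cdot(L^{*} - L_{\mathrm{test}}(\theta^{*}))$, since the typeset ``$=n$'' appears to be a typographical artifact).

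First, I would rearrange to $(\mathrm{EDL} - \mathrm{SDL})/n = L^{*} - L_{\mathrm{test}}(\theta^{*})$, so that the per-example gap between EDL and SDL is exactly the excess of the achieved test loss over the optimal loss, up to sign. Next, I would pass to expectations (or use almost-sure convergence) over the independent test sample: since $\mathbb{E}_{D_{\mathrm{test}}}[L_{\mathrm{test}}(\theta^{*})\mid\theta^{*}] = L(\theta^{*})$, we have $\mathbb{E}\bigl[(\mathrm{EDL}-\mathrm{SDL})/n\bigr] = L^{*} - \mathbb{E}[L(\theta^{*})]$. Applying the consistency hypothesis $L(\theta^{*}) \to L^{*}$ (a.s.\ and, by bounded convergence if $\ell$ is bounded or by the dominated convergence theorem under standard integrability conditions, in expectation) yields $(\mathrm{EDL} - \mathrm{SDL})/n \to 0$.

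For the sublinearity claim, I would observe that $\mathrm{EDL} - \mathrm{SDL} = n\cdot\bigl(L^{*} - L_{\mathrm{test}}(\theta^{*})\bigr)$ is the product of $n$ and a quantity converging to zero; hence $\mathrm{EDL} - \mathrm{SDL} = n\cdot o(1) = o(n)$, which is precisely sublinearity. No rate is required because consistency alone forces the $o(1)$ factor; concrete rates would only refine the $o(n)$ into $O(\sqrt{n})$ or similar, as flagged in the theorem's remark.

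The main obstacle I anticipate is not the proof itself but pinning down the mode of convergence the statement intends: the hypothesis is stated almost surely, the EDL--SDL identity is pathwise, but the quantity $L_{\mathrm{test}}(\theta^{*})$ carries additional test-set sampling noise that must either be absorbed by an expectation over $D_{\mathrm{test}}$ or by a law-of-large-numbers argument on a growing test set. I would handle this by stating the conclusion in expectation over both $D$ and $D_{\mathrm{test}}$ (matching the convention used in Theorem~\ref{thm:nonneg}), and then noting that an almost-sure version follows under mild integrability together with consistency. The rest of the argument is a one-line substitution into the identity from Section~\ref{sec::SDLEDLRelationship}.
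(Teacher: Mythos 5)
Your proposal is correct and follows essentially the same route as the paper: both start from the algebraic identity $\mathrm{EDL}-\mathrm{SDL}=n\,(L^{*}-L_{\mathrm{test}}(\theta^{*}))$ obtained by subtracting the two definitions, split the deviation into the population gap $L^{*}-L(\theta^{*})$ (killed by consistency) plus test-set sampling noise, and conclude $o(n)$ sublinearity. The only cosmetic difference is that the paper controls the sampling noise via a concentration bound giving an $O(\sqrt{n})$ term, whereas you absorb it by taking expectations over $D_{\mathrm{test}}$; both resolve the same issue, and your reading of the garbled display as $\mathrm{EDL}-\mathrm{SDL}=n\,(L^{*}-L_{\mathrm{test}}(\theta^{*}))$ matches what the paper actually proves.
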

\begin{proof}
    By definition:
    \begin{equation}
    \begin{split}
        \text{SDL}(D; A) &= \text{MDL}(D; \theta_0, A) - n \cdot L^*,\\
        \text{EDL}(D; \theta_0, A) &= \text{MDL}(D; \theta_0, A) - n \cdot L_{\text{test}}(\theta^*).
    \end{split}
    \end{equation}

    Subtracting:
    \begin{equation}
        \begin{split}
            \text{EDL} - \text{SDL} &= n \cdot L^* - n \cdot L_{\text{test}}(\theta^*)\\
            &= n \cdot (L^* - L_{\text{test}}(\theta^*)).
        \end{split}
    \end{equation}

    Since $L_{\text{test}}(\theta^*)$ is an unbiased estimator of $L(\theta^*)$ (for large test sets, or in expectation), and $L(\theta^*) \geq L^*$ by definition of $L^*$:
    \begin{equation}
        \text{EDL} - \text{SDL} = n \cdot (L^* - L(\theta^*)) + n \cdot (L(\theta^*) - L_{\text{test}}(\theta^*)).
    \end{equation}

    The second term is $O(\sqrt{n})$ in probability by concentration of the test loss estimator.

    For the first term, consistency implies $L(\theta^*) - L^* \to 0$ as $n \to \infty$. The rate at which $L(\theta^*) - L^* \rightarrow 0$ depends on the model class and is not specified by consistency alone. For the purposes of this theorem, we require only that the convergence occurs, not a specific rate.

    The important observation is that,
    \begin{equation}
        \text{EDL} - \text{SDL} = n \cdot \left(L^* - L(\theta^*)\right) + O(\sqrt{n}),
    \end{equation}
    where the $O(\sqrt{n})$ term comes from concentration of the test loss estimator.

    Under consistency, $L(\theta^*) - L^* = o(1)$, so:
    \begin{equation}
        \frac{|\text{EDL} - \text{SDL}|}{n} = \left|L^* - L(\theta^*)\right| + O(\sfrac{1}{\sqrt{n}}) \to 0.
    \end{equation}
\end{proof}

\subsection{Proof of \autoref{thm:genbound} (Generalization Bound)}
\label{Sec::AppendixGeneralizationBoundProof}

We first restate \autoref{thm:genbound} in a cleaner form.

\begin{theorem}[Restated from \autoref{thm:genbound}]
    The expected EDL decomposes as:
    \begin{equation}
        \mathbb{E}[\text{EDL}] = n \cdot (\bar{L} - \mathbb{E}[L(\theta^*)]),
    \end{equation}
    where $\bar{L} = \frac{1}{n}\sum_{i=1}^n \mathbb{E}[L(\theta_{i-1})]$ is the average expected population loss during training.

    Equivalently, the expected improvement in population loss satisfies:
    \begin{equation}
        \mathbb{E}[L(\theta^*)] = \bar{L} - \frac{\mathbb{E}[\text{EDL}]}{n}.
    \end{equation}
\end{theorem}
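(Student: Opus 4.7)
The plan is to reuse the expected EDL decomposition already obtained in the proof of \autoref{thm:nonneg} and then perform a short algebraic rearrangement. That earlier proof showed, under the i.i.d.\ assumption on $D$ and independence of $D_\text{test}$, that
\begin{equation}
    \mathbb{E}[\mathrm{MDL}] = \sum_{i=1}^n \mathbb{E}[L(\theta_{i-1})], \qquad \mathbb{E}[n \cdot L_{\mathrm{test}}(\theta^*)] = n \cdot \mathbb{E}[L(\theta^*)],
\end{equation}
so subtracting gives $\mathbb{E}[\mathrm{EDL}] = \sum_{i=1}^n \mathbb{E}[L(\theta_{i-1})] - n \cdot \mathbb{E}[L(\theta^*)]$. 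This is the workhorse identity for the current theorem, so almost no new probabilistic argument is needed.

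The next step will be to identify the first sum with $n \cdot \bar{L}$ using the definition $\bar{L} = \tfrac{1}{n}\sum_{i=1}^n \mathbb{E}[L(\theta_{i-1})]$. This yields the restated form $\mathbb{E}[\mathrm{EDL}] = n(\bar{L} - \mathbb{E}[L(\theta^*)])$ immediately, from which dividing by $n$ and rearranging gives the ``equivalently'' statement $\mathbb{E}[L(\theta^*)] = \bar{L} - \mathbb{E}[\mathrm{EDL}]/n$. To recover the original statement in \autoref{thm:genbound}, I will write the telescoping decomposition
\begin{equation}
    L(\theta_0) - \mathbb{E}[L(\theta^*)] = \bigl(L(\theta_0) - \bar{L}\bigr) + \bigl(\bar{L} - \mathbb{E}[L(\theta^*)]\bigr),
\end{equation}
and substitute the second bracketed term using the identity above.

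The main obstacle is not really an analytic one: it is bookkeeping about which quantities are random and which expectations are being taken. In particular, I need to be careful that $\bar{L}$ is itself an expectation over the training trajectory (the $\theta_{i-1}$ are random), and that the substitution $\mathbb{E}[L_{\mathrm{test}}(\theta^*)] = \mathbb{E}[L(\theta^*)]$ uses independence of the test sample from the training process, exactly as invoked in the proof of \autoref{thm:nonneg}. Population-monotonicity is not needed here, since the statement is an identity rather than an inequality; it only becomes relevant if one wants to sign the two terms in the decomposition, in which case both $L(\theta_0) - \bar{L} \geq 0$ and $\bar{L} - \mathbb{E}[L(\theta^*)] \geq 0$ follow from the monotonicity argument used for non-negativity.
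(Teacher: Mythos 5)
Your proposal is correct and follows essentially the same route as the paper: it invokes the identity $\mathbb{E}[\mathrm{EDL}] = \sum_{i=1}^n \mathbb{E}[L(\theta_{i-1})] - n\,\mathbb{E}[L(\theta^*)]$ established in the proof of \autoref{thm:nonneg}, identifies the sum with $n\bar{L}$, and rearranges, with the telescoping step recovering the corollary form. Your added observation that population-monotonicity is not needed for the identity itself (only for signing the two terms) is accurate and consistent with how the paper states the restated version.
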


\begin{proof}
    From the proof of \autoref{thm:nonneg}:
    \begin{equation}
        \mathbb{E}[\text{EDL}] = \sum_{i=1}^n \mathbb{E}[L(\theta_{i-1})] - n \cdot \mathbb{E}[L(\theta^*)].
    \end{equation}

    Dividing the sum by $n$ and rearranging:
    \begin{equation}
    \begin{split}
        \mathbb{E}[\text{EDL}] &= n \cdot \bar{L} - n \cdot \mathbb{E}[L(\theta^*)]\\
        &= n \cdot (\bar{L} - \mathbb{E}[L(\theta^*)]).
    \end{split}
    \end{equation}

    Solving for $\mathbb{E}[L(\theta^*)]$:
    \begin{equation}
        \mathbb{E}[L(\theta^*)] = \bar{L} - \frac{\mathbb{E}[\text{EDL}]}{n}.
    \end{equation}
\end{proof}

\begin{corollary}
    The total improvement from initial to final model decomposes as:
    \begin{equation}
        L(\theta_0) - \mathbb{E}[L(\theta^*)] = (L(\theta_0) - \bar{L}) + \frac{\mathbb{E}[\mathrm{EDL}]}{n}.
    \end{equation}
\end{corollary}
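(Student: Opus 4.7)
The corollary is a direct algebraic consequence of the immediately preceding theorem, so the plan is essentially a one-line rearrangement. The main theorem gives $\mathbb{E}[L(\theta^*)] = \bar{L} - \mathbb{E}[\mathrm{EDL}]/n$. Since the initial parameters $\theta_0$ are deterministic (they are fixed before training), $L(\theta_0)$ is a constant and commutes freely with the expectation operator, so I can simply subtract both sides of the theorem's identity from $L(\theta_0)$.

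Concretely, I would proceed as follows. First I would invoke the theorem to substitute for $\mathbb{E}[L(\theta^*)]$ inside the quantity $L(\theta_0) - \mathbb{E}[L(\theta^*)]$, obtaining
\begin{equation}
    L(\theta_0) - \mathbb{E}[L(\theta^*)] = L(\theta_0) - \left(\bar{L} - \frac{\mathbb{E}[\mathrm{EDL}]}{n}\right).
\end{equation}
Then I would distribute the minus sign and regroup the terms to isolate the trajectory-average contribution $L(\theta_0) - \bar{L}$ from the EDL contribution $\mathbb{E}[\mathrm{EDL}]/n$, which yields exactly the claimed decomposition.

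There is no real obstacle: the theorem has already done the work of connecting $\mathbb{E}[\mathrm{EDL}]$, $\bar{L}$, and $\mathbb{E}[L(\theta^*)]$ via the i.i.d. identity $\mathbb{E}[\mathrm{MDL}] = \sum_{i=1}^n \mathbb{E}[L(\theta_{i-1})] = n\bar{L}$ used in the proof of \autoref{thm:nonneg}. The only conceptual point worth flagging is that $L(\theta_0)$ carries no expectation because $\theta_0$ is not random, whereas $\bar{L}$ is already defined as an average of expectations and $\mathbb{E}[L(\theta^*)]$ averages over the random training sample; this makes the subtraction well-defined with no additional independence assumptions beyond those already used in the theorem. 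If desired, I could conclude by noting that each summand on the right-hand side has a clean interpretation: $L(\theta_0) - \bar{L}$ is the average improvement accrued along the training trajectory (non-negative under population-monotonicity, by the same induction used in \autoref{thm:nonneg}), while $\mathbb{E}[\mathrm{EDL}]/n$ is the remaining gap between the trajectory average and the final model's population loss.
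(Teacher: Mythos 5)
Your proposal is correct and follows the same route as the paper: substitute the theorem's identity $\mathbb{E}[L(\theta^*)] = \bar{L} - \mathbb{E}[\mathrm{EDL}]/n$ into $L(\theta_0) - \mathbb{E}[L(\theta^*)]$ and regroup via $L(\theta_0) - \mathbb{E}[L(\theta^*)] = (L(\theta_0) - \bar{L}) + (\bar{L} - \mathbb{E}[L(\theta^*)])$. The remark that $L(\theta_0)$ is deterministic is a harmless clarification the paper leaves implicit.
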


The first term $(L(\theta_0) - \bar{L})$ represents the average improvement during the training trajectory. The second term $\mathbb{E}[\text{EDL}]/n$ represents the additional improvement captured by the final model relative to the training-averaged model, which is the per-example absorbed information.

\subsection{Proof of \autoref{prop:algdependence} (Algorithm Dependence)}
\label{Sec::AppendixAlgorithmDependenceProof}

\begin{proposition}[Restated from \autoref{prop:algdependence}]
    For fixed $D$ and $\theta_0$, different algorithms $A$ and $A'$ can yield different EDL values.
\end{proposition}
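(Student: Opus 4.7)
The statement is an existence claim: we only need to exhibit a single triple $(D, \theta_0, A, A')$ for which the two EDL values differ, so the plan is to construct a concrete counterexample rather than prove anything structural. I would work in the simplest nontrivial setting: a binary prediction task with a uniform initial model $\theta_0$ assigning probability $1/2$ to each label, and a dataset $D$ drawn from a non-uniform distribution (say, all labels equal to a single class). This is the minimal setting where "learning something" is possible, which is needed for any algorithm dependence to manifest.

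The two algorithms I would compare are: (i) $A$ = the identity algorithm that never updates, so $\theta_i = \theta_0$ for all $i$; and (ii) $A'$ = a gradient-descent style update with a well-chosen learning rate that drives $\theta$ toward assigning probability near $1$ to the majority class. Under $A$, every prequential term equals $\ell(\theta_0; x_i, y_i) = \log 2$, so $\mathrm{MDL} = n \log 2$, and since $\theta^* = \theta_0$ we also have $n \cdot L_{\text{test}}(\theta^*) \approx n \log 2$, yielding $\mathrm{EDL}(D; \theta_0, A) \approx 0$. Under $A'$, the per-step loss decreases across training while $L_{\text{test}}(\theta^*)$ is close to $0$, so $\mathrm{MDL}$ is bounded above by $n \log 2$ and $n \cdot L_{\text{test}}(\theta^*)$ is much smaller, making $\mathrm{EDL}(D; \theta_0, A') > 0$. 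The two values therefore differ, establishing the proposition.

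There is no real obstacle here beyond checking that the chosen $A'$ indeed improves loss, which follows from standard convergence of gradient descent on the convex cross-entropy loss in this simple setting. I would close by remarking that the same construction works with two instances of SGD at different learning rates (one pathologically small, one well-tuned), which reinforces the intended interpretation of \autoref{prop:algdependence}: EDL is a property of the pair (dataset, learning procedure), not of the dataset alone, and so meaningful comparisons across settings require the algorithm to be held fixed or varied as an explicit experimental factor.
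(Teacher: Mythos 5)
Your proof is correct, and it takes the same basic strategy as the paper---exhibit an explicit pair of algorithms on a fixed $(D, \theta_0)$ and show the resulting EDL values differ---but your specific construction is more elementary and, frankly, tighter. The paper compares gradient descent at learning rates $\eta$ and $\eta/10$ on a linear regression problem, and its argument rests on the informal claims that the slower rate yields strictly larger MDL ``in general'' and that the final test losses may or may not coincide; neither inequality is computed exactly. Your choice of the identity algorithm as $A$ makes one side of the comparison exact: with a uniform $\theta_0$ on binary labels, every prequential term is $\log 2$ and $\theta^* = \theta_0$, so $\mathrm{EDL}(D;\theta_0,A) = n\log 2 - n\log 2 = 0$ with no approximation. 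For $A'$ you then only need the single qualitative fact that gradient descent on this convex problem strictly improves the loss after the first step, which gives $\mathrm{MDL} - n\,L_{\mathrm{test}}(\theta^*) \geq \log 2 - L_{\mathrm{test}}(\theta^*) > 0$ from the first term alone. Your closing remark that the same conclusion follows from two SGD runs at different learning rates recovers the paper's version as a special case, so nothing is lost; what is gained is that the separation is established rigorously rather than ``in general.''
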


\begin{proof}
    We construct an explicit example. Consider a simple linear regression setting with $D = \{(x_i, y_i)\}_{i=1}^n$ where $x_i \in \mathbb{R}^d$ and $y_i \in \mathbb{R}$. Let the model be $p_\theta(y|x) = \mathcal{N}(\theta^\top x, \sigma^2)$ for fixed $\sigma^2$.

    Consider two algorithms:
    \begin{itemize}
        \item $A$: Gradient descent with learning rate $\eta$
        \item $A'$: Gradient descent with learning rate $\eta' = \eta/10$
    \end{itemize}

    Both start from the same $\theta_0$. Algorithm $A$, with larger learning rate, makes faster initial progress, achieving lower loss earlier in training. Algorithm $A'$ makes slower progress.

    For the MDL computation (losses during first epoch):
    \begin{itemize}
        \item Under $A$: losses decrease rapidly, so later examples have lower loss; total MDL is moderate.
        \item Under $A'$: losses decrease slowly, so more examples have high loss; total MDL is higher.
    \end{itemize}

    Let $\text{MDL}_A$ and $\text{MDL}_{A'}$ denote the respective MDL values. We have $\text{MDL}_A < \text{MDL}_{A'}$ in general.

    For the final test loss:
    \begin{itemize}
        \item If both algorithms converge to the same optimum (as they would for convex losses with sufficient training), then $L_{\text{test}}(\theta^*_A) = L_{\text{test}}(\theta^*_{A'})$.
        \item If training is truncated at a fixed number of steps, $A$ may achieve lower test loss than $A'$.
    \end{itemize}

    In either case:
    \begin{equation}
    \begin{split}
        \text{EDL}_A &= \text{MDL}_A - n \cdot L_{\text{test}}(\theta^*_A)\\
        &\neq \text{MDL}_{A'} - n \cdot L_{\text{test}}(\theta^*_{A'}) = \text{EDL}_{A'}.
    \end{split}
    \end{equation}

    The difference arises from different MDL values (different loss trajectories) and potentially different final test losses.
\end{proof}

\section{Detailed Analysis of Toy Models}
\label{Sec::ToyModelsAppendix}

\subsection{Random Labels (\autoref{sec::RandomLabels})}
\label{Sec::RandomLabelsAppendix}

\textit{Setup.} The label $Y$ is independent of input $X$ and drawn uniformly: $Y \sim \text{Uniform}(\mathcal{Y})$ with $|\mathcal{Y}| = k$. Training labels $\{y_i\}_{i=1}^n$ and test labels are drawn independently from this marginal.

\begin{proposition}[Reproduced from \autoref{prop::randomlabels}]
    For i.i.d. random labels independent between train and test, $\mathbb{E}[\text{EDL}] = 0$.
\end{proposition}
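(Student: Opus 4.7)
The plan is to combine the expected-value decomposition
\[\mathbb{E}[\mathrm{EDL}] = \sum_{i=1}^n \mathbb{E}[L(\theta_{i-1})] - n\,\mathbb{E}[L(\theta^*)]\]
established inside the proof of \autoref{thm:nonneg} with the special structure of uniform, independent labels. Since $y_i \sim \mathrm{Unif}(\mathcal{Y})$ independently of $x_i$, of the past $(x_j,y_j)_{j<i}$, and of $D_\text{test}$, for any parameter $\theta$ that is statistically independent of a fresh draw $(x,y)\sim\mathcal{D}$ the population loss reduces to
\[L(\theta) = \log k + \mathbb{E}_x\bigl[D_{\mathrm{KL}}(\mathrm{Unif}\,\|\,p_\theta(\cdot\mid x))\bigr].\]
Both $\theta_{i-1}$ (independent of $(x_i,y_i)$ by i.i.d.\ sampling) and $\theta^*$ (independent of $D_\text{test}$ by construction of the test set) fall into this category, so the reduction applies to every term in the EDL formula.

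Substituting and writing $\Delta(\theta) := \mathbb{E}_x[D_{\mathrm{KL}}(\mathrm{Unif}\,\|\,p_\theta(\cdot\mid x))] \geq 0$, the constant $n\log k$ baselines cancel and I am left with
\[\mathbb{E}[\mathrm{EDL}] = \sum_{i=1}^n \mathbb{E}[\Delta(\theta_{i-1})] - n\,\mathbb{E}[\Delta(\theta^*)].\]
The proposition thus reduces to showing that the trajectory-averaged expected KL from uniform matches the final model's expected KL from uniform---which intuitively must hold because the training labels carry no information about test labels, so the only non-trivial dynamics in $\Delta$ can come from memorization artifacts rather than genuine learning about $\mathcal{D}$.

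The main obstacle is upgrading this intuition into a proof; the cleanest route is via population-monotonicity (\autoref{def:populationmonotonic}) combined with the uniform-label lower bound $L(\theta) \geq \log k$. Assuming a uniform-calibrated initialization $L(\theta_0) = \log k$ (so $\Delta(\theta_0)=0$), monotonicity iterated gives $\mathbb{E}[L(\theta_i)] \leq \log k$ for every $i$, while $L \geq \log k$ pointwise forces the reverse inequality, so $\mathbb{E}[L(\theta_i)] \equiv \log k$ and every $\mathbb{E}[\Delta(\theta_{i-1})]$ vanishes; the same argument applied to $\theta^*$ yields $\mathbb{E}[\Delta(\theta^*)] = 0$, and $\mathbb{E}[\mathrm{EDL}] = 0$ exactly. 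For non-calibrated starts the same argument shows any residual positive EDL is attributable to the algorithm \emph{collapsing} $\Delta$ toward zero---i.e., recognizing that the labels are noise---rather than to predictive structure extracted about $D_\text{test}$, consistent with the proposition's interpretation as a statement about generalizable information.
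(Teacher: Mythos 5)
Your proof is correct and rests on the same core fact as the paper's---that the uniform predictor is Bayes-optimal for random labels, so the population loss is bounded below by $\log k$---but you execute it more rigorously. The paper's argument sandwiches the aggregate quantity: $\mathbb{E}[\mathrm{EDL}]\ge 0$ by \autoref{thm:nonneg}, and $\mathbb{E}[\mathrm{EDL}]\le\epsilon$ where $\epsilon=\mathrm{MDL}-n\log k$ is the excess first-epoch codelength, concluding with ``for well-behaved models, $\epsilon\to 0$.'' You instead sandwich at the per-step level: writing $L(\theta)=\log k+\mathbb{E}_x[D_{\mathrm{KL}}(\mathrm{Unif}\,\|\,p_\theta(\cdot\mid x))]$ gives the pointwise lower bound, while population-monotonicity (\autoref{def:populationmonotonic}) iterated from a calibrated start $L(\theta_0)=\log k$ gives the matching upper bound, forcing $\mathbb{E}[L(\theta_i)]=\log k$ for every $i$ and hence $\mathbb{E}[\mathrm{EDL}]=0$ exactly. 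What your version buys is an exact equality under explicitly named hypotheses (population-monotonicity plus uniform-calibrated initialization) rather than an asymptotic ``$\epsilon\to 0$'' appeal; what it costs is that the same hypotheses are load-bearing---for a non-calibrated $\theta_0$ with $\Delta(\theta_0)>0$, an algorithm that relaxes toward uniform produces strictly positive EDL (the trajectory-averaged $\Delta$ exceeds the final $\Delta$), so the proposition as literally stated fails without the calibration assumption. You flag this honestly; the paper buries the same caveat in the phrase ``well-behaved,'' so neither proof establishes the unconditional claim, but yours makes the boundary of validity visible.
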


\begin{proof}
    The optimal predictor for this problem is $p_\text{opt}(y|x) = 1/k$ for all $x$, achieving loss $L(\theta_\text{opt}) = \log k$ per example (the Bayes error rate). Since labels carry no information about $X$, no other predictor can achieve lower expected loss.
    
    Consider a model that maintains the uniform prediction throughout training (the Bayes-optimal strategy upon recognizing random labels). Then:
    \begin{equation}
        \ell(\theta_i; x, y) = \log k \quad \text{for all } i, x, y.
    \end{equation}
    The MDL is $\text{MDL} = n \log k$.
    
    The test loss is $L_{\text{test}}(\theta^*) = L_{\text{test}}(\theta_\text{opt}) = \log k$, since the optimal predictor achieves this loss.

    Therefore:
    \begin{equation}
        \text{EDL} = n \log k - n \log k = 0.
    \end{equation}

    Now consider a model that attempts to memorize training labels. During training, the model may achieve lower loss on previously-seen examples, but for the first exposure to each example (which determines MDL), the model cannot predict better than chance. Thus $\text{MDL} \geq n \log k$ with equality if the model starts near-uniform.
    
    After training, the memorized labels do not help predict independent test labels, so $L_{\text{test}}(\theta^*) \geq \log k$.
    
    In expectation over training and test data:
    \begin{equation}
    \begin{split}
        \mathbb{E}[\text{MDL}] &\geq n \log k,\\
        \mathbb{E}[L_{\text{test}}(\theta^*)] &\geq \log k.
    \end{split}
    \end{equation}

    For a well-calibrated model that recognizes the futility of memorization:
    \begin{equation}
    \begin{split}
        \mathbb{E}[\text{MDL}] &= n \log k,\\
        \mathbb{E}[L_{\text{test}}(\theta^*)] &= \log k,
    \end{split}
    \end{equation}
    giving $\mathbb{E}[\text{EDL}] = 0$.

    For a model that memorizes, MDL may be slightly higher (if memorization interferes with predictions on new training examples during the first epoch), but test loss will also be at least $\log k$. The net effect is $\mathbb{E}[\text{EDL}] \approx 0$.

    More formally, by \autoref{thm:nonneg}, $\mathbb{E}[\text{EDL}] \geq 0$. By the optimality of uniform predictions for random labels, no model can achieve $L_{\text{test}} < \log k$ in expectation. If $\text{MDL} = n \log k + \epsilon$ for some $\epsilon \geq 0$, then:
    \begin{equation}
    \begin{split}
        \mathbb{E}[\text{EDL}] &= \mathbb{E}[\text{MDL}] - n \cdot \mathbb{E}[L_{\text{test}}] \\
        &\leq n \log k + \epsilon - n \log k = \epsilon.
    \end{split}
    \end{equation}

    For well-behaved models, $\epsilon \to 0$, giving $\mathbb{E}[\text{EDL}] \to 0$.
\end{proof}

\begin{figure}
    \centering
        \includegraphics[width=1.0\columnwidth]{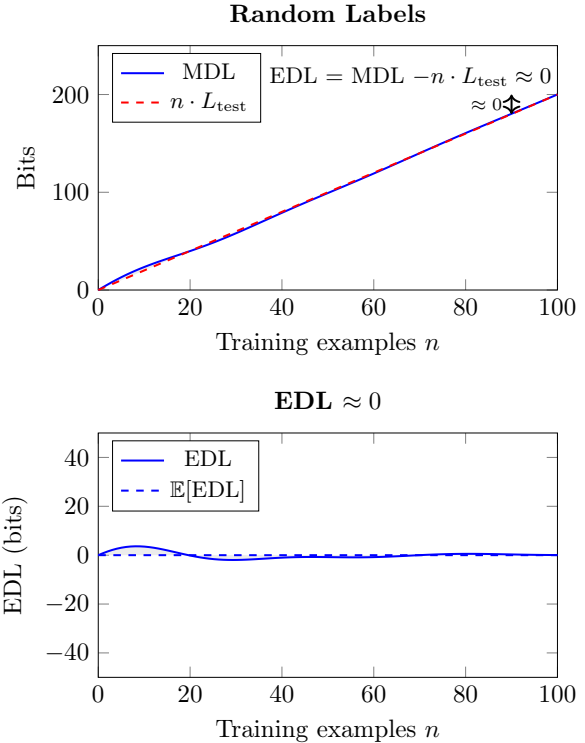}
    \caption{When labels are random, test loss $L_\text{test}$ remains constant since no generalizable pattern can be learned. MDL and the residual codelength $n \cdot L_\text{test}$ both increase linearly (in expectation), proportional to the number of training examples $n$. Since no learnable structure exists in the data, $\text{EDL} = \text{MDL} - n \cdot L_\text{test}\approx 0$, and the amount of generalizable information absorbed is negligible.}
    \label{fig:RandomLabelsDiagramAppendix}
\end{figure}

\subsection{Hypothesis Collapse (\autoref{sec::HypothesisCollapse})}
\label{Sec::HypothesisCollapseAppendix}

\textit{Setup.} A learner considers $m$ hypotheses $\mathcal{H} = \{h_1, \ldots, h_m\}$ with uniform prior $P(h_j) = 1/m$. Each hypothesis $h_j: \mathcal{X} \to \mathcal{Y}$ determines the label for each input. A \textit{diagnostic} example $(x, y)$ is one for which only a single hypothesis is consistent:
\begin{equation}
    |\{h \in \mathcal{H} : h(x) = y\}| = 1.
\end{equation}

\begin{proposition}[Reproduced from \autoref{prop:hypothesiscollapse}]
\label{prop:hypothesiscollapse}
    Consider a Bayesian learner with hypothesis class $\mathcal{H} = \{h_1, \ldots, h_m\}$ and uniform prior. Suppose:
    \begin{enumerate}[label=\roman*]
        \item The example $(x, y)$ is diagnostic, meaning that exactly one hypothesis $h_j$ satisfies $h_j(x)=y$.
        \item The hypothesis are maximally distinguishing on $x$: for each label $y' \in \mathcal{Y}$, exactly $m/k$ hypotheses predict $y'$ for input x (where $k = |\mathcal{Y}|$).
    \end{enumerate}
    Then a single example contributes exactly $\log k$ bits to EDL, and the generalization improvement is $\log m$ bits, the entropy of the full hypothesis space.

    More generally, without assumption (ii), a single example can contribute at most $\mathrm{min}\left(\log k, \log m\right)$ bits to EDL, where the codelength contribution is determined by the label entropy under the predictive distribution, while the generalization improvement is determined by the hypothesis entropy reduction.
\end{proposition}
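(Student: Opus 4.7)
The plan is to directly evaluate MDL, test loss, and EDL for the Bayesian learner with $n=1$, leveraging the exact posterior collapse under the diagnostic assumption, and to compute the hypothesis entropy reduction separately as a Shannon entropy on the hypothesis space. All computations reduce to routine Bayesian bookkeeping once the predictive distribution is written down explicitly.

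First, I would characterize the Bayesian predictive distribution. With a uniform prior over $\mathcal{H}$, the probability assigned to label $y' \in \mathcal{Y}$ given input $x$ is $p_{\theta_0}(y' \mid x) = N_{y'}/m$, where $N_{y'} = |\{h \in \mathcal{H} : h(x) = y'\}|$. Under assumption (ii), every $N_{y'} = m/k$, so $p_{\theta_0}(\cdot \mid x)$ is uniform over $\mathcal{Y}$ and the codelength of the diagnostic example is $\log k$; since $n=1$, this also equals the MDL. Next, assumption (i) singles out a unique consistent $h_j$, so the Bayesian posterior concentrates on $\{h_j\}$; taking labels as deterministic given the hypothesis (the implicit setup of the proposition), the final model predicts every input correctly, giving $L_{\mathrm{test}}(\theta^*) = 0$. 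Plugging into \autoref{def::EDL} yields $\mathrm{EDL} = \log k - 0 = \log k$. For the generalization improvement, I would compute the entropy of the posterior over $\mathcal{H}$: $\log m$ before and $0$ after, so the reduction is $\log m$ bits.

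For the general case without (ii), I would bound the codelength two ways. The predictive distribution lives on at most $k$ labels, so its Shannon entropy is at most $\log k$, bounding the expected codelength under $p_{\theta_0}(\cdot \mid x)$. Separately, for a realized diagnostic label one has $p_{\theta_0}(y \mid x) = N_y/m \geq 1/m$, yielding a realized codelength of at most $\log m$. Combining gives the claimed $\min(\log k, \log m)$ bound on the EDL contribution, while the hypothesis entropy reduction is at most $\log m$ by the same prior-to-posterior entropy calculation, attained precisely when the example is diagnostic.

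The main obstacle is conceptual rather than algebraic: carefully separating the two meters the proposition is tracking. EDL counts codelength saved on observed labels, constrained both by the label alphabet ($\log k$) and by the minimum predictive mass the prior puts on any single label ($1/m$); the generalization improvement counts reduction of uncertainty over \emph{which rule} governs the data, depending only on the size of the hypothesis class. The proof must be explicit about this distinction --- the pedagogical payoff of the proposition --- since the two quantities coincide only when $k = m$, and diverge arbitrarily when $m \gg k$, which is exactly the regime where a single label collapses many bits of rule uncertainty.
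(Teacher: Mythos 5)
Your proposal is correct and follows essentially the same route as the paper's proof: compute the Bayesian predictive distribution $p_{\theta_0}(y'\mid x)=N_{y'}/m$, read off the codelength $\log k$ under assumption (ii), collapse the posterior onto the unique consistent hypothesis to get $L_{\mathrm{test}}(\theta^*)=0$ and hence $\mathrm{EDL}=\log k$, with the $\log m$ figure coming from the prior-to-posterior entropy drop on $\mathcal{H}$. The one place the paper is more careful is the term ``generalization improvement'': it separates the hypothesis-entropy reduction ($\log m$, which is what you compute) from the loss improvement $L(\theta_0)-L(\theta^*)=\mathbb{E}_x[H(Y\mid x;\theta_0)]$, which equals $\log m$ only in the extreme case where the hypotheses remain maximally distinguishing on typical test inputs rather than just on the single training input $x$.
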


Restating to highlight the case of maximum contribution to EDL from a single example:
\begin{proposition}
    A single diagnostic example can contribute up to (but not necessarily) $\log m$ bits to EDL.
\end{proposition}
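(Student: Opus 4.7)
The plan is to derive the upper bound on EDL for a single diagnostic example by separately bounding the MDL contribution and the test loss, then exhibit a configuration that achieves this bound and identify regimes in which it is strict.

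First, I would compute the pre-update predictive distribution of the Bayesian learner with uniform prior over hypotheses: $p_{\theta_0}(y' \mid x) = k_{y'}/m$, where $k_{y'} = |\{h \in \mathcal{H} : h(x) = y'\}|$. Under the diagnosticity assumption, exactly one hypothesis $h_j$ satisfies $h_j(x) = y$, so $k_y = 1$ and the MDL contribution is $\ell(\theta_0; x, y) = -\log(1/m) = \log m$. Since the learner has at most $m$ hypotheses, the pre-update predictive assigns probability at least $1/m$ to any label in its support, so more generally $\ell(\theta_0; x, y) \leq \log m$ for any diagnostic example, with equality exactly when no other hypothesis coincides on $x$.

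Second, I would combine the MDL bound with the non-negativity of the cross-entropy test loss. Since $L_{\text{test}}(\theta^*) \geq 0$ for any probabilistic predictor, the definition of EDL (with $n=1$) gives $\mathrm{EDL} = \ell(\theta_0; x, y) - L_{\text{test}}(\theta^*) \leq \log m$. To show this bound is tight, I would construct an explicit realizable configuration: the test distribution is generated by $h_j$ (so $y_{\mathrm{test}} = h_j(x_{\mathrm{test}})$ deterministically). After the Bayesian posterior collapses onto $h_j$, the predictor assigns probability $1$ to the true test label, giving $L_{\text{test}}(\theta^*) = 0$ and hence $\mathrm{EDL} = \log m$.

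Finally, for the \emph{not necessarily} qualifier, I would note two distinct failure modes: (i) if multiple hypotheses coincide with the true hypothesis on input $x$ (so $k_y > 1$, a weaker form of diagnosticity on the full hypothesis class), the MDL contribution drops to $\log(m/k_y) < \log m$; (ii) if the identified hypothesis $h_j$ does not perfectly predict test labels (due to label noise, distribution shift, model misspecification, or limited hypothesis expressiveness), then $L_{\text{test}}(\theta^*) > 0$ strictly reduces EDL below $\log m$. The main subtlety I anticipate is keeping the two bounds conceptually separate: one bound limits the \emph{coding cost} available to be spent on a single label (capped at $\log m$ by the hypothesis support), and the other limits the \emph{residual encoding cost} (non-negative but zero only under realizability), and the proposition's maximum is achieved precisely when both bounds are simultaneously saturated.
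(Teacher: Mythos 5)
Your proof is correct, and it reaches the bound by a genuinely more direct route than the paper's. The paper's argument imposes the extra symmetry assumption that the $m$ hypotheses spread their predictions uniformly over the $k=|\mathcal{Y}|$ labels, computes the initial predictive as $1/k$, obtains an EDL contribution of exactly $\log k$, and only recovers $\log m$ in the special case $k=m$ where every hypothesis predicts a distinct label. You instead exploit diagnosticity alone: since exactly one of the $m$ equiprobable hypotheses predicts $y$ on $x$, the Bayes-mixture predictive assigns $p_0(y\mid x)=1/m$ regardless of how the remaining $m-1$ hypotheses distribute over the other labels, so the coding cost is exactly $\log m$; combining this with $L_{\mathrm{test}}(\theta^*)\ge 0$ gives $\mathrm{EDL}\le\log m$, and a realizable test distribution generated by $h_j$ gives tightness. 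This buys generality---your bound holds even when $|\mathcal{Y}|<m$, where the paper's uniform-spread assumption cannot be satisfied, and it incidentally shows that the surprisal of a single diagnostic label can exceed $\log|\mathcal{Y}|$---and it cleanly separates the two mechanisms (coding cost capped by the hypothesis support; residual cost zero only under realizability) whose simultaneous saturation is required for equality. One small caveat: your failure mode (i) with $k_y>1$ steps outside the paper's definition of ``diagnostic,'' which already forces $k_y=1$; under that definition only your failure mode (ii), non-realizability giving $L_{\mathrm{test}}(\theta^*)>0$, accounts for the ``not necessarily'' qualifier.
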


\begin{figure}
    \centering
        \includegraphics[width=1.0\columnwidth]{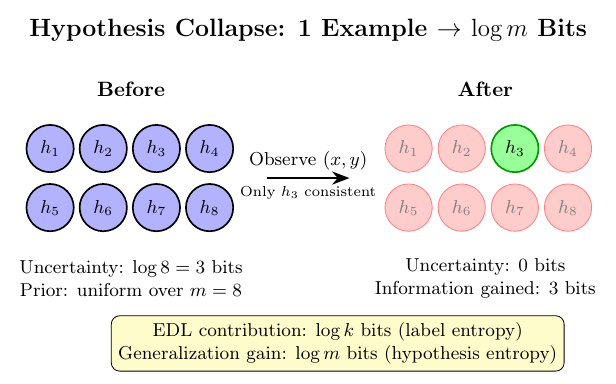}
    \caption{A single example can contribute up to $\log m$ bits of generalizable information by reducing uncertainty about which hypothesis is correct.}
    \label{fig:HypothesisCollapseDiagramAppendix}
\end{figure}

\begin{proof}
    Before observing any examples, the learner's predictive distribution for input $x$ is:
    \begin{equation}
    \begin{split}
        p_0(y|x) &= \sum_{j=1}^m P(h_j) \cdot \mathbf{1}[h_j(x) = y] \\
        &= \frac{|\{h \in \mathcal{H} : h(x) = y\}|}{m}.
    \end{split}
    \end{equation}

    For a diagnostic example $(x_1, y_1)$, suppose $h_1$ is the unique consistent hypothesis. If the hypotheses predict uniformly distributed labels for $x_1$ (\textit{i.e.}, each label in $\mathcal{Y}$ is predicted by $m/k$ hypotheses where $k = |\mathcal{Y}|$), then:
    \begin{equation}
        p_0(y_1|x_1) = \frac{1}{k}.
    \end{equation}

    The loss on this example is:
    \begin{equation}
    \begin{split}
        \ell(\theta_0; x_1, y_1) &= -\log p_0(y_1|x_1) = \log k.%\\
        % &= \log k.
    \end{split}
    \end{equation}

    After observing $(x_1, y_1)$, the learner updates to the posterior:
    \begin{equation}
    \begin{split}
        P(h_j | x_1, y_1) &= \begin{cases}
            1 & \text{if } h_j(x_1) = y_1 \\ 0 & \text{otherwise}
        \end{cases} = \begin{cases}
            1 & \text{if } j = 1 \\ 0 & \text{otherwise}.
        \end{cases} %\\
        % &= \begin{cases}
        %     1 & \text{if } j = 1 \\ 0 & \text{otherwise}.
        % \end{cases}
    \end{split}
    \end{equation}

    The learner now knows $h_1$ is correct. For any future input $x$:
    \begin{equation}
        p_1(y|x) = \mathbf{1}[h_1(x) = y].
    \end{equation}

    The test loss becomes $L_{\text{test}}(\theta^*) = 0$ if $h_1$ is deterministic (\textit{i.e.}, each input has a unique correct label under $h_1$).

    The EDL from this single example is:
    \begin{equation}
    \begin{split}
        \text{EDL} &= \ell(\theta_0; x_1, y_1) - 1 \cdot L_{\text{test}}(\theta^*) \\
        &= \log k - 0 \\
        &= \log k.
    \end{split}
    \end{equation}

    Note that $\log k$ may equal $\log m$ if $k = m$. In this case, the information gained about which hypothesis is correct is $\log m$, the entropy of the uniform distribution over $m$ hypotheses reduced to zero.

    It is important to clarify that the EDL contribution from this example is $\log k$ (the codelength), not $\log m$ (the hypothesis entropy associated with specifying a single hypothesis among a set of $m$ hypotheses), though these coincide when $k = m$, \textit{i.e.}, when each hypothesis predicts a distinct label for the diagnostic input. More generally:
    \begin{itemize}
        \item Codelength contribution to MDL: \\$-\log p_0(y_1 \mid x_1) = \log k$ (under assumption (ii), where predictions are uniform over labels
        \item Hypothesis entropy reduction: $\log m$ (from a uniform prior over $m$ possible hypotheses, $P(h_j)=1/m$, to certainty on one hypothesis, $P(h_1)=1$)
        \item Generalization improvement: $L(\theta_0) - L(\theta^*)$, which depends on on how hypotheses differ on future inputs
    \end{itemize}

    The relevant insight for elicitation is that, even when the codelength contribution is small (\textit{e.g.}, $\log k = 1$ bit for binary labels), the generalization improvement can be large ($\log m$ bits) if a single diagnostic example disambiguates among many latent capabilities. And indeed, this is precisely the elicitation setting: minimal information in the training signal unlocks substantial predictive improvement.

    To make this precise, consider the \textit{generalization} enabled by identifying $h_1$. Before the example, predicting on a new input $x'$ requires averaging over hypotheses:
    \begin{equation}
        p_0(y \mid x') = \frac{1}{m} \sum_j \mathbf{1}[h_j(x') = y].
    \end{equation}

    If hypotheses disagree substantially on $x'$, this average is spread out, yielding high entropy. After identifying $h_1$, predictions are deterministic.

    The expected reduction in loss per example is:
    \begin{equation}
    \begin{split}
        L(\theta_0) - L(\theta^*) &= \mathbb{E}_x\left[\log \frac{1}{p_0(Y \mid x)}\right] - 0 \\
        &= \mathbb{E}_x[H(Y \mid x; \theta_0)],
    \end{split}
    \end{equation}
    where $H(Y|x; \theta_0)$ is the entropy of the initial predictive distribution for input $x$.

    In the extreme case where all $m$ hypotheses predict distinct labels for a typical input, such that $k = m$ and each label has probability $1/m$:
    \begin{equation}
        L(\theta_0) - L(\theta^*) = \log m.
    \end{equation}

    Thus, EDL from one example is:
    \begin{equation}
        \text{EDL} = n \cdot (L(\theta_0) - L(\theta^*)) = 1 \cdot \log m = \log m.
    \end{equation}
\end{proof}

\begin{remark}
    This result explains how elicitation can succeed with minimal data. If a pretrained model has $m$ ``latent hypotheses'' about how to respond to a task, a single diagnostic example can select the correct one, providing $\log m$ bits of task-relevant information regardless of the label's own entropy.
\end{remark}

\subsection{Disjoint Subdistributions (\autoref{sec::DisjointSubdistributions})}
\label{Sec::DisjointSubdistributionsAppendix}

\textit{Setup.} The data distribution is a mixture of $K$ disjoint subdistributions:
\begin{equation}
    \mathcal{D} = \sum_{j=1}^K \pi_j \mathcal{D}_j,
\end{equation}
where $\pi_j > 0$, $\sum_j \pi_j = 1$, and each $\mathcal{D}_j$ has support $\mathcal{X}_j$ with $\mathcal{X}_j \cap \mathcal{X}_{j'} = \emptyset$ for $j \neq j'$. Within each subdistribution $\mathcal{D}_j$, there is a learnable rule that, once acquired, reduces per-example loss by $\Delta_j$ bits.

\begin{proposition}[Reproduced from \autoref{prop:SubdistributionWeightEDL}]
    Learning a rule that reduces loss by $\Delta$ bits on a subdistribution with mixture weight $\pi$ contributes approximately $\pi \cdot \Delta$ bits per example to expected generalization improvement.
\end{proposition}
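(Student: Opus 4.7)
The plan is to exploit the disjoint support structure to decompose the population loss into a convex combination across subdistributions, then track how learning one rule propagates to the overall expectation. First I would write
\begin{equation}
    L(\theta) = \mathbb{E}_{(x,y) \sim \mathcal{D}}[\ell(\theta; x, y)] = \sum_{j=1}^K \pi_j \, L_j(\theta),
\end{equation}
where $L_j(\theta) = \mathbb{E}_{(x,y) \sim \mathcal{D}_j}[\ell(\theta; x, y)]$ is the population loss restricted to subdistribution $j$. This decomposition is exact because the $\mathcal{X}_j$ are disjoint and the mixture is countable.

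Next I would model ``learning the rule on subdistribution $j$'' as the event $L_j(\theta^*) = L_j(\theta_0) - \Delta_j$, while the disjointness of supports implies that predictions on the other subdistributions are unaffected, so $L_{j'}(\theta^*) = L_{j'}(\theta_0)$ for every $j' \neq j$. Substituting into the mixture decomposition, the expected per-example generalization improvement becomes
\begin{equation}
    L(\theta_0) - L(\theta^*) = \sum_{j'=1}^K \pi_{j'} \bigl(L_{j'}(\theta_0) - L_{j'}(\theta^*)\bigr) = \pi_j \, \Delta_j,
\end{equation}
which is the claimed value. The contribution to expected EDL per example then follows from \autoref{thm:genbound}: under the population-monotonic assumption and once learning has stabilized (so that $\bar{L}$ tracks $L(\theta_0)$ early in training and $L(\theta^*)$ thereafter), the averaged-trajectory improvement term $\mathbb{E}[\mathrm{EDL}]/n$ inherits the same $\pi_j \Delta_j$ scaling up to the vanishing transient.

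The \emph{main obstacle} is justifying the non-interference claim $L_{j'}(\theta^*) = L_{j'}(\theta_0)$ for $j' \neq j$. In the idealized toy setting this is free because disjoint input supports allow a model (in principle) to represent independent conditional distributions on each $\mathcal{X}_j$; in realistic settings, weight sharing across a neural network induces cross-talk that can either amplify or suppress the $\pi_j \Delta_j$ contribution, which is precisely why the proposition is stated with ``approximately.'' I would close the argument by noting that this approximation is tight whenever the learning algorithm's updates on examples from $\mathcal{D}_j$ do not systematically shift predictions on the other $\mathcal{X}_{j'}$, an assumption that can be checked empirically via ablations on the per-subdistribution loss.
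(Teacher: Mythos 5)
Your proof is correct and follows essentially the same route as the paper's: decompose the population loss as the $\pi$-weighted mixture $L(\theta)=\sum_j \pi_j L_j(\theta)$, assume the learned rule changes only $L_j$ (no improvement on uncovered subdistributions), and subtract to get $\pi_j\Delta_j$. Your explicit discussion of the non-interference assumption is a welcome addition the paper leaves implicit, and your appeal to \autoref{thm:genbound} for the EDL connection is a mild variant of the paper's direct per-subdistribution MDL/test-loss calculation, but neither changes the substance of the argument.
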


\begin{proof}
    Let $L_j^0$ denote the initial expected loss on subdistribution $\mathcal{D}_j$, and $L_j^*$ the loss after learning the rule (so $L_j^0 - L_j^* = \Delta_j$).

    The initial expected loss under the full distribution $\mathcal{D}$ is:
    \begin{equation}
        L(\theta_0) = \sum_{j=1}^K \pi_j L_j^0.
    \end{equation}
    
    Suppose training provides sufficient examples from subdistribution $j$ to learn its rule, but no examples from other subdistributions. The final model achieves:
    \begin{itemize}
        \item Loss $L_j^* = L_j^0 - \Delta_j$ on $\mathcal{D}_j$
        \item Loss $L_{j'}^0$ on $\mathcal{D}_{j'}$ for $j' \neq j$ (no improvement without training data)
    \end{itemize}
    
    The final expected loss under $\mathcal{D}$ is:
    \begin{equation}
        L(\theta^*) = \pi_j L_j^* + \sum_{j' \neq j} \pi_{j'} L_{j'}^0 = \pi_j (L_j^0 - \Delta_j) + \sum_{j' \neq j} \pi_{j'} L_{j'}^0.
    \end{equation}
    
    The improvement is:
    \begin{equation}
    \begin{split}
        L(\theta_0) - L(\theta^*) &= \left[ \pi_j L_j^0 + \sum_{j' \neq j} \pi_{j'} L_{j'}^0 \right] \\
        &\phantom{=} - \left[ \pi_j (L_j^0 - \Delta_j) - \sum_{j' \neq j} \pi_{j'} L_{j'}^0 \right] \\
        &= \pi_j \Delta_j.
    \end{split}
    \end{equation}
    
    For a single subdistribution with weight $\pi$ and improvement $\Delta$:
    \begin{equation}
        L(\theta_0) - L(\theta^*) = \pi \cdot \Delta.
    \end{equation}
\end{proof}

\textit{EDL Calculation.} Now consider training on $n$ examples drawn from $\mathcal{D}$ (so each subdistribution $j$ receives approximately $n \pi_j$ examples in expectation).

For a subdistribution $j$ with $n_j \approx n \pi_j$ examples:
\begin{itemize}
    \item MDL contribution: The first-epoch loss on these examples transitions from $L_j^0$ (early) to $L_j^*$ (after learning). Approximately:
        \begin{equation}
            \text{MDL}_j \approx n_j \cdot \bar{L}_j,
        \end{equation}
        where $\bar{L}_j$ is the average loss during learning, satisfying $L_j^* \leq \bar{L}_j \leq L_j^0$. \\
    \item Test loss contribution: $\pi_j \cdot L_j^*$.
\end{itemize}

Summing over subdistributions,
\begin{equation}
    \text{MDL} = \sum_j \text{MDL}_j \approx \sum_j n_j \bar{L}_j = n \sum_j \pi_j \bar{L}_j.
\end{equation}
\begin{equation}
    n \cdot L_{\text{test}}(\theta^*) = n \sum_j \pi_j L_j^*.
\end{equation}

Therefore,
\begin{equation}
    \text{EDL} = n \sum_j \pi_j (\bar{L}_j - L_j^*) = n \sum_j \pi_j \cdot \text{(learning gap for } j\text{)}.
\end{equation}

The per-example EDL is:
\begin{equation}
    \frac{\text{EDL}}{n} = \sum_j \pi_j (\bar{L}_j - L_j^*).
\end{equation}

This is the $\pi$-weighted average of the learning gaps across subdistributions, confirming that each subdistribution's contribution to EDL is proportional to its mixture weight.

\subsection{Dynamics of the Coupon Collector's Problem (\autoref{sec::CouponCollector})}
\label{Sec::CouponCollectorAppendix}

\textit{Setup.} The task has $K$ distinct ``concepts.'' Each training example is drawn uniformly from one of the $K$ concepts. The model must observe at least one example of each concept to generalize fully. Let $C(n)$ denote the number of distinct concepts observed after $n$ examples.

\begin{proposition}[Reproduced from \autoref{prop:couponcollector}]
    For a task requiring coverage of $K$ concepts, EDL shows an accelerating phase as $n$ approaches $K \ln K$, followed by saturation.
\end{proposition}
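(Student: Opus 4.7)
The plan is to analyze EDL in a stylized model where each training example is drawn uniformly at random from $K$ concepts and the learner achieves per-example loss $L^{*}$ on any concept it has previously seen and $L^{0} > L^{*}$ on any concept that has not yet appeared---an idealization that captures the spirit of the proposition without tracking intra-concept learning dynamics. I would first recall the standard coupon-collector identity $\mathbb{E}[C(n)] = K\bigl(1 - (1-1/K)^n\bigr)$ and then observe that, by symmetry, the concept of the $i$-th draw is novel with probability $(1-1/K)^{i-1}$. This gives an expected per-step training loss of $L^{*} + (L^{0} - L^{*})(1-1/K)^{i-1}$ and, upon summation, $\mathbb{E}[\mathrm{MDL}(n)] = nL^{*} + K(L^{0} - L^{*})\bigl(1 - (1-1/K)^n\bigr)$.

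Next I would compute the expected test loss by the same bookkeeping: a fresh input falls into a concept that has been observed during training with probability $1 - (1-1/K)^n$, so $\mathbb{E}[L_{\mathrm{test}}(\theta_n)] = L^{*} + (L^{0} - L^{*})(1-1/K)^n$. Substituting both quantities into \autoref{def::EDL} and simplifying yields the closed form
\begin{equation}
\mathbb{E}[\mathrm{EDL}(n)] = (L^{0} - L^{*})\bigl[K - (K+n)(1-1/K)^n\bigr].
\end{equation}
The saturation claim follows at once: as $n \to \infty$ the bracketed term tends to $K$ and $\mathbb{E}[\mathrm{EDL}(n)] \to K(L^{0} - L^{*})$, a finite ceiling equal to the total generalizable information encoded in the $K$ rules. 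At the coupon-collector time $n = K\ln K$, the probability that any fixed concept remains uncollected is $(1-1/K)^{K\ln K} \approx 1/K$, so $\mathbb{E}[\mathrm{EDL}]$ already lies within a $(1+\ln K)/K$ fraction of its limit---this is the saturation window.

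To isolate the accelerating phase I would differentiate the closed form. Using the continuous approximation $(1-1/K)^n \approx e^{-n/K}$, the marginal rate of information absorption is
\begin{equation}
\tfrac{d}{dn}\mathbb{E}[\mathrm{EDL}(n)] \approx (L^{0} - L^{*})\,\tfrac{n}{K}\, e^{-n/K},
\end{equation}
which is strictly positive, rises on $(0, K)$, and decays exponentially thereafter. Hence cumulative EDL is convex (accelerating) while novel concepts are still being acquired at a growing rate and becomes concave (decelerating) once repeats dominate the stream; the three phases advertised by the proposition---initial slow absorption, acceleration through the coverage transition, and post-coverage saturation---correspond to the sign changes of the first and second derivatives of this expression.

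The main obstacle is a mild mismatch between two natural timescales: the marginal-rate peak sits at $n = \Theta(K)$ in the pure one-shot model, whereas the proposition points at $n = \Theta(K\ln K)$ as the transition to saturation. I would resolve this by stating the conclusion as two complementary quantitative facts---$\tfrac{d^2}{dn^2}\mathbb{E}[\mathrm{EDL}] > 0$ for $n < K$ and $< 0$ for $n > K$ (the accelerating-then-decelerating shape), while $\mathbb{E}[\mathrm{EDL}(n)]$ attains the fraction $1 - \Theta(\log K / K)$ of its asymptotic ceiling at $n = \Theta(K\ln K)$ (the saturation timescale)---and then noting that a slightly richer model requiring several exposures per concept would convolve the coverage process with a per-concept learning curve, shifting the marginal-rate peak forward toward $K\ln K$ and more closely matching the proposition as phrased.
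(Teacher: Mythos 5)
Your proposal is correct and follows essentially the same route as the paper's own proof: the same two-level loss model, the same coupon-collector coverage identity, and the same closed form $\mathbb{E}[\mathrm{EDL}] \approx (L^0-L^*)\left[K-(K+n)e^{-n/K}\right]$ saturating at $K(L^0-L^*)$ (the paper writes $\Delta = L_{\text{high}}-L_{\text{low}}$ and locates the peak of $\mathbb{E}[\mathrm{EDL}]/n$ at $n\approx 1.79K$, while you locate the inflection of cumulative EDL at $n=K$ via its derivative $\Delta\,(n/K)e^{-n/K}$ --- the same computation viewed through a slightly different normalization). Your explicit acknowledgment that the transition sits at $\Theta(K)$ rather than the advertised $K\ln K$ is a point the paper's own proof silently glosses over, and your proposed resolution (reserving $K\ln K$ for the near-complete-saturation timescale) is a reasonable way to reconcile the derivation with the proposition as stated.
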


\begin{proof}
    We model the learning dynamics as follows:
    \begin{itemize}
        \item A concept is ``known'' after observing at least one example of it.
        \item For a known concept, the model achieves loss $L_{\text{low}}$ (near zero).
        \item For an unknown concept, the model achieves loss $L_{\text{high}}$.
        \item Let $\Delta = L_{\text{high}} - L_{\text{low}}$ be the per-concept improvement.
    \end{itemize}

    \textit{Coverage dynamics.} The number of distinct concepts encountered after $n$ draws follows the coupon collector's problem distribution:
    \begin{equation}
        \mathbb{E}[C(n)] = K \left(1 - \left(1 - \frac{1}{K}\right)^n\right) \approx K(1 - e^{-n/K}).
    \end{equation}

    Full coverage requires $\mathbb{E}[N] = K H_K \approx K \ln K$ examples, where $H_K = \sum_{i=1}^K 1/i$ is the $K$-th harmonic number.

    \textit{MDL computation.} At step $i$, the example comes from a new concept with probability $(K - C(i-1))/K$. Recall that the loss is:
    \begin{equation}
        \ell_i = \begin{cases} L_{\text{high}} & \text{if concept is new} \\ L_{\text{low}} & \text{if concept is known}. \end{cases}
    \end{equation}

    The expected MDL is:
    \begin{equation}
    \begin{split}
        \mathbb{E}[\text{MDL}] &= \sum_{i=1}^n \mathbb{E}[\ell_i] \\
        &= \sum_{i=1}^n \left[\frac{K - \mathbb{E}[C(i-1)]}{K} \cdot L_{\text{high}} + \frac{\mathbb{E}[C(i-1)]}{K} \cdot L_{\text{low}}\right].
    \end{split}
    \end{equation}

    Using $\mathbb{E}[C(i-1)] \approx K(1 - e^{-(i-1)/K})$:
    \begin{equation}
        \mathbb{E}[\text{MDL}] \approx \sum_{i=1}^n \left[e^{-(i-1)/K} \cdot L_{\text{high}} + (1 - e^{-(i-1)/K}) \cdot L_{\text{low}}\right].
    \end{equation}

    Converting to an integral for large $K$:
    \begin{equation}
    \begin{split}
        \mathbb{E}[\text{MDL}] &\approx \int_0^n \left[e^{-t/K} L_{\text{high}} + (1 - e^{-t/K}) L_{\text{low}}\right] dt \\
        &= n L_{\text{low}} + K(1 - e^{-n/K}) \Delta.
    \end{split}
    \end{equation}

    As the number of examples $n \to \infty$: 
    \begin{equation}
        \mathbb{E}[\text{MDL}] \to n L_{\text{low}} + K \Delta.
    \end{equation}

    \textit{Test loss.} The test loss depends on the fraction of concepts known:
    \begin{equation}
    \begin{split}
        L_{\text{test}} &= \frac{K - \mathbb{E}[C(n)]}{K} \cdot L_{\text{high}} + \frac{\mathbb{E}[C(n)]}{K} \cdot L_{\text{low}} \\
        &= L_{\text{low}} + \frac{K - \mathbb{E}[C(n)]}{K} \cdot \Delta.
    \end{split}
    \end{equation}

    Using $\mathbb{E}[C(n)] \approx K(1 - e^{-n/K})$:
    \begin{equation}
        L_{\text{test}} \approx L_{\text{low}} + e^{-n/K} \cdot \Delta.
    \end{equation}

    \textit{EDL calculation.}

    \begin{equation}
        \mathbb{E}[\text{EDL}] = \mathbb{E}[\text{MDL}] - n \cdot L_{\text{test}}.
    \end{equation}

    Substituting:
    \begin{equation}
    \begin{split}
        \mathbb{E}[\text{EDL}] &\approx \left[n L_{\text{low}} + K(1 - e^{-n/K}) \Delta\right] - n \left[L_{\text{low}} + e^{-n/K} \Delta\right]\\
        &= K(1 - e^{-n/K}) \Delta - n e^{-n/K} \Delta \\
        &= \Delta \left[K(1 - e^{-n/K}) - n e^{-n/K}\right].
    \end{split}
    \end{equation}

    Let $u = n/K$. Then:
    \begin{equation}
    \begin{split}
        \frac{\mathbb{E}[\text{EDL}]}{K \Delta} &= (1 - e^{-u}) - u e^{-u} \\
        &= 1 - (1 + u)e^{-u}.
    \end{split}
    \end{equation}

    \textit{Scaling regimes.}

    \textit{1. Small $n$ (i.e., $u \ll 1$):}
    Taylor expansion gives $(1+u)e^{-u} \approx 1 - u^2/2$, so:
    \begin{equation}
        \frac{\mathbb{E}[\text{EDL}]}{K \Delta} \approx \frac{u^2}{2} = \frac{n^2}{2K^2}.
    \end{equation}
    Thus, $\mathbb{E}[\text{EDL}] \approx \frac{\Delta n^2}{2K}$, which is \textit{quadratic} in $n$.

    \textit{2. Large $n$ (i.e., $u \gg 1$):}
    $(1+u)e^{-u} \to 0$, so:
    \begin{equation}
        \mathbb{E}[\text{EDL}] \to K \Delta.
    \end{equation}
    EDL saturates at the total information across all concepts.

    \textit{Per-example EDL.}

    \begin{equation}
        \frac{\mathbb{E}[\text{EDL}]}{n} = \frac{\Delta}{u} \left[1 - (1+u)e^{-u}\right].
    \end{equation}

    For small $u$: $\mathbb{E}[\text{EDL}]/n \approx u \Delta / 2 = n \Delta / (2K)$, which \textit{increases} with $n$.

    For large $u$: $\mathbb{E}[\text{EDL}]/n \approx K\Delta / n$, which \textit{decreases} with $n$.

    Solving $(d/du)[u^{-1}(1-(1+u)e^{-u})] = 0$ numerically gives $u \approx 1.79$, so $n \approx 1.79 K$. Thus, The maximum of $\mathbb{E}[\text{EDL}]/n$ occurs at intermediate $n$, around $n \approx K$.

    This yields a characteristic inverted ``U'' shape for coupon collector learning dynamics (in terms of generalizable information absorbed from $n$ train data examples): EDL per example ($\text{EDL}/n$) increases during the learning phase (as coverage builds), peaks near complete coverage, then decreases as additional examples become redundant.
\end{proof}

\begin{figure}
    \centering
        \includegraphics[width=1.0\columnwidth]{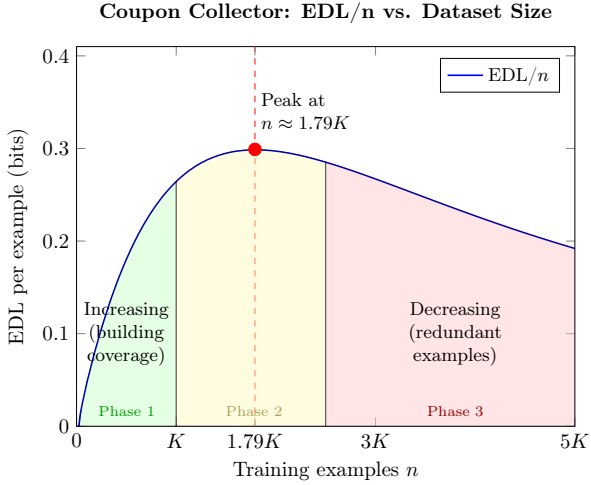}
    \caption{In the low-coverage regime (Phase 1, green shaded region), new examples increase $\mathbb{E}[\text{EDL}]/n$, providing more opportunities to learn generalizable patterns from relatively few known concepts. The rate of information absorption peaks as full coverage of concepts is approached (Phase 2, yellow shaded region). Beyond the coverage threshold (peak), learning enters the full-coverage regime (Phase 3, red shaded region), where $\mathbb{E}[\text{EDL}]/n$ begins to decrease as additional examples yield diminishing amounts of predictive information that can be absorbed.}
    \label{fig:CouponCollectorDiagramAppendix}
\end{figure}

\textit{Summary of coupon collector scaling dynamics:}

As a function of $n$ training examples encountered and $K$ total concepts (coupons) to be collected, the per-example EDL scales as:
\begin{equation}
        \frac{\mathbb{E}[\text{EDL}]}{n} = \frac{K\Delta}{n} \left[1 - \left( 1 + \frac{n}{K} \right)e^{-n/K}\right].
    \end{equation}

\begin{itemize}
    \item For small $n$ (few examples):
    \begin{equation}
        \frac{\mathbb{E}[\text{EDL}]}{n} \approx \frac{n\Delta}{2K}
    \end{equation}
    \item For small $n$ (many examples):
    \begin{equation}
        \frac{\mathbb{E}[\text{EDL}]}{n} \approx \frac{K\Delta}{n}
    \end{equation}
\end{itemize}

The coverage transition occurs around $n \approx 1.79K$, where $\mathbb{E}[\text{EDL}]/n$ is maximized:
\begin{equation}
\begin{split}
    \mathrm{argmax}_n\left(\frac{\mathbb{E}[\text{EDL}]}{n}\right) &\approx 1.79K \\
    \left(\frac{\mathbb{E}[\text{EDL}]}{n}\right)_{n\approx1.79K} &\approx 0.298\Delta
\end{split}
\end{equation}
In the low-coverage regime below this threshold, new examples increase predictive information absorbed, dominating the learning dynamics even when partial coverage has been achieved. Beyond this threshold, learning enters the full-coverage regime, where $\mathbb{E}[\text{EDL}]/n$ begins to decrease as additional examples yield diminishing amounts of information absorbed.

\subsection{Format Learning Transients (\autoref{sec::FormatLearning})}
\label{Sec::FormatLearningAppendix}

\textit{Setup.} A task has two components:
\begin{itemize}
    \item Format ($F$): Output structure (e.g., ``The answer is X''). Low entropy, learned quickly.
    \item Capability ($C$): The actual predictive relationship associated with the capability of interest. High entropy, learned slowly.
\end{itemize}

\begin{proposition}[Paraphrased from \autoref{prop:formatlearning}]
    When a task requires both format and capability learning, EDL/example may show non-monotonic behavior with dataset size.
\end{proposition}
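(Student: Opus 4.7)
The plan is to decompose the per-example cross-entropy into additive format and capability contributions and then exhibit an explicit parameter regime in which $\mathrm{EDL}/n$ is first decreasing and then increasing in $n$. Write
\begin{equation}
    \ell(\theta_i; x, y) = \ell_F(\theta_i) + \ell_C(\theta_i),
\end{equation}
where $\ell_F$ measures the codelength spent on format tokens and $\ell_C$ on capability-bearing tokens. This decomposition is exact whenever format and capability tokens are disjoint subsets of the output sequence, and approximate otherwise; I would state the disjoint-token case as the operative assumption for the proposition.

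Next, I would specify simple trajectories that capture the qualitative distinction highlighted in \autoref{sec::FormatLearning}: a fast-decaying format loss $\ell_F(\theta_{i-1}) = L_F\,e^{-(i-1)/\tau_F}$ with small $\tau_F$, and a slow capability loss $\ell_C(\theta_{i-1}) = L_C^\infty + \Delta_C\,e^{-(i-1)/\tau_C}$ with $\tau_C \gg \tau_F$. These are tractable surrogates consistent with ``format is low-entropy and quickly learned, capability is high-entropy and slow.'' Taking $\theta^*$ to be the parameters at step $n$, we have $\ell_F(\theta^*) \approx 0$ and $L_{\text{test}}(\theta^*) \approx L_C^\infty + \Delta_C\,e^{-n/\tau_C}$.

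With these ingredients I would evaluate $\mathrm{MDL} = \sum_{i=1}^n [\ell_F(\theta_{i-1}) + \ell_C(\theta_{i-1})]$ by converting to integrals for large $n$, yielding
\begin{equation}
    \mathrm{MDL} \approx L_F\,\tau_F\bigl(1 - e^{-n/\tau_F}\bigr) + n L_C^\infty + \Delta_C\,\tau_C\bigl(1 - e^{-n/\tau_C}\bigr),
\end{equation}
and then apply \autoref{def::EDL} to get
\begin{equation}
    \frac{\mathrm{EDL}}{n} \approx \frac{L_F\,\tau_F}{n}\bigl(1 - e^{-n/\tau_F}\bigr) + \frac{\Delta_C\,\tau_C}{n}\bigl(1 - e^{-n/\tau_C}\bigr) - \Delta_C\,e^{-n/\tau_C}.
\end{equation}
The first term is the format contribution: once $n \gg \tau_F$ it behaves like $L_F\,\tau_F/n$ and is strictly decreasing in $n$. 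The second and third terms together are precisely the coupon-collector-like expression analyzed in \autoref{Sec::CouponCollectorAppendix} (with $K\Delta$ replaced by $\Delta_C\,\tau_C$), whose per-example value is increasing for $n \lesssim 1.79\,\tau_C$. I would therefore choose the regime $\tau_F \ll n \ll \tau_C$ and pick $L_F, \Delta_C, \tau_F, \tau_C$ so that the rate of decrease of the format term dominates early (say $n$ just beyond $\tau_F$) and the rate of increase of the capability term dominates later (say $n$ approaching $\tau_C$), producing the claimed non-monotonic shape.

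The main obstacle is not the computation but justifying the decomposition and the functional forms. I would frame the decomposition as an assumption on token-level scoring (matching the normalization discussion in \autoref{sec::Normalizations}) and present the exponential trajectories as illustrative toy dynamics, consistent with the paper's stated convention that the toy models are ``conceptual illustrations rather than rigorous characterizations.'' This lets the proposition be read as an existence claim: there exist parameter regimes in the format+capability decomposition under which $\mathrm{EDL}/n$ is non-monotonic in $n$, with the initial decrease driven by amortization of format-learning transients and the later increase driven by capability acquisition.
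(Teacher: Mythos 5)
Your proposal is correct, and it establishes the non-monotonicity by a genuinely different route than the paper. The paper's appendix argument uses piecewise-\emph{linear} learning curves with two timescales $n_F \ll n_C$ and a three-regime case analysis, concluding that $\mathrm{EDL}/n$ grows linearly while both components are being learned, plateaus in the intermediate regime, and finally decays as $1/n$ once everything saturates (increase $\to$ plateau $\to$ decrease). You instead use exponential decays with separated timescales $\tau_F \ll \tau_C$, correctly reduce the capability contribution to the coupon-collector form $\frac{\Delta_C\tau_C}{n}\bigl[1-(1+n/\tau_C)e^{-n/\tau_C}\bigr]$ already analyzed in the paper's coupon-collector appendix, and obtain an explicit \emph{decrease-then-increase} transient in the window $\tau_F \ll n \ll \tau_C$ from the competition between the amortized format term $L_F\tau_F/n$ and the linearly growing capability term $\Delta_C n/(2\tau_C)$. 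Notably, your shape is the one actually asserted in the main-text version of \autoref{prop:formatlearning} ("may initially decrease \ldots before increasing") and depicted in \autoref{fig:FormatLearningDiagram}, whereas the appendix's own regime analysis only produces the increase-then-decrease profile; in that sense your construction is a better match to the stated claim. Two small things to tighten: (i) the existence of the interior minimum requires the parameter condition $\frac{\Delta_C\tau_F}{2\tau_C} \lesssim L_F \lesssim \frac{\Delta_C\tau_C}{2\tau_F}$ so that $n_{\min} = \sqrt{2L_F\tau_F\tau_C/\Delta_C}$ falls inside $(\tau_F,\tau_C)$ — trivially satisfiable when $\tau_F \ll \tau_C$, but worth stating since the proposition is an existence claim; and (ii) your additive token-level decomposition is the same implicit assumption the paper makes ($L = L_F + L_C$), so flagging it as the operative hypothesis, as you do, is appropriate rather than a weakness.
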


\begin{proof}
    We analyze three regimes.

    \textit{Regime 1: $n < n_F$ (Neither format nor capability learned)}

    During training, both components contribute to loss. Assuming linear learning for simplicity:
    \begin{equation}
        L(\theta_i) \approx L_F^0 (1 - i/n_F) + L_C^0 (1 - i/n_C) \quad \text{for } i < n_F.
    \end{equation}

    Summing losses during the first epoch to obtain MDL:
    \begin{equation}
    \begin{split}
        \text{MDL} &\approx \sum_{i=0}^{n-1} \left[L_F^0 (1 - i/n_F) + L_C^0 (1 - i/n_C)\right] \\
        &= n L_F^0 - \frac{L_F^0 n(n-1)}{2n_F} + n L_C^0 - \frac{L_C^0 n(n-1)}{2n_C}.
    \end{split}
    \end{equation}

    For $n \ll n_F, n_C$:
    \begin{equation}
        \text{MDL} \approx n(L_F^0 + L_C^0) = n L_{\text{total}}^0.
    \end{equation}

    The test loss after $n$ examples is:
    \begin{equation}
        L_{\text{test}} \approx L_F^0(1 - n/n_F) + L_C^0(1 - n/n_C).
    \end{equation}

    The EDL is:
    \begin{equation}
    \begin{split}
        \text{EDL} &\approx n L_{\text{total}}^0 - n \left[L_F^0\left(1 - \frac{n}{n_F}\right) + L_C^0\left(1 - \frac{n}{n_C}\right)\right] \\
        &= \frac{n^2 L_F^0}{n_F} + \frac{n^2 L_C^0}{n_C} \\
        &= n^2 \left( \frac{L_F^0}{n_F} + \frac{L_C^0}{n_C} \right)
    \end{split}
    \end{equation}

    Dividing by $n$ yields the per-example EDL:
    \begin{equation}
        \frac{\text{EDL}}{n} \approx n \left( \frac{L_F^0}{n_F} + \frac{L_C^0}{n_C} \right),
    \end{equation}
    which is an increasing function of $n$. Thus, per-example EDL \textit{increases} with $n$ as both components are being learned.

    \textit{Regime 2: $n_F < n < n_C$ (Format learned, capability learning in progress)}

    Format is fully learned; only capability contributes to ongoing learning.

    Test loss is given by:
    \begin{equation}
        L_{\text{test}} \approx L_C^0 \left(1 - \frac{n}{n_C} \right).
    \end{equation}

    MDL splits into two parts:
    \begin{itemize}
        \item First $n_F$ examples: average loss $\approx L_F^0/2 + L_C^0$, contributing $\approx n_F(L_F^0/2 + L_C^0)$.
        \item Next $(n - n_F)$ examples: format already learned, average loss during capability learning $\approx L_C^0(1 - (n-n_F)/(2n_C))$ for the middle of this interval.
    \end{itemize}

    For simplicity, approximate:
    \begin{equation}
        \text{MDL} \approx n_F \cdot \frac{L_F^0 + 2L_C^0}{2} + (n - n_F) \cdot L_C^0 \cdot \left(1 - \frac{n - n_F}{2n_C}\right).
    \end{equation}

    As $n$ increases through this regime, MDL grows roughly linearly (each new example costs $\approx L_C^0(1 - \text{progress})$), while test loss decreases (capability improves).

    The EDL is:
    \begin{equation}
        \text{EDL} = \text{MDL} - n \cdot L_{\text{test}}.
    \end{equation}

    The behavior of the per-example EDL depends on the relative rates. Typically:
    \begin{itemize}
        \item If capability learning is slow ($n_C \gg n$), test loss decreases slowly, and $\text{EDL}/n$ may remain roughly constant or slightly increase.
        \item As $n \to n_C$, test loss drops significantly, and EDL accumulates the full capability information.
    \end{itemize}

    \textit{Regime 3: $n > n_C$ (Both format and capability learned)}

    Both components are learned fully. Test loss $L_{\text{test}} \approx 0$.

    Accordingly, the MDL saturates, as all information has been provided during the first $n_C$ examples. (Recall that subsequent examples do not provide information after all patterns/algorithms have been fully learned.)
    \begin{equation}
        \text{MDL} \approx n_F \cdot \frac{L_F^0}{2} + n_C \cdot \frac{L_C^0}{2},
    \end{equation}
    where the factor of 1/2 approximates the average loss during learning each component.

    Subtracting the test loss from the MDL:
    \begin{equation}
        \text{EDL} \approx \frac{n_F L_F^0 + n_C L_C^0}{2} - n \cdot 0 = \frac{n_F L_F^0 + n_C L_C^0}{2},
    \end{equation}
    which is constant for $n > n_C$.

    The per-example EDL is:
    \begin{equation}
        \frac{\text{EDL}}{n} \approx \frac{n_F L_F^0 + n_C L_C^0}{2n},
    \end{equation}
    which \textit{decreases} as $1/n$.

    Summary of $\text{EDL}/n$ trajectory:
    \begin{enumerate}
        \item $n < n_F$: Increasing (both components learning).
        \item $n_F < n < n_C$: May plateau or slowly vary (format learning complete, active capability learning ongoing).
        \item $n > n_C$: Decreasing as $1/n$ (both saturated, redundant examples).
    \end{enumerate}

    The specific shape depends on the relative magnitudes of $L_F^0$, $L_C^0$, $n_F$, and $n_C$. If $L_F^0 \ll L_C^0$ and $n_F \ll n_C$, the format transient is a brief initial period, and the main EDL accumulation occurs during capability learning.
\end{proof}

\begin{figure}
    \centering
        \includegraphics[width=1.0\columnwidth]{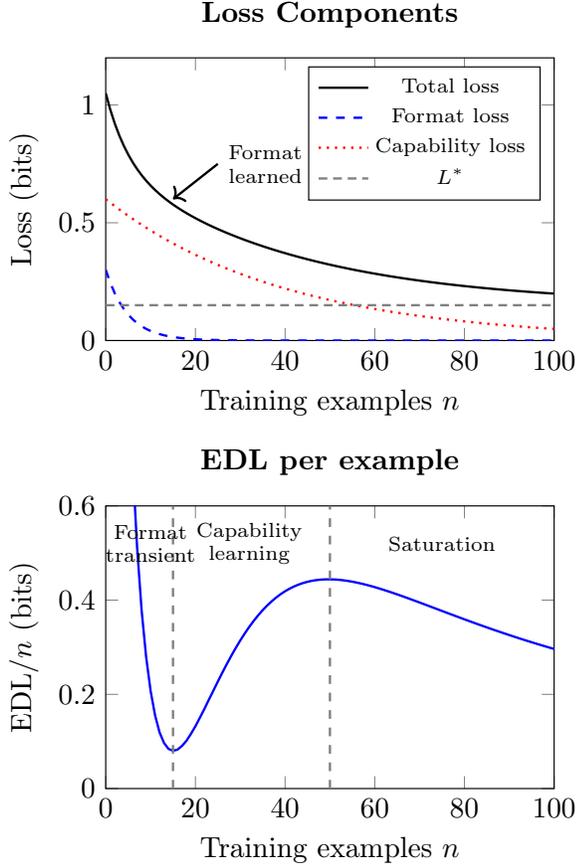}
    \caption{\textbf{Format learning can induce transients in EDL scaling and dynamical behavior.} In teaching scenarios, format learning can cause a transient drop in the per-example EDL ($\text{EDL}/n$), as format comprises generalizable task structure that can be easily learned with low sample complexity. This exploitation of superficial regularity in the data can dominate the early scaling dynamics of $\text{EDL}/n$ when the number of training examples $n$ is small and significant task-relevant capability has not yet been acquired. Before format-invariant task features have been learned, additional examples beyond the minimum required to align model outputs to the task specifications do not marginally improve predictions. Once sufficient examples have been trained on, capability begins to develop, and $\text{EDL}/n$ increases until task performance or parameter capacity are saturated. Once saturation is reached, the model absorbs generalizable information at a decreasing rate, either because additional examples provide little predictive information (capability saturation) or because the model cannot absorb additional information about remaining structure that exists in the data (capacity saturation).}
    \label{fig:FormatLearningDiagramAppendix}
\end{figure}

\textit{Practical Implications.} When computing EDL on tasks with both format and capability components, scoring only the capability-relevant tokens (e.g., answer tokens, not formatting tokens) can help isolate the capability learning signal and avoid artifacts from the format transient. Alternatively, teaching the task format prior to or separately from the task of interest (e.g., few-shot prompting, providing clear instructions/examples, instruction tuning) can mitigate significant contributions to EDL from format learning.

\section{Additional Technical Results}
\label{Sec::AdditionalResultsAppendix}

\subsection{Variance of EDL}
\label{Sec::EDLVarianceAppendix}

The EDL computed from a single dataset is a random variable. Understanding its variance is important for interpreting measurements.

\begin{proposition}
    Under mild regularity conditions, the variance of EDL scales as:
    \begin{equation}
        \text{Var}[\mathrm{EDL}] = O(n).
    \end{equation}
\end{proposition}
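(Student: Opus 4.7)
The plan is to decompose EDL into three pieces with independently controllable variances and argue each is $O(n)$. Write
\begin{equation}
\mathrm{EDL} = \underbrace{\sum_{i=1}^n \bigl[\ell(\theta_{i-1};x_i,y_i) - L(\theta_{i-1})\bigr]}_{M_n}
 + \underbrace{\sum_{i=1}^n \bigl[L(\theta_{i-1}) - L(\theta^*)\bigr]}_{T_n}
 + \underbrace{n\bigl[L(\theta^*) - L_{\mathrm{test}}(\theta^*)\bigr]}_{E_n},
\end{equation}
where $L$ denotes population loss. Then bound $\mathrm{Var}[\mathrm{EDL}]$ by $3(\mathrm{Var}[M_n]+\mathrm{Var}[T_n]+\mathrm{Var}[E_n])$ using the elementary inequality $\mathrm{Var}[A+B+C]\le 3(\mathrm{Var}[A]+\mathrm{Var}[B]+\mathrm{Var}[C])$, reducing the problem to bounding each term separately.

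For $M_n$, I would invoke the martingale structure already used in the proof of \autoref{thm:nonneg}. Let $\mathcal{F}_{i-1}=\sigma(\theta_0,x_1,y_1,\ldots,x_{i-1},y_{i-1})$; then $\xi_i := \ell(\theta_{i-1};x_i,y_i)-L(\theta_{i-1})$ satisfies $\mathbb{E}[\xi_i\mid\mathcal{F}_{i-1}]=0$, so $\{M_k\}$ is a martingale with orthogonal increments. Under a bounded-moment regularity assumption $\mathrm{Var}[\ell(\theta;X,Y)]\le\sigma^2$ uniformly over $\theta$ reachable by $A$, we get $\mathrm{Var}[M_n]=\sum_{i=1}^n \mathbb{E}[\xi_i^2]\le n\sigma^2$. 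For $E_n$, condition on $\theta^*$: when the test loss is estimated on an independent sample of size $m$, the conditional variance is $n^2\cdot\mathrm{Var}[\ell(\theta^*;X,Y)\mid\theta^*]/m \le n^2\sigma^2/m$, which is $O(n)$ provided $m=\Theta(n)$ (the standard setup). Independence of $D$ and $D_{\mathrm{test}}$ makes the cross-covariance between $E_n$ and the other terms vanish after conditioning, so $E_n$ contributes $O(n)$ via the law of total variance.

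The main obstacle is controlling $\mathrm{Var}[T_n]=\mathrm{Var}\bigl[\sum_i (L(\theta_{i-1})-L(\theta^*))\bigr]$, which depends on the stochastic learning trajectory rather than fresh i.i.d.\ noise. Naively, each of the $n$ summands could be correlated across $i$ because all $\theta_i$ share the same data history. The cleanest route is a bounded-differences (McDiarmid-type) argument: assume algorithmic stability in the sense that swapping a single training example $(x_j,y_j)$ changes $L(\theta_i)$ by at most $c_{i,j}$ with $\sum_{j=1}^n c_{i,j}=O(1)$ uniformly in $i$ (this holds, e.g., for SGD with decaying or bounded step sizes on Lipschitz losses, and is the standard form of uniform algorithmic stability). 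Summing over $i$ yields a bounded-differences constant of order $O(1)$ per swapped example, giving $\mathrm{Var}[T_n]=O(n)$ via the Efron–Stein inequality.

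Combining the three bounds yields $\mathrm{Var}[\mathrm{EDL}]=O(n)$. I expect the stability assumption for $T_n$ to be the delicate step: the ``mild regularity conditions'' in the statement should be read as (i) uniformly bounded per-example loss variance along the trajectory and (ii) uniform algorithmic stability of $A$, with the test-set sizing $m=\Theta(n)$ making the estimation term benign. Without stability, $T_n$ could in principle accumulate correlated fluctuations across steps, and the $O(n)$ rate would not be guaranteed.
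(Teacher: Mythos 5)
Your proposal is correct in substance and is considerably more careful than the paper's own argument, which is only a sketch: the paper treats $\mathrm{MDL}$ as a sum of $n$ bounded-variance loss terms and asserts $\mathrm{Var}[\mathrm{MDL}]=O(n)$ ``by standard results on sums of random variables (accounting for weak dependence through the evolving parameters),'' then adds the test-estimator term $\mathrm{Var}[n\cdot L_{\mathrm{test}}]=O(n^2/n_{\mathrm{test}})$ and concludes under $n_{\mathrm{test}}=\Omega(n)$. Your three-way split $M_n+T_n+E_n$ makes precise exactly what that parenthetical is hiding: the centered sum $M_n$ is a genuine martingale with orthogonal increments and needs only a uniform second-moment bound, the estimation term $E_n$ is handled identically to the paper (both require $n_{\mathrm{test}}=\Theta(n)$, an assumption the paper also needs but folds into ``mild regularity''), and all of the nontrivial dependence is isolated in the trajectory term $T_n$, which you control via uniform algorithmic stability plus Efron--Stein. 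This is the right way to turn the paper's sketch into a proof, and your closing observation---that without stability $T_n$ could accumulate correlated fluctuations and break the $O(n)$ rate---identifies a real gap in the paper's reasoning rather than a defect in yours. One small indexing slip: for the bounded-differences argument on $T_n$ you need the sensitivity summed over \emph{time steps} for a fixed swapped example, i.e.\ $\sum_{i} c_{i,j}=O(1)$ uniformly in $j$ (equivalently $c_{i,j}=O(1/n)$ uniformly, the standard uniform-stability form), whereas you wrote $\sum_{j} c_{i,j}=O(1)$ uniformly in $i$; the latter does not by itself prevent the sensitivity to a single example from concentrating and giving a per-coordinate difference of order $n$. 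Under the standard uniform-stability condition you also cite, the argument goes through as written.
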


\begin{proof}[Proof sketch.]
    The MDL is a sum of $n$ loss terms, each with bounded variance. By standard results on sums of random variables (accounting for weak dependence through the evolving parameters), $\text{Var}[\text{MDL}] = O(n)$.

    The test loss estimator has variance $O(1/n_{\text{test}})$ where $n_{\text{test}}$ is the test set size, so $\text{Var}[n \cdot L_{\text{test}}] = O(n^2 / n_{\text{test}})$.

    For $n_{\text{test}} = \Omega(n)$, the dominant contribution is from MDL, giving $\text{Var}[\text{EDL}] = O(n)$.

    The standard deviation is $O(\sqrt{n})$, so relative uncertainty in $\text{EDL}/n$ is $O(1/\sqrt{n})$, vanishing for large $n$.
\end{proof}

\subsection{Effect of Example Ordering}
\label{Sec::ExampleOrderingAppendix}

\begin{proposition}
    For i.i.d. data, the expected MDL is independent of example ordering.
\end{proposition}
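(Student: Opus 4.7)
The plan is to reduce the claim to exchangeability of i.i.d.\ sequences. First, I would express the prequential MDL as a deterministic functional of the ordered sequence of training examples: define
\begin{equation}
F(z_1, \ldots, z_n) = \sum_{i=1}^n \ell\bigl(\Theta_{i-1}(z_1, \ldots, z_{i-1});\, z_i\bigr),
\end{equation}
where $\Theta_0 = \theta_0$ and $\Theta_i(z_1, \ldots, z_i) = A(\Theta_{i-1}(z_1, \ldots, z_{i-1}), z_i)$, with $z_i = (x_i, y_i)$. Then $\mathrm{MDL}(D; \theta_0, A) = F(z_1, \ldots, z_n)$, and for any permutation $\sigma$ of $\{1, \ldots, n\}$, the MDL computed under ordering $\sigma$ is $\mathrm{MDL}_\sigma = F(z_{\sigma(1)}, \ldots, z_{\sigma(n)})$.

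Next, I would invoke exchangeability: since the $z_i$ are i.i.d.\ from $\mathcal{D}$, the joint distribution of $(z_{\sigma(1)}, \ldots, z_{\sigma(n)})$ equals that of $(z_1, \ldots, z_n)$ for every permutation $\sigma$. Because $F$ is a fixed (measurable) function of its arguments, applying the same function to two sequences with identical joint distributions yields random variables with identical distributions, and in particular identical expectations:
\begin{equation}
\mathbb{E}[\mathrm{MDL}_\sigma] = \mathbb{E}[F(z_{\sigma(1)}, \ldots, z_{\sigma(n)})] = \mathbb{E}[F(z_1, \ldots, z_n)] = \mathbb{E}[\mathrm{MDL}].
\end{equation}
Averaging over a uniform distribution on permutations would give the same conclusion.

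The only subtlety, and the step I would flag as the main point requiring care rather than a genuine obstacle, is the treatment of algorithmic randomness: if $A$ is stochastic (e.g.\ random mini-batch sampling, dropout), one must either condition on the algorithm's internal randomness or absorb it into an extended probability space that is independent of the data. In the extended-space view, $F$ becomes a deterministic function of the concatenation of the data sequence and an independent algorithmic seed, and exchangeability of the data component alone suffices, since the seed is shared across orderings. Note also that the conclusion concerns the \emph{expected} MDL; for a fixed realization of $D$, different orderings yield different realized MDL values, which is consistent with the remark in \autoref{sec::PrequentialMDL} that ordering introduces variance but not bias.
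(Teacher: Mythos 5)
Your proposal is correct and follows essentially the same route as the paper's own proof: write the permuted MDL as the same (deterministic) functional applied to the permuted sequence, invoke exchangeability of the i.i.d.\ data to conclude the two have identical distributions, and hence identical expectations. Your additional remarks on formalizing the functional $F$ and handling algorithmic randomness are careful refinements of, not departures from, that argument.
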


\begin{proof}
    Let $\sigma$ be any permutation of $\{1, \ldots, n\}$. Under the permuted ordering, MDL becomes:
    \begin{equation}
        \text{MDL}_\sigma = \sum_{i=1}^n \ell(\theta_{i-1}^\sigma; x_{\sigma(i)}, y_{\sigma(i)}),
    \end{equation}
    where $\theta_i^\sigma$ denotes the parameters after training on $(x_{\sigma(1)}, y_{\sigma(1)}), \ldots, (x_{\sigma(i)}, y_{\sigma(i)})$.
    
    For i.i.d. data, the joint distribution of $(x_{\sigma(1)}, y_{\sigma(1)}), \ldots, (x_{\sigma(n)}, y_{\sigma(n)})$ is the same as that of $(x_1, y_1), \ldots, (x_n, y_n)$.
    
    Therefore, the distribution of $\text{MDL}_\sigma$ is the same as that of $\text{MDL}$, and in particular:
    \begin{equation}
        \mathbb{E}[\text{MDL}_\sigma] = \mathbb{E}[\text{MDL}].
    \end{equation}
\end{proof}

\subsection{Continuous-Time Approximation}
\label{Sec::ContinuousApproximationAppendix}

For theoretical analysis, it is sometimes convenient to approximate the discrete training trajectory with a continuous flow.

\begin{definition}
    The continuous-time MDL is
    \begin{equation}
        \text{MDL}_{\text{cont}}(T) = \int_0^T L(\theta(t)) \, dt,
    \end{equation}
where $\theta(t)$ follows a gradient flow $\dot{\theta} = -\nabla L(\theta)$ and $T$ corresponds to seeing $n$ examples.
\end{definition}

\begin{proposition}
    Under appropriate scaling, $\text{MDL}_{\text{cont}}(T)$ approximates $\text{MDL}$ as the learning rate $\eta \to 0$ and $n \to \infty$ with $n\eta = T$ fixed.
\end{proposition}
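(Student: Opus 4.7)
The plan is to establish this continuous-time approximation via a standard stochastic-approximation coupling between the SGD iterates and the gradient-flow ODE, followed by a Riemann-sum argument for the cumulative loss. The natural scaling is $\eta \cdot \mathbb{E}[\mathrm{MDL}] \to \mathrm{MDL}_{\mathrm{cont}}(T)$ as $\eta \to 0$ with $n\eta = T$ fixed: the discrete sum of $n = T/\eta$ per-step losses becomes a time integral once weighted by the step size $\eta$, which is the only dimensionally consistent sense in which a growing-$n$ sum can approximate a finite-$T$ integral.

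First, I would set up the coupling. Let $\theta_i^{(\eta)}$ denote the SGD iterates with step size $\eta$ on i.i.d. samples $(x_i, y_i) \sim \mathcal{D}$, and define the piecewise-constant interpolation $\bar\theta_\eta(t) = \theta_{\lfloor t/\eta \rfloor}^{(\eta)}$ on $[0,T]$. Let $\theta(t)$ solve $\dot{\theta} = -\nabla L(\theta)$ with $\theta(0) = \theta_0$. Writing each SGD step as $\theta_i^{(\eta)} = \theta_{i-1}^{(\eta)} - \eta \nabla L(\theta_{i-1}^{(\eta)}) + \eta \xi_i$, where $\xi_i = \nabla L(\theta_{i-1}^{(\eta)}) - \nabla \ell(\theta_{i-1}^{(\eta)}; x_i, y_i)$ is a mean-zero martingale difference by the i.i.d. assumption, exhibits the iterates as an Euler discretization of the flow perturbed by martingale noise of variance $O(\eta^2)$ per step.

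Second, under Lipschitz $\nabla L$ and a uniform second-moment bound $\mathbb{E}\|\xi_i\|^2 \le \sigma^2$, a standard Gr\"onwall argument yields $\sup_{t \in [0,T]} \mathbb{E}\|\bar\theta_\eta(t) - \theta(t)\|^2 = O(\eta)$. Continuity of $L$ on a neighborhood containing the flow then gives $\mathbb{E}|L(\bar\theta_\eta(t)) - L(\theta(t))| \to 0$ uniformly for $t \in [0,T]$. Finally, the i.i.d. assumption together with the tower property gives the Riemann-sum identity
\begin{equation}
\eta \cdot \mathbb{E}[\mathrm{MDL}] \;=\; \eta \sum_{i=1}^n \mathbb{E}[L(\theta_{i-1}^{(\eta)})] \;=\; \int_0^T \mathbb{E}[L(\bar\theta_\eta(t))]\,dt,
\end{equation}
and combining this with the previous step yields $\eta \cdot \mathbb{E}[\mathrm{MDL}] \to \int_0^T L(\theta(t))\,dt = \mathrm{MDL}_{\mathrm{cont}}(T)$, which is the scaled approximation claimed.

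The main obstacle is that neural-network losses are not globally Lipschitz-smooth and the iterates may, in principle, escape any a priori compact set; moreover the gradient noise can be heavy-tailed and state-dependent. I would sidestep this by stating the result under standard Robbins--Monro-type hypotheses (Lipschitz $\nabla L$, uniformly bounded $\mathbb{E}\|\xi_i\|^2$, and either dissipativity or a stopping-time truncation keeping iterates in a compact set with high probability), and flag, consistent with the caveat the authors already issue for Theorem~\ref{thm:ConvToSDL}, that sharper quantitative rates for overparameterized networks require additional implicit-regularization assumptions beyond the scope of this qualitative approximation.
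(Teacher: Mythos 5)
The paper states this proposition without any proof, so there is nothing to compare against; your argument supplies the missing content and is essentially correct. Two things you do are genuinely valuable: (i) you pin down what ``appropriate scaling'' must mean --- the raw sum $\sum_{i=1}^n \ell(\theta_{i-1};x_i,y_i)$ diverges while $\int_0^T L(\theta(t))\,dt$ is finite, so the only coherent statement is $\eta\cdot\mathrm{MDL}\to\mathrm{MDL}_{\mathrm{cont}}(T)$, which the paper leaves implicit; and (ii) you make explicit that the proposition only makes sense if $A$ is one-pass SGD with constant step $\eta$, since the abstract algorithm $A$ of \autoref{sec::Notation} carries no learning rate. The chain itself --- martingale-difference decomposition of the stochastic gradient, Gr\"onwall coupling of the piecewise-constant interpolation to the gradient flow with $\sup_t \mathbb{E}\|\bar\theta_\eta(t)-\theta(t)\|^2 = O(\eta)$, and the exact Riemann-sum identity $\eta\sum_{i=1}^n \mathbb{E}[L(\theta_{i-1})] = \int_0^T \mathbb{E}[L(\bar\theta_\eta(t))]\,dt$ via the tower property --- is the standard ODE method of stochastic approximation, correctly deployed, and your hypotheses (Lipschitz $\nabla L$, uniformly bounded noise variance, confinement to a compact set) are the right ones to flag.

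One small gap worth closing: your argument establishes convergence of $\eta\cdot\mathbb{E}[\mathrm{MDL}]$, but $\mathrm{MDL}$ as defined in \autoref{def::PrequentialMDL} is a realized random sum, and the more natural reading of ``approximates'' is convergence of $\eta\cdot\mathrm{MDL}$ itself (in probability, say). This follows with one additional step: the fluctuation $\eta\sum_{i=1}^n\bigl[\ell(\theta_{i-1};x_i,y_i)-L(\theta_{i-1})\bigr]$ is $\eta$ times a martingale with $n=T/\eta$ increments, so under a bounded conditional variance assumption its variance is $O(\eta^2\cdot n)=O(\eta T)\to 0$, and Chebyshev (or Azuma under bounded increments) finishes the job. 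You should also note that requiring $L$ (not just $\nabla L$) to be Lipschitz on the relevant compact set is what converts the $O(\sqrt{\eta})$ parameter-space error into an $o(1)$ loss-space error; this is implicit in your ``continuity of $L$'' step but deserves the quantitative statement since you are claiming a rate for the coupling.
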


This continuous approximation enables analysis using ODE techniques and provides insights into the geometry of learning trajectories.

\section{Additional Information-Theoretic Details and Explanations}
\label{sec::InformationTheoryAppendix}

\subsection{Learning as Compression: A Detailed Picture}

\paragraph{Notation.}
Let \(D = \{(x_i,y_i)\}\) be a dataset sampled i.i.d. from distribution $\mathcal{D}$ over $\mathcal{X} \times \mathcal{Y}$, where $x_i$ are the inputs to the model (\textit{i.e.}, examples from the dataset) and $y_i$ are the corresponding data labels for some language modeling task. Let us imagine that Alice (who has all \((x_i,y_i)\) pairs in $D$) wants to communicate a model $M^*$ (fine-tuned from a base model $M_0$ of model class $\mathcal{M}$ on all data pairs \((x_i,y_i) \in D\) to a final performance level $L^*$) to Bob, who only has the inputs $x_i$ in $D$. 

\paragraph{Setting.}
Instead of sending Bob the model weights themselves, which may be very difficult for large models, Alice can alternatively send the labels of the dataset she used to train her model $M^*$, such that Bob may train an identical model $M^*$ with the same performance $L^*$ using the same base model $M$ that she used. Alice's task is then to optimally encode the full set of labels $y_{i=1,...,n}$ such that she uses the minimum number of bits necessary to transmit them to Bob. This codelength, the minimum description length, corresponds to the minimum amount of information needed for Bob to train a copy of the same (initial) base model $M_0$ to an equivalent performance level.

To make Bob's fine-tuning process more efficient, Alice and Bob both agree to use identical copies of $M_0$ (the base model Alice used for training $M^*$) as the algorithm used to compress, encode, and decode the data labels. They agree on a choice of learning algorithm $A$ to use, including any variables (such as hyperparameters, seeds, and optimizer) that affect the training dynamics, which enables them to train identical models given the same training data. 

\paragraph{Prequential MDL and online coding.}
Using the same model as Bob, Alice encodes and sends the labels one (batch) at a time, with the cross-entropy between the current model's prediction and the true label for each example (batch) determining the minimum number of bits Alice must use to transmit it. After encoding each label (batch of labels) using the model, Alice sends it to Bob, who then uses his identical copy of the current model to decode the correct label (batch). Once Bob has decoded the label(s), he and Alice then each train their respective copy of the model on the label(s) Bob received according to $A$, such that their models remain identical at each step.

After training on each new label (batch), their models successively improve at predicting subsequent labels, such that each new label Alice encodes has a smaller cross-entropy (log-loss) with the prediction from the current, slightly better model. Accordingly, the number of bits required for Alice to transmit the next label decreases as she and Bob simultaneously train (their own exact copies of) a model that better predicts the data. After all labels have been sent, the total (minimal) number of bits needed to transmit them is the minimum description length, given by the sum of the cross-entropy losses for all of the individual labels in the dataset, or equivalently, the area under the training curve for the first epoch. 

Once Bob receives the full set of labels, he can continue to train his copy of the model for additional epochs on those same labels to further improve its performance (following the directions of Alice to train an identical copy of $M^*$). Because Alice no longer needs to transmit anything additional for Bob to continue training his model (we assume the set of instructions for how to train $M^*$ is known in advance and included in $A$), the total information required for her to communicate her fine-tuned model is merely the number of bits needed to transmit the necessary labels for Bob to be able to train an identical copy of it.

\paragraph{Excess description length and bits absorbed during training.}
Once Bob trains his model on all data to convergence (as judged by performance on some validation metric), its cross-entropy loss when evaluated on the test set\footnote{We assume that the test set is sampled i.i.d. from and accurately reflects performance on the true data distribution the model is trained to predict.} reflects the average number of bits this best, final model $M^*$ requires to encode labels for examples drawn from the same distribution. This is the remaining information in the data that that the model cannot compress further by training for additional epochs on the current train set (i.e., the minimum bits still required to communicate the correct labels even knowing the correct model parameters) reflecting the residual codelength of the data once the model parameters are known.

Because the minimum description length represents the smallest amount of information necessary to represent both the model \textit{and} the data, the difference between the cross-entropy of the converged, best model $M^*$ trained on all data and the online codelength (the cumulative train set loss during the model's first pass over the data/area under the first-epoch train set loss curve) corresponds to the cost of the model. This ``excess description length'' beyond what is necessary to reproduce the correct labels using the converged model is the information in the data that the model \textit{does} fully compress during training, reducing its loss from its zero-shot value $L_0$ to $L^*$ as a result of fine-tuning. Alternatively stated, the excess description length is the total \textit{generalizable} information in the data that is absorbed by the model during training---the information in the data which contributes to better performance on the test set and which the model's parameters alone are sufficient to reproduce/represent---given by the difference between the prequential MDL and the remaining cross-entropy of the best model trained on that data. 

Minimum description length quantifies how much information is supplied to a model, as measured by the model itself, by employing the model's intrinsic ability to generate its own most compact, optimal description of that information. MDL grows linearly with the number of examples in the dataset, with asymptotic behavior determined by the irreducible error in the dataset which can't be compressed; as more information is supplied to the model, the model's description length of the data correspondingly increases to account for the additional information, with a lower bound on the minimum additional information per example set by the irreducible error. In practice, this is approximated by the test loss of the best trained model---a loss ``floor'' that the model converges to in the asymptotic data limit, which describes the bits of the fine-tuning dataset (i.e., general information about the task) that the model cannot absorb. It is the best achievable loss on the true distribution given the training data (for the specific hypothesis class).

However, the amount of information that a model can store in its parameters is bounded, with a strict (trivial) upper limit on the number of bits that can be stored in any single parameter enforced by its numerical precision. While MDL can exceed the capacity of the trainable parameters, as it merely quantifies the amount of information required to transmit the labels, EDL describes the amount of generalizable information absorbed by the parameters, which provides a tighter upper bound on the amount of predictive information that can be extracted from the train data during learning by the model. For suitable choices of learning algorithms, the compression achieved through learning can be near-optimal, such that EDL can provide substantially better bounds on new information obtained and stored during fine-tuning.

\subsection{Additional benefits to EDL}

The Minimum Description Length principle \citep{rissanen1978modeling} states that the best model is the one that provides the shortest description of both itself and the data it is used to model. Though MDL has the advantage that it can be used to determine these most parsimonious descriptions, it does not report on a model's ultimate performance following training. 

Because prequential minimum description length is computed through online training, it is susceptible to early training dynamics, such as slow initial learning from a low learning rate, that result in an overestimation of the minimum codelength. It can be difficult to distinguish from the MDL alone whether its value reflects a model's genuine ability to efficiently compress the data or simply a suboptimal choice of hyperparameters. 

While validation accuracy (or some other final performance metric) can describe a model's final performance, it does not describe the amount of information required to achieve it. Complicated models that overfit the data perform well on validation accuracy/loss, but their complexity indicates that they themselves are difficult to model, meaning that the information necessary to model the data has merely shifted to the model parameters.

Surplus description length determines the amount of information necessary to train an $\varepsilon-$loss predictor, but it requires samples to be drawn i.i.d. from an (effectively) infinite distribution such that the model converges below the $\varepsilon-$loss threshold at a sample complexity far below the total number of examples in the distribution. This means that it cannot be applied to many finite data settings. Though it is less sensitive to hyperparameters and early training dynamics than prequential MDL, for models which converge quickly to the loss threshold or which fail to learn substantially at the beginning of training, early training dynamics can significantly impact the measured SDL.

Excess description length measures the generalizable information in the train dataset that a model extracts during training, corresponding to learned structure or regularity in the data that predicts the overall distribution those data were sampled from. Using the test loss as proxy for the model's generalization error over the distribution, EDL is computable for any dataset of any size. In this way, EDL also provides an implicit measure of sample complexity: for a given test performance, lower EDL indicates fewer unique examples were required.

In realistic training settings, it is common to train for multiple epochs on the same dataset, as generalization often improves with subsequent passes over the data. EDL describes how well a model can extract learnable structure from repeated exposure to the same data: increases in EDL in later epochs reflect additional predictive information extracted from the train data.

An additional advantage of EDL is that it is compute-bound-agnostic: EDL quantifies the information cost of generalization for any training procedure under any computational bound or stopping condition. While EDL can be computed at any point(s) in training, it is especially powerful for estimating the maximum amount of generalizable information a model can extract from a finite dataset given unbounded compute. For a finite dataset, maximum EDL can be obtained by training to best achievable performance on the validation set (with early stopping prior to overfitting) and evaluating test loss on the best model checkpoint.

This use of EDL is particularly relevant for many practical settings where dataset size is often limited, such as fine-tuning, reinforcement learning, or other post-training methods. Maximum EDL estimates the maximum amount of learnable structure that a model can extract from the train data, providing insight into how far a finite amount of data can be stretched. Taken in consideration with the learning regime, this can be used to estimate the minimum information required to achieve a desired test performance. This is important for safety cases: given equal compute and final model performance, there is a meaningful difference between needing to train on a small vs. a large dataset, as lower information barriers indicate more easily elicited capabilities.

\begin{figure*}
    \centering
        \includegraphics[width=1.0\textwidth]{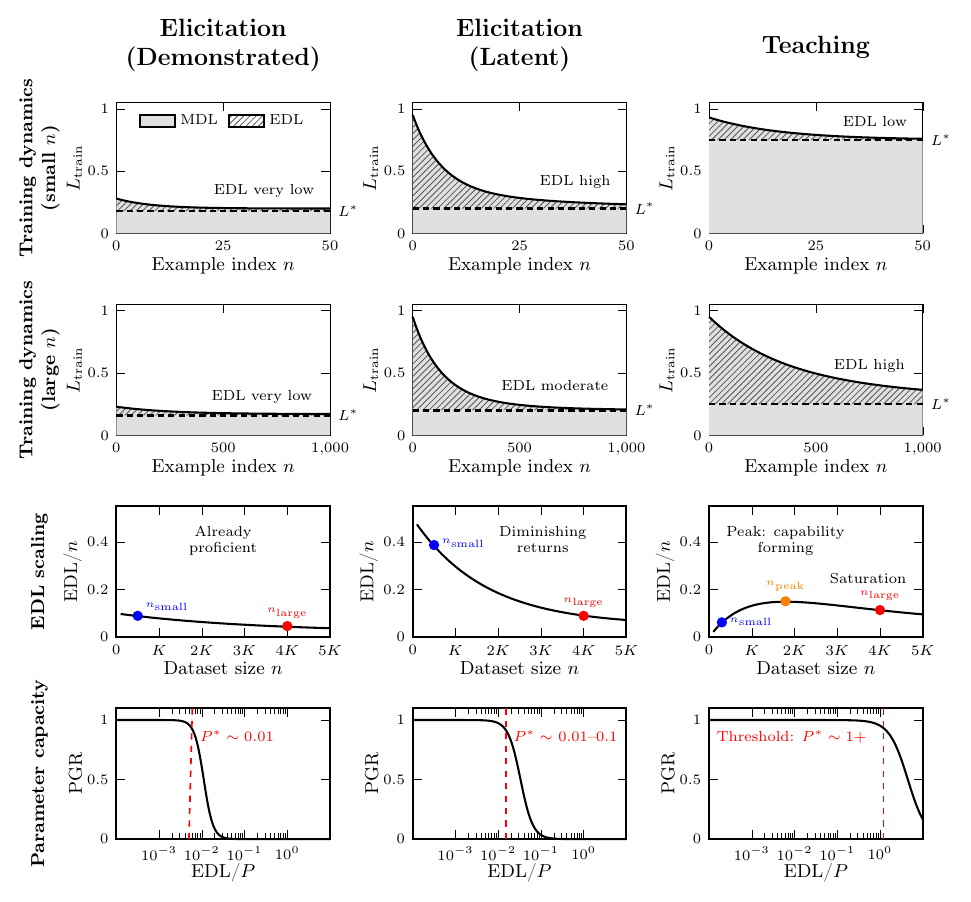}
    \caption{Conceptual illustration of eliciting latent capabilities (left column), teaching new capabilities (center column), and eliciting demonstrated capabilities (right column) in small and large data regimes. (Top row: small $n$, second row: large $n$) The minimum description length (MDL) is the total area under the training curve $L_\text{train}$ (all gray shaded areas), representing the number of bits provided to the model (through the labels of the fine-tuning dataset), as quantified by the model itself. The final converged test loss (after training for as many epochs as necessary) $L^*$ represents the model's asymptotic error on the true data distribution when trained on $n$ unique examples. The excess description length (EDL) is the area between the training curve $L_\text{train}$ and the generalization error $L^*$, which represents the generalizable information that is absorbed by the model's parameters during training, corresponding to the reduction in its prediction error on the underlying data distribution. (Third row) Scaling behavior of EDL per example ($\text{EDL}/n$ as dataset size $n$ increases, where dataset size is normalized by the total number of constituent concepts that must be known for task mastery $K$. Models that already know the necessary task components but improve their ability to employ that knowledge (elicitation) absorb less generalizable information with additional examples, whereas models that must learn the requisite concepts from scratch experience increasing returns when developing fundamental task capability. (Bottom row) The fraction of performance gap recovered (PGR) versus EDL per parameter reveals different parameter capacity limits for elicitation and teaching: models which require less information to achieve maximal task performance (elicitation) saturate parameter capacity at lower information thresholds than models which require more information to learn a task.}
    \label{fig:EDLDiagramAppendix}
\end{figure*}
\end{document}